\crefname{assumption}{assumption}{assumptions}
\newtheorem{assumption}{Assumption} 
\newtheorem{setting}{Setting} 
\newtheorem{definition}{Definition} 
\newtheorem{proposition}{Proposition} 
\newtheorem{corollary}{Corollary} 
\newtheorem{remark}{Remark} 
\DeclareMathOperator{\E}{E}
\newcommand{\lr}[1]{\left[ #1 \right]}
\newcommand{\tmu}{\tilde\mu}
\newcommand{\obs}{\text{obs}}
\newcommand{\rct}{\text{rct}}
\newcommand{\mcf}{f}
\newcommand{\mcclass}{\mathcal{F}}
\newcommand{\abs}[1]{\lvert#1\rvert}
\title{Multi-Accurate CATE is Robust to Unknown Covariate Shifts}
\author{\name Christoph Kern\thanks{Authors listed in alphabetical order.} \email christoph.kern@stat.uni-muenchen.de \\
    \addr Department of Statistics \\
    \addr Ludwig-Maximilians-University of Munich\\
    \addr Munich Center for Machine Learning (MCML)
      \AND
      \name Michael Kim$^*$ \email mpk@cs.cornell.edu \\
      \addr Department of Computer Science \\
      \addr Cornell University 
      \AND
      \name Angela Zhou$^*$ \email zhoua@usc.edu\\
      \addr Department of Data Sciences and Operations\\
      \addr University of Southern California
      }
\begin{document}
\crefname{assumption}{assumption}{assumptions}
\crefname{setting}{setting}{settings}

\def\month{10}  
\def\year{2024} 
\def\openreview{\url{https://openreview.net/forum?id=VOGlTb27ob}} 


\maketitle

\begin{abstract}
    Estimating heterogeneous treatment effects is important to tailor treatments to those individuals who would most likely benefit. However, conditional average treatment effect predictors may often be trained on one population but possibly deployed on different, possibly unknown populations. We use methodology for learning multi-accurate predictors to post-process CATE T-learners (differenced regressions) to become robust to unknown covariate shifts at the time of deployment. The method works in general for pseudo-outcome regression, such as the DR-learner. We show how this approach can combine (large) confounded observational and (smaller) randomized datasets by learning a confounded predictor from the observational dataset, and auditing for multi-accuracy on the randomized controlled trial. We show improvements in bias and mean squared error in simulations with increasingly larger covariate shift, and on a semi-synthetic case study of a parallel large observational study and smaller randomized controlled experiment. Overall, we establish a connection between methods developed for multi-distribution learning and achieve appealing desiderata (e.g. external validity) in causal inference and machine learning. 
\end{abstract}

\section{Introduction} 

Causal inference studies how to make the right decision, at the right time, for the right person. Extensive recent literature on \textit{heterogeneous treatment effects}, also called conditional average treatment effects (CATE), studies the estimation of personalized causal effects, rather than only population-level average treatment effects. Estimating CATE can inform better triage of resources to those who most benefit in healthcare, social services, e-commerce, and many other domains. 

In these consequential domains, many firms/decision-makers face treatment decisions where other firms also need to make the same decision, although perhaps each with slightly different data distributions. For example, problems of clinical risk prediction, such as risk of a heart disease or medication treatment guidelines, are shared widely across hospitals, but each has its own distribution of patients in addition to idiosyncratic reporting, testing, and treatment patterns that can hinder external validity \citep{caruana2015intelligible}. Indeed, off-the-shelf, relatively simple clinical risk calculators developed on one population are often broadly deployed as a decision support tool in many locations, without the ability to share the originating individual-level data, or with data drift over time. In social settings, the Arnold Public Safety Assessment (PSA), trained on a proprietary dataset and used in hundreds of jurisdictions \citep{goel2021accuracy}, is an example of a widely deployed tool. Its accompanying decision-making matrix is another example of a treatment recommendation rule made more widely available \citep{psa-dmf}, which can introduce disparities and poor treatment efficacy \citep{zhou2024optimal}. Other examples include the design of algorithmic profiling in active labor market programs \citep{crepon2016active,bach2023profiling,kortner2023predictive}: many different jurisdictions run different active labor market programs, and policymakers face questions about how to learn from what works elsewhere and how to scale up programs across heterogeneous locations.

A key challenge in these settings is to certify valid predictive performance of personalized causal effects for unknown deployment settings, {each of which could have a different \textit{target} covariate distribution}. For example, predictive risk calculators, such as those for chronic heart disease, learned on a specific population might induce biased estimation for different locales with different populations. As one example, the widely used Framingham risk score overestimates risk for Asian populations \citep{badawy2022evaluation}. This problem is not limited to earlier risk scores, but also affects modern ones: a sepsis predictive risk score provided by Epic, a major healthcare IT provider, fell short in a study of external validity on another population \citep{habib2021epic}.

External validity, generalizability, and transportability are also important questions for causal inference \citep{tipton2014generalizable,tipton2023generalizability,bareinboim2013general}. Heterogeneous causal effect estimates might also be similarly learned on one population, but made more widely available, hence vulnerable to unknown covariate shifts. \cite{spini2021robustness} studies the potential impacts of shifts in population for generalizing results from the Oregon Health Insurance Experiment, while \cite{shyr2024multistudy} studies potential shifts in effect heterogeneity across multiple cancer studies.  

On the other hand, we do want to leverage predictive information when it is available. How can we develop methods for heterogeneous treatment effect estimation so that a new hospital, without its own large database or in-house machine learning team, is still assured guarantees of low predictive bias on its own population, that might differ in unknown ways from a proprietary risk score that does not publish the original data? {Importantly, while some methods guarantee validity of prediction models for a \textit{known} target distribution, our previous examples highlight important cases when we want to guarantee low prediction bias under \textit{many unknown target distributions.}}

In this paper, we show how methods from multi-accurate learning \citep{hebert2018multicalibration,kim2019multiaccuracy} can endow conditional average treatment effect estimation with robustness to unknown covariate shifts. Indeed, the problem of confounding itself is a covariate shift problem, from the treated or control population to some target population \citep{johansson2022generalization}. Multi-accurate learning is a powerful and flexible framework that, by ensuring low predictive bias over a test function class, is also robust to combinations of these covariate shifts: those induced by confounding \textit{or} unknown covariate shifts in the {target} population. Although multi-calibrated and accurate learning originated from fairness motivations for ensuring calibration/low prediction bias over rich subgroups, in this work we show how the adversarial test functions in the formulation also confer broad robustness against covariate shift. To highlight this flexibility, we use multi-accurate calibration on an extremely small clinical trial to correct a predictor from an confounded observational study. 

Though there is extensive work on establishing external validity and transportability of causal effects, most of this work assumes information about a target population. Drawing inspiration from \cite{kim2022universal}, which studied ``universal adaptability'' of averaged outcome estimators with bias robust to unknown covariate shifts, we learn \textit{CATE} estimates that will maintain unbiased predictions under \textit{unknown} target populations. 

Although causal inference and machine learning has witnessed significant methodological innovation either in orthogonal/statistical learning or other machine learning adaptations \citep{kennedy2023towards,nie2020rlearner,chernozhukov2018generic,wager2018forests,shalit2017estimating,hill2011bayesian}, to name just a few, multi-accurate learning \citep{kim2019multiaccuracy} introduces a different methodological toolkit related to boosting/adversarial formulations of conditional moment conditions \citep{dikkala2020minimax,bennett2023variational,ghassami2022minimax}. We conduct a thorough empirical study comparing finite- and large-sample performance of multi-accurate learning and other causal machine learning techniques more specifically tailored for causal structure. 
To summarize, we find that multi-accurate methods grant additional robustness against unknown covariate shifts while being competitive with more advanced causal machine learning methods in finite-samples. There is a robustness-efficiency tradeoff: the latter methods are designed to exploit in-distribution efficiency, which multi-accurate learning ``off-the-shelf'' does not. Nonetheless, our work connects these two previously unrelated lines of work and shows how multi-accurate learning ``off-the-shelf'' can address the problem of robust CATE estimation. Multi-accurate learning reduces prediction bias from model misspecification, just as is required for conditional average treatment effect estimation. 

In our thorough empirical study we find that our proposed multi-accurate T- and DR-learner perform well under unobserved covariate shift. Although our work does not suggest multi-accurate learning as a replacement for state-of-the-art causal machine-learning for in-distribution estimation, due to differences in variance reduction or hyperparameter selection subroutines, it does provide evidence that could inform further methodological improvements and variance reduction of multi-accurate learning for CATE estimation. In summary: Multi-accurate learning can be used ``off-the-shelf'' to post-process CATE estimates based on differenced outcome regressions to endow them with robustness to unknown covariate shift. Such post-processing can be appealing because it can work with only \textit{black-box} access to predictors and original data. (Our earlier examples indicate many situations where only black-box access to predictors is available).
Alternative approaches to robustness against unknown shifts, like distributionally robust optimization, could change the robust-optimal predictor to a risk-sensitive one rather than the true CATE, but multi-accurate learning does not. 


The contributions of our work are the following. 
\begin{itemize}
\item We propose multi-accurate {post-processing of differenced-regression (T-learner) and pseudo-outcome (the doubly-robust score in the DR-learner) based CATE estimation} to obtain unbiased prediction on unknown deployment populations. This approach can also flexibly adapt to a variety of covariate shifts from confounding to adversarial/unknown shifts: we illustrate by postprocessing a CATE estimator that combines large observational/small randomized data. We theoretically establish identification.
    \item   {  In \Cref{prop-uaTisuaDR} we show that multi-accurate post-processing of simple CATE estimates (T-learner) with a \textit{richer} test function class can approximate a less-multi-accurate/less-robust but more-advanced CATE estimator, i.e. a multi-accurate DR-learner under a \textit{simpler} test function class. 
This establishes theoretically favorable estimation properties: that our method not only provides robust bias control, but is also comparable to (weaker implementations of) CATE estimators that pursue efficient estimation. This is an important contribution since the multi-accuracy framework alone only guarantees robust bias control.}
\item We show in extensive experiments with simulations and real-world observational and randomized data from the Women's Health Initiative how our approach achieves finite-sample gains in ensuring robust bias control (and correspondingly, MSE) under unknown distribution shifts. Hence we empirically verify our theoretical contributions.
\end{itemize}

In \Cref{sec-problemsetup}, we introduce the formal problem setup. In \Cref{sec-bg} we describe the background context and most closely related work to our methodological developments (other related work is discussed in \Cref{sec-relatedwork-further}). We develop and analyze our methodology in \Cref{sec-method}. In \Cref{sec-experiments}, we provide extensive empirical comparisons in simulated data and a case study of the Women's Health Initiative parallel clinical trial and observational study. 

\section{Problem setup}\label{sec-problemsetup}
We overview the problem setup and describe directly related prior work on multi-calibration/multi-accuracy. See \Cref{sec-relatedwork-further} for discussion of other methodological approaches.

\paragraph{Data.}

The dataset $\mathcal{D} = \{ (X_i,T_i,Y_i(T_i))\}_{i=1}^n$ comprises of covariates, treatment $\in \{0,1\}$, and (potential) outcomes $Y(T)$. 

In different applications it will satisfy different assumptions, so we will later define different variants of $\mathcal{D}$. We first assume it arises from a randomized controlled trial or observational study under the assumption of weak ignorability, so that the following assumption about selection on observables holds.

\begin{assumption}[Unconfoundedness (ignorability)]\label{asn-unconfoundedness} 
    $$\{Y(1), Y(0)\} \perp T \mid X$$
\end{assumption}

\Cref{asn-unconfoundedness} is a generally untestable assumption that permits causal identification. For example, it holds in randomized trials by design, and in observational studies if the observed covariates are fully informative of selection into treatment. Later on, we will jointly consider access to both a large-scale observational study (with potential violations of unconfoundedness) and a small randomized trial.

{We denote treatment-conditional outcome regressions, and the propensity score as: 
    $$ \mu_t(x) = \E [Y \mid X=x,T=t \big],  e_t(x) = P(T=t\mid X=x)  $$
These are the typical so-called nuisance estimation functions used in common estimators. We use $\hat\mu, \hat e$ to denote estimates. 

Throughout we also assume standard assumptions of consistency, SUTVA, and overlap.
\begin{assumption}[Consistency, SUTVA, and overlap]\label{asn-causalid}
        We assume that $Y_i=Y_i\left(T_i\right)$ (consistency and SUTVA), and that there exists $\nu>0$ such that $\nu \leq e_1(x)\leq 1-\nu.$
\end{assumption}
{
Finally, we have a mild assumption that outcomes are bounded. 
\begin{assumption}[Bounded outcomes]\label{asn-bounded}
        We assume that $\vert Y(1) \vert , \vert Y(0)\vert \leq B.$
\end{assumption}
}
{Many prior works have also assumed bounded outcomes, including but not limited to \citet{kennedy2020optimal,dudik2014doubly}. Relaxing the assumption of bounded outcomes is a mild technicality: it is used in the convergence analysis of multi-accuracy as a sufficient but not necessary condition and as a condition for Chernoff inequalities, which could be replaced with stronger concentration inequalities under weaker assumptions. See also \citet{devroye1980detection,gayraud1997estimation} for canonical estimators for estimating the support of a distribution.}
\paragraph{Estimands.}
The common estimand in causal inference is the \textit{average treatment effect}, (ATE) ${\E\lr{Y(1) - Y(0)}}$. In regimes with posited heterogeneous treatment effects, that are predictable given covariates $X$, a (functional) estimand of interest is the \textit{conditional average treatment effect} (CATE)
$${\tau(X) = \E\lr{Y(1) - Y(0) \mid X}}.$$

    \paragraph{Performance assessment. }
The convention for benchmarking estimation of CATE is the mean-squared error (MSE) with respect to the true $\tau(X)$ CATE function:
$$
\E[ \left(\hat{\tau}(X)-\tau(X)\right)^{2}
].$$

Further, estimators for CATE will all eventually involve different regressions that implicitly minimize predictive error marginalized over the dataset's distribution of $X\sim P_X$.

Later on, our work will focus on providing guarantees on conditional bias achieved by a CATE estimate $\hat\tau$ marginalized under a {target} covariate distribution $X\sim Q_X$ that can be \textit{different} from the distribution $X\sim P$ upon which the CATE estimate was trained:
\begin{equation}
    \abs{\E_Q[ \left(\hat{\tau}(X)-\tau(X)\right)]}
    \label{eqn-refdistn-Q}
\end{equation}

The bias is of course a component of the MSE: multi-accuracy methods provide guarantees on the absolute bias; later in \Cref{sec-experiments} we extensively empirically evaluate the mean squared error as well.

We write $Q_X(x), P_X, P_{X_1}, P_{X_0}$ for the marginal distribution of $X$ under $Q$, the marginal distribution of $X$ under $P$, and $P_{X_1}, P_{X_0}$ under $X\mid T=1, X\mid T=0$ on the observed data, respectively. We also denote ${E}_P[\cdot], {E}_{P_1}[\cdot], {E}_{P_0}[\cdot]$ to denote marginalization over $X$ in the training data, $X\mid T=1,$ or $X\mid T=0$, respectively. For brevity we write ${E}_Q[ \cdot]$ to denote expectations under the \textit{unknown {target} distribution} $Q_X$ on $X$ (which can be extended to accommodate shifts beyond the typical covariate shift assumption in a slight abuse of notation). For example, $\mu_t\in\arg\min_g E_{P_{X_t}}[(Y-g(X))^2],$ e.g. by default, regression in each treated arm minimizes the MSE under the covariate distribution of each treatment arm.

\begin{figure*}[t!]
\includegraphics[width=\textwidth]{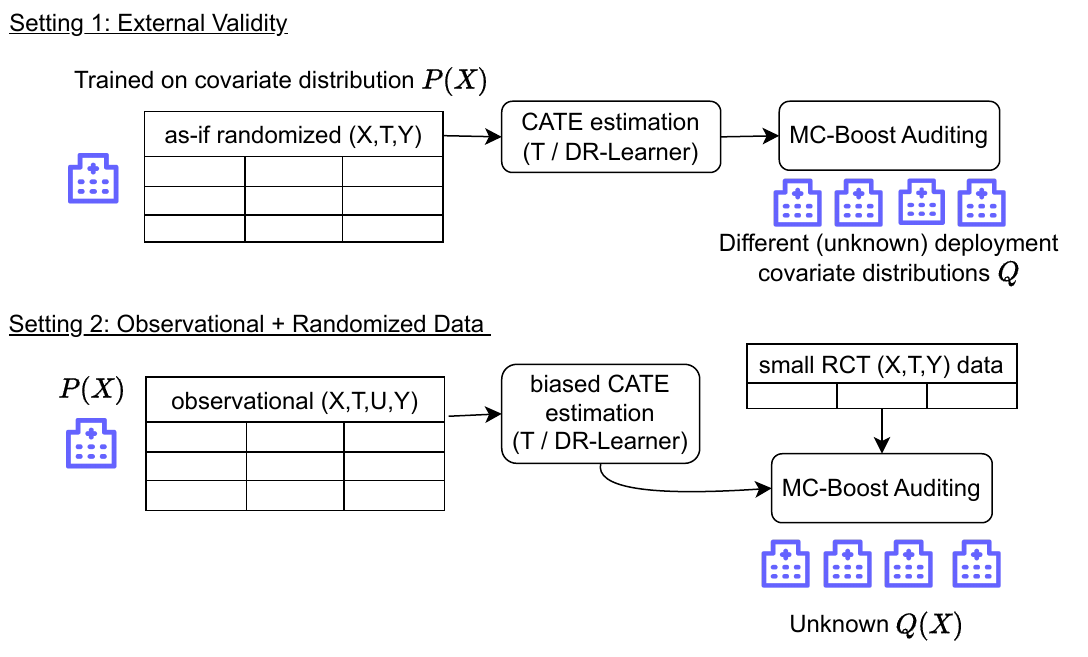}
    \caption{Schematic of setting 1 (external shift), and setting 2 (learning from large observational and small RCT data). {We propose multi-accuracy (MC-Boost) auditing as an ``off-the-shelf'' procedure to improve the downstream robustness of CATE learners to unknown covariate shifts in both settings.}}
    \label{fig:setting}
\end{figure*}

\paragraph{Notation conventions.} When, for example describing the multi-accuracy criteria without reference to the dataset's distribution, we write ${E}[\cdot]$ when referring to the distribution of the training data. 

We next introduce the shift scenarios (i.e. combinations of assumptions) under which we seek guarantees on CATE estimation. See \Cref{fig:setting} for an informal illustration. 

\subsection{Robustness to unknown deployment shifts} 

\begin{setting}[Unknown \textit{external covariate shifts}]\label{setting-unknowndeploymentshift}
Suppose \Cref{asn-unconfoundedness}, that unconfoundedness holds, and \Cref{asn-causalid}. Consider valid likelihood ratios with respect to the marginal distribution of $X$ in observational data, $P_X$: 
{
\begin{align*}
    \textstyle 
      \mathcal{L}_1&
          \textstyle 
:=\left\{\frac{d Q_X(x)}{d P_{X_1}(x)}:     \mathbb{P}\left\{ \frac{d Q_X(x)}{d P_{X_1}(x)} < \infty,\forall x \text{ s.t. } P(x)>0\right\} =1, 
        \E[ \frac{d Q_X(X)}{d P_{X_1}(X)}] = 1
        \right\} 
    \\
          \mathcal{L}_0&
          \textstyle 
:=\left\{\frac{d Q_X(x)}{d P_{X_0}(x)}:     \mathbb{P}\left\{ \frac{d Q_X(x)}{d P_{X_0}(x)} < \infty,\forall x \text{ s.t. } P(x)>0\right\} =1, 
        \E[ \frac{d Q_X(X)}{d P_{X_0}(X)}] = 1
        \right\} 
\end{align*}
}
We seek an estimator $\hat\tau(X)$ with low bias under $Q:$ $\left|\mathrm{E}_Q[(\hat{\tau}(X)-\tau(X))]\right| \leq \epsilon.$
\end{setting} 
$\mathcal{L}_1$ is the set of valid covariate shift likelihood ratios, such that $Q_X$ is absolutely continuous w.r.t $P_{X_1}$, i.e. the likelihood ratio is finite whenever $P_{X_1}$ has nonzero support (almost surely, with probability 1 under the measure $P_{X_1}$), and marginalizes to $1$. We will call this type of unknown deployment shift an ``external shift''. This is analogous to the unknown shift setting studied in \cite{kim2022universal}, as well as other literature on unknown covariate shifts \citep{jeong2020robust,subbaswamy2021evaluating, hatt2021generalizing}. In contrast to an extensive literature on transportability and external validity, we focus on the case of a-priori \textit{unknown} deployment shifts. 

  If suitably nonparametric CATE estimation indeed recovered the Bayes-optimal predictor in finite samples, there would be no issue of unknown deployment shifts. But because in finite samples it generally does not, modifying estimation to protect against unknown deployment shifts can protect against misspecification and finite-sample issues. For example, misspecified CATE estimation is vulnerable to unknown covariate shift. Furthermore, the conventional mean-squared error \textit{MSE} can be nonzero for the \textit{Bayes-optimal} predictor $\mu_1(X)={\E[Y(1)\mid X]}$. If the conditional bias or variance in $Y$ is heteroskedastic (i.e. varies in $x$), the \textit{prediction MSE} changes as external shifts change the marginalizing covariate distribution. Under misspecification and nontrivial residual variance of the Bayes regressor, different weighting functions can indeed change the population minimizer \citep{shimodaira2000improving,buja2019models}. Later on we will use multi-accurate learning to post-process CATE estimates to ensure robustness against covariate shifts represented by a function class of likelihood ratios.




\subsection{Unobserved confounding: observational data with RCT}\label{setting-obsrct}
We consider a different setting where unknown covariate shifts may arise: a large observational dataset and small randomized trial. 
The observational study may be subject to unobserved confounding. On the other hand, the sample size of the randomized data may be small, so that learning conditional causal effects solely from randomized data is unsupported. This regime is common in clinical settings, such as the parallel Women's Health Initiative observational study and clinical trial \citep{machens2003whi}; see also \citep{colnet2020causal,yang2020elastic} and \citep{bareinboim2013general} for identification results for the related setting of data fusion.

The data setting is as follows. The observational dataset may have been collected under unobserved confounders, $\mathcal{D}_{obs}^* = (X,U,T,Y)$, but we only observe $\mathcal{D}_{obs}=(X,T,Y).$ Hence unbiased causal estimation is not possible from the observational dataset alone.
On the other hand, we also have a randomized controlled study, 
     $  \mathcal{D}_{\rct}=   (X_{r}, U_{r}, T_{r}, Y_{r}).$
We summarize our assumptions about the sample size, and shift regimes of observational/randomized data below. The punchline is that multi-accuracy provides robustness against these potentially unknown shifts, under assumptions of well-specification of the test function class. (All shifts are in the ``causal'', rather than ``anti-causal'' setting). 

In the below setting, we aim to learn a valid CATE estimator $\E[Y(1)-Y(0)\mid X]$ for the covariate distribution of the observational study or additional unknown covariate shifts. 

\begin{setting}[Observational and randomized study]\label{setting-observationalRCT}
Assume \Cref{asn-causalid}. Suppose an observational dataset $\mathcal{D}_{obs}$, collected under violations of \Cref{asn-unconfoundedness} (the observational data were collected under unobserved confounders), and a randomized dataset $\mathcal{D}_{rct}$, where \Cref{asn-unconfoundedness} holds (the randomized data are unconfounded).
{For covariate shift, consider likelihood ratio functions with respect to a target population $Q$. Again we seek an estimator $\hat\tau(X)$ with low bias under $Q:$ $\left|\mathrm{E}_Q[(\hat{\tau}(X)-\tau(X))]\right| \leq \epsilon.$}

\begin{assumption}[Small RCT, large observational study]\label{asn-obsrct-samplesize}
$\vert \mathcal{D}_{obs} \vert \gg \vert \mathcal{D}_{rct} \vert  $
\end{assumption}
\end{setting}
\Cref{asn-obsrct-samplesize} is not necessary for identification, but it describes the relevant regime where the method is helpful: if instead $\vert \mathcal{D}_{rct} \vert \gg \vert \mathcal{D}_{obs} \vert$, unbiased CATE estimation is possible from the randomized data alone.

\section{Background and related work}\label{sec-bg}

\paragraph{Conditional average treatment effect estimation.}
We briefly discuss a few options for estimating $\tau$, upon which we will build multi-calibrated approaches in differing shift scenarios. 
The \textit{T-learner} differences two regressions for the conditional means of Y for treated and untreated:
    $$ \hat\tau(x) = \hat{\mu}_1(x)- \hat{\mu}_0(x). $$
Implicitly in the definition of these methods, both of these basic approaches for CATE estimation learn predictive models $\mu_t(X)$ by minimizing the mean-squared error, evaluated over some distribution of covariates $X$. Namely, for the $T$-learner, 
$$\mu_t(x) \in \arg\min_{g(x) \in \mathcal G} \E_{P}[ (Y-g(X))^2 \mid T=t],t\in\{0,1\}$$

Although many other advanced machine learning and causal inference methods have been developed based on advanced estimating equations \citep{nie2020rlearner,kennedy2023towards,oprescu2019orthogonal,wager2018forests,semenova2021debiased}, or other machine-learning adaptations \citep{shalit2017estimating,shi2019adapting}, we will first instantiate our post-processing method with the $T$-learner and describe how it can be used with more advanced methods based on pseudo-outcome regression (such as the $DR$-learner; \citealt{kennedy2020optimal,semenova2021debiased}). 

Our meta-algorithm is based on post-processing CATE estimation (the $T$- or $DR$-learner) with algorithms for multi-calibration. Next, we describe multi-calibration and its prior use for universal adaptability.  

\paragraph{Universal Adaptability via Multicalibration.}
Recent work of \cite{kim2022universal} introduced the concept of \emph{universal adaptability}.
Much work on inference under (external) covariate shift assumes that the shift is known at the time of estimation. 
Instead, universal adaptability builds a prediction function from a source dataset whose average outcome estimate incurs small bias on any downstream covariate distribution, within a broad class of unknown shifts.
The work of \cite{kim2022universal} establishes the feasibility of universal adaptability via a connection to the notions of multi-calibration/multi-accuracy, originally introduced in the literature on algorithmic fairness \citep{hebert2018multicalibration}.
Following this line of research, we show how multi-accuracy can be used, off-the-shelf, to address unknown (external and internal) shifts in the context of CATE.

The multi-calibration criterion was originally motivated to provide guarantees over a variety of subpopulations, such as valid calibration over arbitrary subgroups  \citep{hebert2018multicalibration}. The related, but somewhat weaker, notion of multi-accuracy ensures low prediction bias within arbitrary subgroups \citep{kim2019multiaccuracy}. Throughout this paper, we focus on multi-accuracy (although analogous results hold for the stronger criterion of multi-calibration). 

\begin{definition}[Multi-accuracy]
For $c(X)$ in a set of functions $\mathcal{C}$, {a predictor $\tilde{p}: \mathcal{X} \rightarrow[-B,B]$} is $(\mathcal{C}, \alpha)$ multi-accurate if
$$ \max_{c \in \mathcal{C}} 
 \vert \E[ (Y-\tilde{p}(X))  c(X)] \vert \leq \alpha, $$
\end{definition}
Interpretations depend on the specification of the function class $\mathcal{C}$. When $\mathcal{C}$ is a class of subgroup indicator functions, $\mathcal{C} = \{ \mathbb{I}[x \in C] \colon C \in \tilde{C} \},$ with $\tilde{C}$ a set of subsets of $\mathcal{X}$, then the multi-accuracy criterion ensures low prediction bias over a rich set of subpopulations. The class $\tilde{C}$ could indicate sublevel sets of functions with a finite VC-dimension. For example, if $\mathcal{C}$ is the space of all decision trees of depth 4, it has a finite VC-dimension and can describe complex subpopulations.   

A growing line of work has developed algorithms with guarantees to (approximately) satisfy multi-calibration and multi-accuracy criteria \citep{hebert2018multicalibration,kim2019multiaccuracy,gopalan2022low,pfisterer2021mcboost} via boosting. When initialized from scratch, multi-calibration/accuracy can be viewed as a learning algorithm, but it can also be used to post-process a given predictor, as we do in this paper. 
Our meta-algorithms leverage these existing algorithms for obtaining \textit{multi-accurate} predictors by post-processing.

Specifically, we use the MCBoost algorithm \citep{pfisterer2021mcboost}, pseudocode included in \Cref{alg:mcboost}. 
MCBoost \citep{pfisterer2021mcboost} takes as input a given \textit{initial predictor} $p$, \textit{test-function class} $\mathcal{C}$, \textit{approximation parameter} $\alpha,$ and \textit{post-processing datasets} $\mathcal{D}_{post}$ for calibration and validation. {For historical reasons, because the codebase was developed initially for multi-calibration which requires discretization of the interval $[0,1]$, we standardize the prediction range, under \cref{asn-bounded}. This pre-processing is completely without loss of generality.} Later on in our meta-algorithms, to be concise we will refer to this as running $\textrm{MCBoost}(p,\mathcal{C},\alpha,\mathcal{D}_{post}).$ As a brief summary, MCBoost is a boosting procedure that proceeds via a series of \textit{auditing steps}: given the initial predictor $p$, it then solves a least-squares problem over the calibration dataset to find a $c \in \mathcal{C}$ that maximizes correlation with the residuals. The process iterates until the total miscalibration or accuracy error drops below a stopping criterion. Next, we discuss the auditing step of this procedure in more detail.

\paragraph{Auditing.}
The auditing step solves a subproblem of finding a subgroup with worst-case multi-accuracy error (bias). 
\begin{definition}[Multiaccuracy auditing]
Let $\alpha>0$ and $\mathcal{H}$ be a test class of functions. 
Suppose $D_{\text{post}} \sim \mathcal{D}$ is a set of independent samples. A {predictor $\tilde{p}: \mathcal{X} \rightarrow[-B,B]$} passes $(\alpha)$-multiaccuracy auditing if for $c^*\in\arg\min_{c \in\mathcal C} \E[ ((\tilde{p}(X)-Y)-h(X))^2],$
\begin{align*}
&
\vert {{\E}}[(Y-\tilde{p}(X))c^*(X)] \vert \leq \alpha .
\end{align*} 
\end{definition}
In practice, evaluation of the multi-calibration or multi-accuracy criterion over discrete subgroups is implemented via regression, reminiscent of twicing \citep{tukey1977exploratory}. {``Twicing'' is a one-step boosting procedure: given a predictor function $p(x)$, fit the residual errors $c^*\in \arg\min_{c\in\mathcal{C}}\mathrm{E}[((Y-p(X)) -c(X) )^2]$ and return $p(x)+c^*(x)$ as the final prediction.} (In the conditional moment literature, these audit test functions would be called instrument functions). That is, the algorithm often audits over real-valued functions $c(x) \colon \mathcal{X} \mapsto \mathbb{R}$. These can be connected to the subpopulation motivation by viewing $c(x) \colon \mathcal{X} \mapsto [0,1]$ as a relaxation of indicator functions; and real-valued functions as a rescaling of the former. (Later on, we will relate the real-valued weight functions directly to IPW weight functions (i.e. Riesz representers) in causal inference estimators). 

The MCBoost algorithm incorporates these worst-case subgroup error functions to improve the predictor. Given a predictor $p_k(x)$ at some iteration $k$ of the algorithm, the auditing step learns a test function $c$ that best correlates with the residual function $p_k(x)-y$. {For auditing over regression prediction functions, for step $k$, this is a regression problem: the auditing step solves $\min_{c\in\mathcal{C}} \mathrm{E}[((Y-p_k(X)) - c(X))^2]$. }The predictor is then updated with a multiplicative-weights update based on the worst-case test function $c^*(x)${, that is, the next predictor $p_{k+1}(x)$ is upweighted or downweighted based on the predicted error $c^*(x)$, $p_{k+1}(x) \propto e^{-\Delta_c \cdot c^*(x) / 2} \cdot p_k(x),$ where $\Delta_c$ is the multi-accuracy error computed on a validation set, $\mathrm{E}_V[c^*(X)(Y-p_k(X))]$. Therefore the MCBoost procedure adjusts the original predictor based on the predicted errors ($c^*(x)$), adjusting more strongly in regions of higher error, as well as if the predictor incurs higher error on average.} Auditing and postprocessing occurs in a different held-out dataset: we will refer to this as the \textit{post-processing dataset}, including both calibration and validation sets. If the multi-accuracy criterion is not met for this test function $c \in\mathcal{C}$, the algorithm takes a boosting step and adds a multiplicative update with this test function. If the multi-accuracy criterion is met, the algorithm terminates.

\begin{remark}[Relation to conditional moment restrictions]
A reader in causal inference or econometrics may notice connections to conditional moment formulations. 
We expect that our later analysis, which is focused on multi-calibration/accuracy algorithms, also hold for adversarial formulations of conditional moments. (For example, \cite{greenfeld2020robust} observes that adversarial moment conditions, in their case HSIC for independence of residuals, imply robustness to covariate shift). Recent advances in machine learning for conditional moment equations \citep{dikkala2020minimax,bennett2023variational,ghassami2022minimax} typically develop min-max estimation algorithms that are unstable in practice; besides, the theory of conditional moment restrictions typically identifies finite-dimensional parameters rather than entire functions like the CATE. (See \Cref{sec-relatedwork-further} for more extensive discussion of related work.) 
\end{remark}

\paragraph{Multi-accuracy vs. multi-calibration.}
Throughout this paper, we have focused on a weaker multi-accuracy criterion \citep{kim2019multiaccuracy}, which is nonetheless sufficient for the robustness guarantees we seek for external validity and robustness to shifts. On the other hand, the stronger criterion of multi-calibration is somewhat more often studied in the literature. In this paper, we \textit{do not} investigate the \textit{calibration} properties of heterogeneous treatment effects, focusing instead on \textit{robustness to external shift}. On the other hand, these are not completely unrelated: with slightly different notions, \citet{wald2021calibration} studies connections between multi-domain calibration and out-of-distribution generalization (without a focus on causal estimation). Other papers have investigated calibration for heterogeneous treatment effects \citep{xu2022calibration}, including orthogonalized isotonic regression \citep{van2023causal}. 

\section{Method}\label{sec-method}




In this section, we develop and analyze our meta-algorithm that uses multi-accurate post-processing on the black-box regression models for a $T$-learner of the CATE. We first describe the meta-algorithm. To justify its use, we start with basic results focusing on ATE estimation. First we show how regression adjustment with multi-accurate outcome models approximates an augmented doubly-robust estimator (AIPW), in the \textit{absence} of unobserved confounding. (This relates to estimation properties of our final approach). Second, we discuss the \textit{target-independent identification} properties of ATE estimation, under the potential presence of unobserved confounding, which relates to our guarantees under unknown covariate shifts. 

Identification implies we can write our target causal estimand in terms of functionals of the observational distribution alone. Estimating the regression-adjustment-identified functional with \textit{multi-accurate post-processing} will result in certain equivalences with \textit{other} estimation approaches, such as double robustness, with corresponding function classes. We establish results for the different settings we consider by combining these properties. 

We then discuss the specific settings of unknown covariate shifts (external validity), observational/randomized data, and CATE estimation therein. Finally, a natural question remains: we established the properties of a multi-accurate $T$-learner, but what about more complex approaches? We provide extensions to recently studied pseudo-outcome regressions and prove approximate equivalence of a multi-accurate $T$-learner with a ``weaker'' multi-accurate $DR$-learner, suggesting that expanding the test function class can capture some of the improvements of a more complex estimation target.

\subsection{(Meta)-Algorithm: Multi-accurate post-processed T-learner}

We describe the meta-learner in \Cref{alg-mc-external-shift}. We learn CATE estimates based on the $T$-learner (i.e. differencing outcome regression models). We post-process the outcome models with multi-accuracy. Then the multi-accurate CATE estimate is: 
\begin{equation}
\tilde{\tau}(X) = \tilde{\mu}_1(X) - \tilde{\mu}_0(X)
\end{equation}

It also admits a natural regression-adjustment estimate for the ATE: 
\begin{equation} \mathrm{E}[\tilde{\tau}(X)] = \mathrm{E}[\tilde{\mu}_1(X) - \tilde{\mu}_0(X)] . \label{eqn-regr-adj}
\end{equation}

This represents estimating the ATE by imputing potential outcomes via regression: hence its name \textit{regression adjustment} in the statistics literature, the ``direct method'' in the computer science literature, etc. Notably, the estimator \textit{does not} explicitly use propensity scores. Nonetheless, we show how the robust test functions in multi-accurate auditing can nonetheless approximate inverse propensity score functions. 

\begin{algorithm}[t!]
\caption{Multi-accuracy for CATE estimation for \Cref{setting-unknowndeploymentshift}, unknown covariate shifts}\label{alg-mc-external-shift}
\begin{algorithmic}[1]
    \State{Input: $\mathcal{D}=\{(X_i,T_i,Y_i)\}_{i=1}^n$ unconfounded data, $\mcclass$ auditor function class, $\mathcal{G}$ function class for outcome functions.}
    \State{Split $\mathcal{D}$ into $\mathcal{D}_{est}$ and $\mathcal{D}_{post}$}
       \State{Fit treatment-conditional outcome functions from the observational dataset $\mathcal{D}_{est}$:}
   $$
\hat{\mu}_t(x) \gets  \arg\min_{g(x)\in\mathcal{G}} \E[(g(X)-Y)^2\mid T=t], \text{ for }t \in \{0,1\}
$$
   \State{Post-process $\hat\mu_t(X)$ for $t \in\{0,1\}$ by multi-accuracy: 
   
   $\tilde{\mu}_t(x) \gets \textrm{MCBoost}(\hat\mu_t, \alpha, \mathcal{F}, \mathcal{D}_{post}^t),$ where $\mathcal{D}_{post}^t $ is the subset of $\mathcal{D}_{post}$ where $\mathbb{I}[T=t].$
   
   so that  $
 \max_{\mcf\in\mathcal{\mcclass}}|{\E_P}[f(X) \cdot(Y-\tilde{\mu}(X)) \mid T=t]| \leq \alpha.$}
  
\State{Return $\tilde\tau(x) = \tilde{\mu}_1(x)-\tilde{\mu}_0(x)$}
\end{algorithmic}
\end{algorithm}

\subsection{Warmup: Multicalibration, universal adaptability, and the ATE}
We introduce properties of multi-calibration/multi-accuracy algorithms for estimation of the CATE and ATE. 
We begin by establishing that regression adjustment estimation with multi-accurate outcome predictors can approximate an AIPW estimator with certain nuisance functions. Later on, we combine these estimation properties with robust identification of the ATE and CATE under external covariate shifts.

\paragraph{Robust estimation of the ATE under unconfoundedness: regression adjustment with sufficiently post-processed multi-accurate outcome predictors = doubly-robust estimation.}
As a warm-up, we first consider estimation when unconfoundedness holds. We show in the causal context, that multi-calibration/multi-accuracy can be viewed as \textit{finding a boosted predictor} whose marginalization satisfies estimating equations for the average treatment effect. In addition, multi-calibration/multi-accuracy as an algorithmic scheme \textit{expands} the functional complexity of the original predictor it is initialized with. Interestingly, regression adjustment with a multi-calibrated/multi-accurate predictor approximates the doubly-robust estimator for the ATE, and hence is consistent if either the original predictor is well-specified, the inverse propensity score is within the auditor function class, \textit{or} if the prediction function is within the \textit{expanded} function class output by multi-calibration/multi-accuracy. 

The doubly-robust augmented inverse-propensity weighting estimator (AIPW) is a canonical estimator highlighting improved estimation opportunities for causal inference \citep{robins1994estimation}. It has the following form, for a given outcome and propensity model $\mu,e$: 
\begin{equation}
\E[Y(1)-Y(0)]
=
\sum_{t\in\{0,1\}}\mathrm{E}\left[\frac{\mathbb{I} [T=t]}{e_t(X)}\left(Y-{\mu}_t(X)\right)+{\mu}_t(X)\right] \label{eqn-drATE}
\end{equation}
It enjoys improved estimation properties, such as the mixed-bias property (only requiring one of outcome or propensity model to be consistent for consistent estimation of the ATE) or rate double-robustness.

The approximation relies on conducting multi-accurate learning with an auditor function class containing the inverse propensity score. As a note, the below statements hold up to an additional misspecification error (as shown in \cite{kim2022universal}). Because the auditor function class is typically large (i.e. contains functions beyond the inverse propensity score), this is a ``robust'' way to conduct doubly-robust estimation. 

\begin{proposition}[Multi-accuracy implies robust estimation of the ATE via regression adjustment]\label{prop-uaisdr}
Suppose \Cref{asn-unconfoundedness,asn-causalid,asn-bounded}.
Consider an auditor class $\mathcal{H}$ that is closed under affine transformation. Assume unconfoundedness holds. 
Consider the estimator
$\E[\tilde{\tau}(X)]$ where $\tilde{\tau}(x)$ is the output of \Cref{alg-mc-external-shift} with auditor class $\mathcal{H}$, approximation parameter $\alpha$, initial outcome model estimators $\hat\mu_1(x),\hat\mu_0(x)$, and $\mathcal{D}_{post}$ from the same distribution as the data.

If at least one of the following is true: (1) the original outcome models $\hat\mu_1(x), \hat\mu_0(x)$ are consistent estimators, (2) $e_1(X)^{-1}, (1-e_1(X))^{-1} \in \mathcal{H}$, or (3) if using multi-accuracy, the true $\mu_1(x),\mu_0(x)$ are in the linear span of $\mathcal{G}+\textrm{conv}(\mathcal{H})$, then
$$\mathrm{E}\left[\tilde{\tau}(X)\right]=\mathrm{E}\left[\tilde{\mu}_1(X)-\tilde{\mu}_0(X)\right]=\mathrm{E}[Y(1)-Y(0)]+2 \alpha,
$$
i.e. we obtain $2\alpha$-consistent estimation of the ATE.

\end{proposition}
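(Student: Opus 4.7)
The plan is to reduce the regression-adjustment bias to a quantity that the multi-accuracy guarantee controls directly. The starting point is the AIPW identity, which holds as an \emph{identity} under \Cref{asn-unconfoundedness} for \emph{any} function $\tilde\mu_t$: by the tower property and $\E[Y\mid X,T=t]=\mu_t(X)=\E[Y(t)\mid X]$,
\begin{equation*}
    \E[Y(t)] = \E[\tilde\mu_t(X)] + \E\!\left[\frac{\mathbb{I}[T=t]}{e_t(X)}\bigl(Y-\tilde\mu_t(X)\bigr)\right].
\end{equation*}
Rearranging and converting the unconditional expectation into a treatment-conditional one gives the key reduction
\begin{equation*}
    \E[\tilde\mu_t(X)] - \E[Y(t)] \;=\; -\,P(T=t)\cdot \E\!\left[\frac{1}{e_t(X)}\bigl(Y-\tilde\mu_t(X)\bigr)\,\Big|\,T=t\right],
\end{equation*}
which expresses the bias as the inner product between the treatment-arm residual $Y-\tilde\mu_t$ and the inverse propensity $1/e_t$. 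This is exactly the shape of object that multi-accurate auditing on $\mathcal{D}_{post}^t$ is designed to control.

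From here I would dispatch the three sufficient conditions separately. Under condition (2), since $1/e_t\in\mathcal{H}$ and $\tilde\mu_t$ passes $\alpha$-multi-accuracy conditional on $T=t$, the conditional expectation above is bounded in absolute value by $\alpha$; multiplying by $P(T=t)\le 1$ and summing the two arm-wise bounds gives the claimed $2\alpha$ consistency for the difference $\tilde\tau(X)$. Under condition (1), if the initial outcome models are consistent for $\mu_t$, then $\E[(Y-\hat\mu_t(X))h(X)\mid T=t]=\E[h(X)\E[Y-\mu_t(X)\mid X,T=t]]=0$ for every $h$, so the initial predictor already satisfies multi-accuracy (with slack $0$) and MCBoost either leaves it essentially unchanged or terminates immediately, which by the AIPW identity yields an unbiased regression-adjustment estimator up to the $\alpha$ slack. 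Under condition (3), where $\mu_t$ lies in $\mathrm{span}(\mathcal{G}+\mathrm{conv}(\mathcal{H}))$, the expressiveness of the MCBoost iterates (initial $\hat\mu_t\in\mathcal{G}$ augmented by updates drawn from $\mathcal{H}$, together with the affine closure of $\mathcal{H}$) lets the post-processed $\tilde\mu_t$ approximate $\mu_t$ well enough on each arm that the weighted residual collapses to the multi-accuracy tolerance.

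The main obstacle is condition (3), which is the only part that requires more than a one-line application of the multi-accuracy inequality: one must argue that the MCBoost trajectory stays effectively within $\mathcal{G}+\mathrm{conv}(\mathcal{H})$ (modulo the multiplicative-weights parameterization) and that $\alpha$-multi-accuracy inside this approximating class translates back to an $\alpha$-bound on the linear functional that is the regression-adjustment bias. For this step I would invoke the universal-adaptability machinery of \cite{kim2022universal}: their analysis shows that when the true regression function is representable in the effective span, multi-accuracy at tolerance $\alpha$ implies $O(\alpha)$-accurate estimation of the relevant linear functional, which is exactly the per-arm object produced by the AIPW identity here. Assembling the three cases, each separately certifies that the bias per arm is at most $\alpha$, and a triangle inequality across $t\in\{0,1\}$ delivers the $2\alpha$ bound on $|\E[\tilde\tau(X)]-\E[Y(1)-Y(0)]|$.
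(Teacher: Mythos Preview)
Your proposal is correct and follows essentially the same route as the paper: reduce the per-arm regression-adjustment bias via the AIPW identity to the weighted-residual inner product, then invoke multi-accuracy when $1/e_t\in\mathcal{H}$ (condition (2)), observe that consistent $\hat\mu_t$ already satisfy the multi-accuracy criterion (condition (1)), and use the span/well-specification argument for condition (3). The one substantive difference is the pointer you give for condition (3): the paper grounds that case in the loss-minimization characterization of multi-accuracy from \citet{gopalan2022low} (multi-accurate predictors are squared-loss minimizers over $\mathcal{G}+\mathrm{conv}(\mathcal{H})$, so if the true $\mu_t$ lies there it is both feasible and optimal), whereas you appeal to the universal-adaptability analysis of \citet{kim2022universal}. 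Either reference suffices, but the Gopalan et al.\ characterization more directly handles your stated concern about the multiplicative-weights updates not literally producing additive iterates.
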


{Although identification of the ATE with doubly-robust estimator is standard, note that our estimator that uses multi-accurate outcome models \textit{does not} explicitly estimate the propensity scores. Nonetheless, the novelty of Prop. 1 is that averaging multi-accurate outcome models, because multi-accuracy implies $\mathrm{E}[(Y-\tilde{\mu}_t(X)) h(X) ]\leq \epsilon, \forall h \in \mathcal H$ leads to direct approximation of a doubly-robust estimator with the \textit{multi-accurate regression-based} estimator $\mathrm{E}\left[\tilde{\mu}_1(X)-\tilde{\mu}_0(X)\right]$, under conditions on how the test function class $\mathcal{H}$ relates to inverse propensity score functions. Because this is a novel perspective, we establish it for ATE estimation first before showing how it applies to the robust covariate shifts that motivate our method. 
}

This proposition connects the use of multi-accurate estimation to doubly-robust estimates and therefore establishes variance reduction properties, which is important because the multi-accuracy criterion itself is characterized via bias reduction on subgroups alone, without directly discussing the mean-squared error or estimation variance. 

{Although the beneficial estimation properties of AIPW are well-established \citep{robins1994estimation,kennedy2016semiparametric}, recall that our multi-accurate T-learner was not explicitly constructed as an AIPW estimator. It is therefore an example of an outcome-regression based approach that nonetheless satisfies doubly-robust estimating equation properties, in the spirit of \citep{bang2005doubly}, but algorithmically different. 
This additional perspective on how multi-accurate learning may robustly approximate a doubly-robust estimator is \textit{new} in our paper, beyond previous analysis of universal adaptability. It also relates in spirit to recent work studying connections between regression adjustment and IPW methods, as \citep{chattopadhyay2023implied} does for linear regression. We build on this characterization in \Cref{prop-uaTisuaDR} to relate our later developments to doubly-robust CATE estimators.}
\paragraph{Robust target-independent identification and estimation of the ATE under universal adaptability.}
Next we show how estimation with multi-accurate learning can robustly identify and estimate the ATE under potential violations of unconfoundedness. {This is a re-interpretation of ``universal adaptability''} in \cite{kim2022universal} which studied missing data under unknown shifts, which directly implies robust identification for causal inference. 

\textit{If} we had known the true propensity score function ${1}/{e_1(X,U)}$, we would obtain identification with respect to the observable marginalization of $\E[ {1}/{e_1(X,U)}\mid X]$: 
\begin{align} \E[Y(1)]
&= \E[ \E[ Y \mathbb{I}[T=1]\mid X,U] \cdot \E[ 1/e^*_1(X,U)  \mid X,U] ]
=  \E[ Y\mathbb{I}[T=1] \E[ 1/ e_1(X,U) \mid X] ] \nonumber\\
&= \E[ Y\mathbb{I}[T=1]W^*_1(X) ] \label{eqn-robustate-identification} 
\end{align} 
where in the first equality we apply ignorability conditional on $(X,U)$ and iterated expectations to obtain identification via the observable marginalization of $$W^*_1(X)\coloneqq\E[ {1}/{e_1(X,U)}\mid X].$$ (Note that $1/e_1 \neq \E[ {1}/{e_1(X,U)}\mid X]$ due to Jensen's inequality). Robust identification of the ATE follows from \Cref{eqn-robustate-identification}, i.e. that $\mathcal{H}$ contains (approximately) $\E[ {1}/{e_1(X,U)}\mid X].$ Essentially, assuming that the auditor function class contains the (unknown) identifying weight, we can re-interpret the multi-accurate criterion as an approximation of adversarial IPW. We summarize this in the following corollary.

\begin{corollary}
Suppose \Cref{asn-bounded,asn-causalid}. Suppose that $W_1^*(X) , W_0^*(X) \in \mathcal{H}.$ Run \Cref{alg-mc-external-shift} up to $(\alpha)$-multiaccuracy on $\mathcal{D}$ (possibly with unobserved confounders) over auditor function class $\mathcal{H}$ and outcome function class $\mathcal{G}$ to obtain $\tilde{\tau}(X)= \tilde{\mu}_1(X) - \tilde{\mu}_0(X)$. Then 
\begin{equation} 
\abs{\E[\tilde{\tau}(X)] - \E[Y(1) - Y(0)]} \leq 2\alpha  
\end{equation} 
\end{corollary}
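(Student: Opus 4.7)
The plan is to reduce the corollary to the same template as case (2) of \Cref{prop-uaisdr}, replacing the observable inverse propensity $1/e_t(X)$ by the confounding-aware identifying weight $W_t^*(X)$. I would prove the bias bound on each arm separately and then combine via the triangle inequality:
\[
\bigl|\mathrm{E}[\tilde{\tau}(X)] - \mathrm{E}[Y(1)-Y(0)]\bigr| \;\leq\; \sum_{t\in\{0,1\}}\bigl|\mathrm{E}[\tilde{\mu}_t(X)] - \mathrm{E}[Y(t)]\bigr|,
\]
so it suffices to show each summand is at most $\alpha$.

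The two ingredients would be (i) the robust identification from \eqref{eqn-robustate-identification}, which gives $\mathrm{E}[Y(t)] = \mathrm{E}[\mathbb{I}[T=t]\,W_t^*(X)\,Y]$ under ignorability conditional on $(X,U)$, and (ii) the algorithmic multi-accuracy guarantee from \Cref{alg-mc-external-shift}, which ensures $|\mathrm{E}[f(X)(Y-\tilde{\mu}_t(X)) \mid T=t]| \leq \alpha$ for every $f \in \mathcal{H}$. Multiplying the conditional bound by $P(T=t) \leq 1$ converts it to the unconditional moment inequality $|\mathrm{E}[\mathbb{I}[T=t]\,f(X)(Y-\tilde{\mu}_t(X))]| \leq \alpha$; instantiating $f = W_t^* \in \mathcal{H}$ (permitted by the corollary's hypothesis) then yields
\[
\bigl|\mathrm{E}[Y(t)] - \mathrm{E}[\mathbb{I}[T=t]\,W_t^*(X)\,\tilde{\mu}_t(X)]\bigr| \leq \alpha.
\]

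The main obstacle---and the step that actually carries the causal content---is the final algebraic identity: I would need $\mathrm{E}[\mathbb{I}[T=t]\,W_t^*(X)\,\tilde{\mu}_t(X)] = \mathrm{E}[\tilde{\mu}_t(X)]$, so that the multi-accuracy bound transfers directly to the gap between the regression-adjustment estimand $\mathrm{E}[\tilde{\mu}_t(X)]$ and the causal target $\mathrm{E}[Y(t)]$. The natural way to get this is to apply the identification argument underlying \eqref{eqn-robustate-identification} to the ``trivial potential outcome'' $\tilde{\mu}_t(X)$, which, being a function of $X$ alone, is its own potential outcome under either treatment assignment; the same iterated-expectation calculation used to derive the identification should then collapse the weighted moment to the unweighted marginal. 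Boundedness (\Cref{asn-bounded}) and overlap (\Cref{asn-causalid}) enter to ensure that $W_t^*$ is integrable against $\tilde{\mu}_t$ and that the multi-accuracy auditing subroutine operates on a bounded prediction range. Once the collapse is justified, each arm contributes bias at most $\alpha$, and the triangle inequality above yields the claimed $2\alpha$ bound.
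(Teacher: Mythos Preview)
Your proposal is correct and follows essentially the same route as the paper's proof: invoke the multi-accuracy guarantee to bound $\bigl|\mathrm{E}[\mathbb{I}[T=t]\,W_t^*(X)(Y-\tilde{\mu}_t(X))]\bigr|\le\alpha$, use the identification \eqref{eqn-robustate-identification} to turn the $Y$-part into $\mathrm{E}[Y(t)]$ (and, as you note, the same iterated-expectation argument applied to the $X$-measurable ``outcome'' $\tilde{\mu}_t(X)$ collapses the $\tilde{\mu}_t$-part to $\mathrm{E}[\tilde{\mu}_t(X)]$), and then combine the two arms by the triangle inequality. Your explicit conversion from the treatment-conditional multi-accuracy bound of \Cref{alg-mc-external-shift} to the unconditional moment via the factor $P(T=t)\le 1$, and your isolation of the collapse step, are details the paper leaves implicit but are exactly what is needed.
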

The result follows from the multi-accuracy criterion, which implies that $\abs{\mathrm{E}[{\mathbb{I} [T=t]}W^*_t(X)\left(Y-\tilde{\mu}_t(X)\right)]} \leq \alpha,$ which obtains identification as in \cref{eqn-robustate-identification} and the triangle inequality. 

Of course, we have not gained identification for free: we \textit{cannot} verify the assumption that $W_1^*(x), W_0^*(x) \in \mathcal{H}$ from observational data alone, just as we \textit{cannot} test the unconfoundedness assumption from data alone. However, multi-accuracy/multi-calibration methods already work with quite flexible function classes, which could be nonparametric (RKHS, etc). 

This is how multi-accuracy confers general robustness to distribution shift, whether from the data generating process such as unobserved confounders, or from external covariate shifts at the time of deployment.

\subsection{External validity: unknown deployment shift}

\paragraph{Identification under \Cref{setting-unknowndeploymentshift}}
The robust identification argument for ``universal adaptability'' re-interprets the test functions $c(X)\in\mathcal C$ as potential adversarial likelihood ratios for distribution shift. 

However, the same properties of multi-accuracy also imply robustness to external shift. In this subsection, we indeed suppose \Cref{asn-unconfoundedness}, unconfoundedness. 
Recall that our goal was to control the predictive bias on a target covariate distribution $Q_X$, potentially unknown, $\abs{\E_{Q}[ \left(\hat{\tau}(X)-\tau(X)\right)]}$. Note that each of $\mu_1,\mu_0$ are learned on a treatment conditional distribution, so we have that the valid likelihood ratio, which we denote $w_t(x)$, is defined as: 
\begin{equation}
    w_t(x)=\frac{d Q_X(x)}{d P_{X_t}(x)}=\frac{d Q_X(x)}{d P_{X_t}(x)} \frac{{P}(T=t)}{e_t(x)} \label{eqn-externalvalidity-reweighting}
\end{equation}

Obtaining robust identification for a ``universally adaptable'' CATE function instead interprets adversarial test functions as a product function class $\mathcal{F} = \mathcal{C} \times \mathcal{H}$ for \textit{both} the subpopulations that identify CATE, \textit{and} the adversarial likelihood ratio function. Our next proposition gives conditions on the weight functions $w_t \in \mathcal{H}, t \in\{0,1\}$ to satisfy robust CATE estimation under unknown covariate shifts. 
\begin{proposition}\label{prop-externalvalidity}
Suppose \Cref{asn-unconfoundedness,asn-causalid,asn-bounded}.
Let $\mathcal{C}$ denote a test function class for subgroup membership and $\mathcal{H}$ a test function class for likelihood ratios. 
Run \Cref{alg-mc-external-shift} for $\alpha$-multi-accurate postprocessing of the outcome models of a T-learner estimate. Then, for all target covariate distributions $Q$ such that the likelihood ratios $w_1, w_0\in\mathcal{H},$
\begin{align*}  &
 \forall c(x) \in \mathcal{C}, \;\abs{
\E_Q[ \{\tilde{\tau}(X) -  (Y(1)-Y(0))\} c(X)  ]
}
\leq 2\alpha.
\end{align*} 
\end{proposition}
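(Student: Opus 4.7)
The plan is to reduce the target-population bias functional to a weighted source-population bias, and then apply the multi-accuracy guarantee of \Cref{alg-mc-external-shift} to the reweighted test functions. Throughout, I will assume (what the statement implicitly requires) that the auditor function class used in the algorithm contains the product class $\mathcal{C} \cdot \mathcal{H} = \{c \cdot h : c \in \mathcal{C}, h \in \mathcal{H}\}$, so that for each $t\in\{0,1\}$ and each $c \in \mathcal{C}, h \in \mathcal{H}$, the output satisfies $|\mathrm{E}_{P}[c(X) h(X)(Y-\tilde\mu_t(X))\mid T=t]| \leq \alpha$.

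First, I would use \Cref{asn-unconfoundedness} and iterated expectations to pass from potential outcomes to conditional mean regressions under $Q$:
\begin{equation*}
\mathrm{E}_Q[(Y(1)-Y(0))\,c(X)] = \mathrm{E}_Q[(\mu_1(X)-\mu_0(X))\,c(X)] = \mathrm{E}_Q[\tau(X)\,c(X)],
\end{equation*}
so the quantity to bound becomes $|\mathrm{E}_Q[(\tilde\tau(X)-\tau(X))\,c(X)]|$. By the definition of $\tilde\tau = \tilde\mu_1-\tilde\mu_0$ and the triangle inequality, it suffices to show $|\mathrm{E}_Q[(\tilde\mu_t(X)-\mu_t(X))\,c(X)]| \leq \alpha$ for each $t\in\{0,1\}$ separately, and then sum.

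Fix $t$. The next step is a change of measure. Since the outcome regression $\tilde\mu_t$ was fit and post-processed on the treatment-conditional sample $T=t$, which has marginal $P_{X_t}$ on $X$, and since $w_t = dQ_X/dP_{X_t} \in \mathcal{H}$ by assumption,
\begin{equation*}
\mathrm{E}_Q[(\tilde\mu_t(X)-\mu_t(X))\,c(X)] = \mathrm{E}_{P_{X_t}}[w_t(X)\,c(X)(\tilde\mu_t(X)-\mu_t(X))].
\end{equation*}
Applying the tower rule to $Y \mid X, T=t$ (whose conditional mean is $\mu_t(X)$), this equals $\mathrm{E}_P[w_t(X)\,c(X)(\tilde\mu_t(X)-Y)\mid T=t]$. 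By the closure of the auditor class under the product $\mathcal{C}\cdot\mathcal{H}$, the function $f = w_t \cdot c$ is an admissible audit function, so the multi-accuracy guarantee yields that the absolute value of this expression is at most $\alpha$. Combining the two summands ($t=0,1$) gives the claimed $2\alpha$ bound.

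The main obstacle is the implicit assumption about the richness of the auditor class. The statement only mentions $\mathcal{C}$ and $\mathcal{H}$ separately, but the argument genuinely needs the product (or at least that $w_t \cdot c$ lies in the auditor class for each $c \in \mathcal{C}$ and each $w_t \in \mathcal{H}$). I would make this explicit at the start of the proof. A secondary (minor) subtlety is that the tower-rule step uses \Cref{asn-unconfoundedness} to equate $\mathrm{E}[Y\mid X, T=t]$ with $\mathrm{E}[Y(t)\mid X]$, and uses \Cref{asn-causalid} (overlap/consistency) for the likelihood ratio to be well-defined on the support of $P_{X_t}$; \Cref{asn-bounded} ensures the multi-accuracy output is well-defined. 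These are already in force by hypothesis, so no new work is needed.
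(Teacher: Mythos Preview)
Your proposal is correct and follows essentially the same approach the paper implicitly takes: interpret the auditor class as the product $\mathcal{F}=\mathcal{C}\times\mathcal{H}$, change measure from $Q$ to $P_{X_t}$ via $w_t$, use the tower rule to replace $\mu_t(X)$ by $Y$, and invoke the $\alpha$-multi-accuracy guarantee on each arm before summing. The paper does not give a standalone proof of this proposition, but the surrounding discussion (the definition of $w_t$ in \eqref{eqn-externalvalidity-reweighting} and the sentence ``interprets adversarial test functions as a product function class $\mathcal{F}=\mathcal{C}\times\mathcal{H}$'') makes clear this is exactly the intended argument; your explicit flagging of the product-class assumption is a useful clarification.
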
 
Because the guarantee holds for all functions $f \in \mathcal{F},$ it holds for complex subpopulations $c(X)$ and vacuous likelihood ratios with $h(X)=1$, as well as the inverse: complex $h(X)$ and vacuous subgroups (i.e. $c(x)=1$). Our assumption is that $\mathcal{F}$ is sufficiently well-specified to cover the product of these relevant functions, but we are generally agnostic as to the precise complexity of its constituent classes $\mathcal{C},\mathcal{H}.$ And, in practice, following the algorithmic implementation of MCBoost, we work with auditor function classes such as ridge regression, rather than direct products of subpopulations and other test functions.

Observe that although similar arguments apply, obtaining conditional guarantees for CATE estimation requires a richer test function class than for universal adaptability of the ATE alone. This illustrates that the case of learning CATE is indeed statistically harder than that of ``universal adaptability'' of the ATE. For CATE estimation, we need to choose a richer auditor function class than we would for ATE estimation.

\subsection{Observational and randomized data (Setting 2)}
\paragraph{(Meta)-Algorithm.}
In this setting, we learn confounded outcome regressions from the observational data. We use the smaller randomized controlled trial data as \textit{post-processing} datasets in MCBoost (the boosting paradigm for multi-calibrated and multi-accurate predictors). In \Cref{alg-mc-obsrct} we describe the meta-algorithm. 

\paragraph{Identification and estimation of CATE.}
Identification and estimation for the CATE follows by interpreting the auditing functions $c(X)\in\mathcal{C}$ as subpopulations. Achieving multi-accuracy on the RCT data hence identifies the CATE. That is, multi-accuracy assures us that 
$$\vert\E[ ({Y-\mu_t(X)})c(X)\mid T=t ]\vert \leq \alpha, \; \forall c(x) \in\mathcal{C}$$
and we can evaluate this criterion on the unconfounded RCT data. On the unconfounded RCT data, we indeed have that $\E[Y\mid X,T=t]=\E[Y(t)\mid X]$ so that the $T$-learner identifies CATE.

The intuition for why our meta-algorithm improves upon directly running the $T$-learner on the randomized data alone is that we can learn a low-variance, high-bias (due to unobserved confounding) estimate of the true outcome model $\E[Y(t)\mid X]$ by outcome modeling on the observational data to obtain $\E_{\obs}[Y\mid T=1,X]$. On the other hand, although randomized data is available, the finite-sample estimate of $\E_{\rct}[Y\mid T=1,X]$ can be high-variance (though unbiased) under \Cref{asn-obsrct-samplesize}. 
We do note that the analysis of the boosting algorithm in \cite{hebert2018multicalibration} is not tight enough to provably show faster convergence from warm-starting on the confounded regressions on the observational data, relative to multi-calibrating on the randomized data alone. However, we show benefits in later experiments.

\begin{algorithm}[t!]
\caption{Multi-accuracy for CATE estimation for calibrating CATE on small Randomized Controlled Trial data}\label{alg-mc-obsrct}
\begin{algorithmic}[1]
    \State{Input: $\mathcal{D}_{\text{obs}} =(X,T,Y)$ confounded observational data,  $\mathcal{D}_{\text{rct}} =(X,T,Y)$ unconfounded randomized data, $\mathcal{F}$ auditor function class, $\mathcal{G}$ function class for outcome functions }

   \State{Fit treatment-conditional outcome functions from the observational dataset:}
   $$
\hat{\mu}_t(x) \gets  \arg\min_{g(x)\in\mathcal{G}} \E_{\obs}[(g(X)-Y)^2\mid T=t], \text{ for }t \in \{0,1\}
$$
\State{For $t\in\{0,1\},$ use multi-accurate learning with $\mathcal{D}_{\rct}$ as validation set, i.e. 

$\tilde{\mu}_t(x) \gets \textrm{MCBoost}(\hat{\mu}_t(x),\mathcal{F},\alpha,\mathcal{D}_{\rct})$.} 
\State{Return $\tilde\tau(x) = \tilde{\mu}_1(x)-\tilde{\mu}_0(x)$}
\end{algorithmic}
\end{algorithm}

\paragraph{Identification of target-independent CATE.}
In complete analogy to the external shift setting, changing our interpretation of the target functions allows us to infer robustness to external shifts. Multi-accuracy ensures that, for all target covariate distributions $Q_X$ such that the likelihood ratios $w_t(x)\in\mathcal{H}, \; t\in\{0,1\}$, running \Cref{alg-mc-obsrct} with auditor function class $\mathcal{F} = \mathcal{C} \times \mathcal{H}$ results in a multi-accurate and deployment-shift robust CATE estimate: 
\begin{align*}  &
 \forall\; c(x) \in \mathcal{C}, \;\abs{
\E_Q[ \{\tilde{\tau}(X) -  (Y(1)-Y(0))\} c(X)  ]
}
\leq 2\alpha.
\end{align*}



\subsection{Extension to CATE pseudo-outcome regression}
A natural question given our work on the $T$-learner is whether we can provide similar guarantees for an estimation-improved CATE learner, since the $T$-learner generally does not enjoy any improved estimation properties in causal inference. The causal inference and machine learning literature has developed many improved orthogonal/semiparametrically efficient procedures such as (but not limited to) the $R$-learner \citep{nie2020rlearner}, $DR$-learner \citep{kennedy2023towards}, or other machine-learning adaptations \citep{wager2018forests,shalit2017estimating}.

Namely, some CATE estimation procedures give a pseudo-outcome $\psi(O;e,\mu)$, where $O$ denotes data tuples, i.e. $O=(X,T,Y)$, such that $\E[\psi(O;e,\mu)\mid X] = \tau(X)$. (It is designated as a pseudo-outcome because regressing upon it identifies the CATE or functional of interest, although it is not exactly an outcome itself). One such pseudo-outcome is the doubly-robust score. Pseudo-outcome regression of it as a CATE estimator was recently studied in \citet{semenova2021debiased,kennedy2020optimal}. 
\begin{equation}
\hat{\varphi}(O;\hat e,\hat\mu)=\frac{T-e_1(X)}{e_1(X)\{1-e_1(X)\}}\left( Y-\hat{\mu}_{T}(X) \right)+\hat{\mu}_{1}(X)-\hat{\mu}_{0}(X)
\label{eqn-DR-learner}
\end{equation}

Regressing upon pseudo-outcomes with favorable properties like orthogonality therefore confers these favorable properties to the estimated functional, such as improved statistical rates of convergence. Our arguments for external validity can naturally be extended for pseudo-outcome based CATE regression, so long as the pseudo-outcome's conditional expectation is the CATE function. 

We multi-accurately postprocess the pseudo-outcome regression step. That is, we learn $\tilde\tau$ such that: 
$$ \E[ \{ \hat{\varphi}(O;\hat e, \hat\mu)  - \tilde\tau(X)  \} f(x) ] \leq \epsilon, \forall f \in\mathcal{F} $$
Next, we instantiate such a procedure when the pseudo-outcome is the doubly-robust score. 

\paragraph{Multi-accurate DR-learner.}
We give the algorithm for obtaining a multi-accurate $DR$-learner estimate in \Cref{alg-multiaccurate-DR-learner}. To summarize: we do need four folds of data $\left(\mathcal{D}_{1 a}, \mathcal{D}_{1 b}, \mathcal{D}_{2}, \mathcal{D}_3 \right)$; the first three for sample-splitting of the nuisance estimates and pseudo-outcome evaluation and the last for validation/calibration for MCBoost. Estimate the nuisance functions on the first two folds $\mathcal{D}_{1 a}, \mathcal{D}_{1 b}$ and on  $\mathcal{D}_{2},$ evaluate the pseudo-outcome value $\hat{\varphi}(O;\hat e, \hat \mu)$ and regress $
\hat{\tau}(x)={\hat{\E}}_{n}[\hat{\varphi}(O; \hat e, \hat \mu) \mid X=x].
$ Finally, we conduct post-processing via multi-accurate learning upon the $DR$-learner estimate $\hat \tau$, to obtain a multi-accurate $\tilde{\tau}$.

Again we will interpret the input auditor function class $\mathcal{F}  = \mathcal{C} \times \mathcal{H}$ as a product function class of subgroup envelope functions $c\in\mathcal{C}$ and  likelihood ratios  $\in\mathcal{H}$. (Likelihood ratios are assumed to transport from the marginal distribution of $X$ to the new distribution). Then, (robust) identification of the predictions follows exactly as in \Cref{prop-externalvalidity}.

\begin{algorithm}[t!]
\caption{Multi-accurate DR-learner (\Cref{eqn-DR-learner}) for unknown covariate shift}\label{alg-multiaccurate-DR-learner}
\begin{algorithmic}[1]
\State Input: $\left(\mathcal{D}_{1 a}, \mathcal{D}_{1 b}, \mathcal{D}_{2}, \mathcal{D}_3 \right)$ four independent samples of $n$ observations of $O_{i}=\left(X_{i}, T_{i}, Y_{i}\right)$ ($\mathcal{D}_3^n$ can be smaller). Auditor function class $\mathcal{F}$, approximation parameter $\alpha.$
\State Learn nuisance functions: 
Estimate propensity scores $e_t$ on $D_{1 a}^{n}$.
Estimate outcomes $\left(\hat{\mu}_{0}, \hat{\mu}_{1}\right)$ on $\mathcal{D}_{1 b}$.
\State Pseudo-outcome regression: Construct the pseudo-outcome which takes as input observation $O=(X,A,Y)$ and nuisance functions $\hat e, \hat \mu$
$$
\hat{\varphi}(O;\hat e,\hat\mu)=\frac{T-e_1(X)}{e_1(X)\{1-e_1(X)\}}\left( Y-\hat{\mu}_{T}(X) \right)+\hat{\mu}_{1}(X)-\hat{\mu}_{0}(X)
$$
and regress it on covariates $X$ in the test sample $\mathcal{D}_{2}.$
\State Post-process pseudo-outcome regression: run $\textrm{MCBoost}(\hat\tau_{dr}, \mathcal{F},\alpha,  \mathcal{D}_3)$ to obtain multi-accurate $\tilde\tau$ such that
$$ \E[ \{ \hat{\varphi}(O;\hat e, \hat\mu)  - \tilde\tau(X)  \} f(X) ] \leq \epsilon, \forall f \in\mathcal{F} $$

\State Cross-fitting (optional)\footnote{ omitted for brevity because this may have to be further modified: original text reads, Repeat Step 1-2 twice, first using $\left(D_{1 b}^{n}, D_{2}^{n}\right)$ for nuisance training and $D_{1 a}^{n}$ as the test sample, and then using $\left(D_{1 a}^{n}, D_{2}^{n}\right)$ for training and $D_{1 b}^{n}$ as the test sample. Use the average of the resulting three estimators as a final estimate of $\tau$.

}
\end{algorithmic}
\end{algorithm}

\Cref{prop-uaisdr} establishes that under specification assumptions, the multi-accurate regression adjustment estimator is (robustly) equivalent to the doubly-robust estimator up to $\epsilon$ approximation error, connecting multi-calibration with doubly-robust estimation. This implies basic (robust) doubly-robust properties of the multi-accurate $T$-learner. We now strengthen this connection by showing that multi-accurate post-processing of the $T$-learner over a \textit{richer} function class (containing the true propensity score, and additional functions) implies that $\tilde\mu_t$ is also a \textit{multi-accurate} estimate of the $DR$-learner over the additional functions. 
\begin{proposition}\label{prop-uaTisuaDR}
  Suppose \Cref{asn-causalid,asn-unconfoundedness,asn-bounded}. Suppose that $\tilde{\mu}_t(x) \gets \textrm{MCBoost}(\hat\mu_t, \alpha, \mathcal{F}, \mathcal{D}_{post})$, over auditor function class $\mathcal{F}$ such that $\overline{\mathcal{F}}_t \subseteq \mathcal{F}$, where $ \overline{\mathcal{F}}_t = 
      \left\{ 1, \frac{\mathbb{I} [T=t]}{e_t(X)} \right\}
      \times \mathcal{C} \times \mathcal{H}$. That is, $\tilde{\mu}_1 - \tilde{\mu}_0$ comprise an $\alpha$-multi-accurate $T$ learner. Then
      $$
     \Big\vert { \max_{c h \in \mathcal{C} \times \mathcal{H} } \E[ \{ {\varphi}( e, \tilde{\mu}; X)  - \tau(X)  \} c(X)h(X) ] - \max_{c h \in \mathcal{C} \times \mathcal{H} }\E[ \{ \tilde{\tau}(X) - \tau(X)  \} c(X)h(X) ] } \Big\vert \leq 2\alpha 
      $$
      That is, the multi-accurate T-learner $\tilde{\mu}_1-\tilde{\mu}_0$ is, up to $2\alpha$ additive approximation error, a multi-accurate \textit{DR-learner} with outcome model $\tilde{\mu}$, post-processed  over the function class $\mathcal{C} \times \mathcal{H}.$
\end{proposition}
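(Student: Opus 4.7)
The plan is to directly compute the difference $\hat{\varphi}(O;e,\tilde\mu) - \tilde\tau(X)$, show that its expectation against any test function $c(X)h(X)$ is a sum of two inverse-propensity-weighted residuals of the form appearing in the multi-accuracy criterion, and then apply the assumed multi-accuracy of $\tilde\mu_0,\tilde\mu_1$ over $\overline{\mathcal{F}}_t$ to bound each term by $\alpha$. The final bound then follows from the elementary inequality $|\max_x f(x)-\max_x g(x)|\leq\max_x|f(x)-g(x)|$.

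First I would use $e_0(X)=1-e_1(X)$ together with the algebraic identity $\frac{T-e_1(X)}{e_1(X)(1-e_1(X))}=\frac{\mathbb{I}[T=1]}{e_1(X)}-\frac{\mathbb{I}[T=0]}{1-e_1(X)}$ to rewrite
\begin{equation*}
\hat\varphi(O;e,\tilde\mu)-\tilde\tau(X)=\frac{\mathbb{I}[T=1]}{e_1(X)}(Y-\tilde\mu_1(X))-\frac{\mathbb{I}[T=0]}{1-e_1(X)}(Y-\tilde\mu_0(X)).
\end{equation*}
Multiplying by $c(X)h(X)$ and taking expectations, the integrand splits into two terms, one for each value of $T$. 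Converting each unconditional expectation into its conditional form via $\mathrm{E}[\mathbb{I}[T=t]g(X,Y)]=P(T=t)\,\mathrm{E}[g(X,Y)\mid T=t]$ puts the expressions into exactly the form controlled by the conditional multi-accuracy criterion of \Cref{alg-mc-external-shift}.

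The key step is then to invoke the hypothesis $\overline{\mathcal{F}}_t\subseteq\mathcal{F}$: since $\{1,\mathbb{I}[T=t]/e_t(X)\}\times\mathcal{C}\times\mathcal{H}\subseteq\mathcal{F}$ and the indicator is almost-surely $1$ conditional on $T=t$, the function $c(X)h(X)/e_t(X)$ lies in the auditor class for $\tilde\mu_t$. The $\alpha$-multi-accuracy of $\tilde\mu_t$ conditional on $T=t$ therefore gives $\big|\mathrm{E}[(Y-\tilde\mu_t(X))c(X)h(X)/e_t(X)\mid T=t]\big|\leq\alpha$ for each $t$, and thus $\big|P(T=t)\,\mathrm{E}[\cdot\mid T=t]\big|\leq\alpha$. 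Summing the two terms yields $\big|\mathrm{E}[(\hat\varphi(O;e,\tilde\mu)-\tilde\tau(X))c(X)h(X)]\big|\leq2\alpha$ for every $(c,h)\in\mathcal{C}\times\mathcal{H}$.

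To conclude, add and subtract $\tau(X)$ so that $\mathrm{E}[(\hat\varphi-\tau)ch]-\mathrm{E}[(\tilde\tau-\tau)ch]=\mathrm{E}[(\hat\varphi-\tilde\tau)ch]$, take absolute values, and apply $|\max_{ch}A(ch)-\max_{ch}B(ch)|\leq\max_{ch}|A(ch)-B(ch)|$ to obtain the claimed $2\alpha$ bound. The only real obstacle is a bookkeeping one—keeping straight the conditional-vs-unconditional multi-accuracy guarantee and verifying that the composite test function $c(X)h(X)/e_t(X)$ is indeed available in the auditor class after the $\mathbb{I}[T=t]$ factor is absorbed by the conditioning; once that correspondence is made explicit, the proof is essentially a one-line algebraic rearrangement followed by two applications of the multi-accuracy definition.
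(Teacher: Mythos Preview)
Your proposal is correct and follows essentially the same route as the paper's own proof: expand $\hat\varphi(O;e,\tilde\mu)-\tilde\tau(X)$ into the two IPW-weighted residuals, bound each by $\alpha$ via multi-accuracy over the enlarged class $\overline{\mathcal F}_t$, and finish with $|\max A-\max B|\le\max|A-B|$. Your treatment is in fact more careful than the paper's about the conditional-versus-unconditional form of the multi-accuracy guarantee (and about the sign on the $T=0$ term), both of which the paper glosses over.
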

\begin{proof}
      Consider a function class richer than that needed for \Cref{prop-externalvalidity}. Define
      $$ 
      \overline{\mathcal{F}}_t = 
      \left\{ 1, \frac{\mathbb{I} [T=t]}{e_t(X)} \right\}
      \times \mathcal{C} \times \mathcal{H}
      $$
      
      Consider a multi-accurate $T$-learner $\tilde\tau=\tilde{\mu}_1-\tilde{\mu}_0$ where each $\tilde{\mu}_t$ is $\alpha$-multi-accurate over an auditor function class $\mathcal{F}_t$ so that $  \overline{\mathcal{F}}_t \subset \mathcal{F}$.

Note that
\begin{align*}  & 
    \max_{c\times h \in \{ \mathcal{C} \times \mathcal{H} \} } 
    \abs{ \E[\{  \{ 
    (Y - \tilde{\mu}_1(X)) \frac{\mathbb{I}[T=1]}{e_1(X)} +     (Y - \tilde{\mu}_0(X)) \frac{\mathbb{I}[T=0]}{e_0(X)}
\}  +
\tilde{\mu}_{1}(X)-\tilde{\mu}_{0}(X) 
    \} - \tau(X) \} c(X)h(X) ] 
    } 
    \\
& \qquad- 
\max_{c\times h \in \{ \mathcal{C} \times \mathcal{H} \} } \abs{ \E[\{ \{ \tilde{\mu}_{1}(X)-\tilde{\mu}_{0}(X) 
    \} - \tau(X) \} c(X)h(X)  ] } \\
    & \leq  \sum_{t \in \{0,1\}} 
        \max_{c\times h \in \{ \mathcal{C} \times \mathcal{H} \} } \E\left[ \left\{   (Y - \tilde{\mu}_t(X)) \frac{\mathbb{I}[T=t]}{e_t(X)} \right\}  c(X)h(X) \right] \\
        & \leq 2 \alpha 
\end{align*} 
by the triangle inequality and multi-accuracy of $\tilde\mu_t$ over the richer function class. 
\end{proof}

The interpretation is that post-processing a simple $T$-learner for multi-accuracy over a richer (yet well-specified) function class can approximate a $DR$-learner that was post-processed for multi-accuracy over a weaker function class. The population criterion for multi-accuracy confers some nonparametric robustness to bias over the specified test function class. Although this is a different estimation approach than causal machine learning estimates, we relate them formally here, and investigate empirically and thoroughly in \Cref{sec-experiments}. So, although multi-accurate post-processing of a $T$-learner appears on its face as a basic CATE estimator, in fact, the judicious choice of a richer function class for post-processing can approximate a more advanced estimator.  

Interestingly, concurrent with the preparation of this work, \cite{bruns2023augmented} study augmented balancing weights and find a certain target-independent property of the augmented estimator related to the universal adaptability of \cite{kim2022universal}. Studying connections further would be an interesting direction for future work.  

\section{Experiments}\label{sec-experiments}
We previously provided identification arguments and meta-algorithms for leveraging multi-accurate learning to learn CATE subject to unknown covariate shifts. To be sure, modern estimation of CATE prescribes nonparametric estimation that, in the infinite-data limit, is immune to external covariate shifts if CATE estimation recovers the Bayes-optimal predictor. Of course, real-world datasets are often smaller so that our methods can improve robustness in finite samples. To illustrate this, we conduct extensive empirical studies, testing our proposed multi-calibrated CATE estimation algorithms in comparison to other CATE learners in simulation scenarios that follow the previously introduced settings -- unknown deployment shifts (\Cref{setting-unknowndeploymentshift}) and observational data with RCT (\Cref{setting-observationalRCT}). 

\subsection{Simulations}
For both settings, we simulate data according to pre-specified propensity score functions, true outcome functions and external shift functions, with different degrees of complexity. For the external shift setting (simulation 1a and 1b), we assume access to training data from an observational study without unobserved confounding and a small auditing sample from the same distribution. The test data used for evaluation, however, is externally shifted by deliberately sampling with weights given by the shift function (and different shift intensities) from the original distribution. In this setting, we implement two simulations that differ in the complexity of the true CATE and propensity score functions (simulation 1a: linear CATE, beta confounding, simulation 1b: full linear CATE, logistic confounding). In the joint observational/RCT setting, we assume access to both a large observational training data set and a small RCT; and an external shift between both data sources. We implement two simulations in this setting that differ in whether both data sources (simulation 2a: confounded observational data and RCT) or only the observational training data (simulation 2b: total shift between observational data and RCT) are affected by unobserved confounding. The test data used for evaluation follows the covariate distribution of the observational training data. In simulation 2a, the problem is that of covariate shift alone; while in simulation 2b, the underlying conditional model (true $\E[Y(t)\mid X]$ also changes. Simulation 2b illustrates the usefulness of the framework to simultaneously handle a variety of shifts. We present the simulation framework and comparisons to a broader set of baseline methods in supplementary material (\ref{sec-simulations-setup}). 

\paragraph{Methods.} In both simulation settings, we use causal forests (CForest-OS) and random forest-based T-learner (T-learner-OS) and DR-learner (DR-learner-OS) trained in the observational training data as benchmark methods. T-learner-OS and DR-learner-OS also serve as the input for post-processing with MCBoost using ridge regression in the auditing data in simulation 1a and 1b (T-learner-MC-Ridge, DR-learner-MC-Ridge). In simulation 2a and 2b, post-processing is implemented with ridge regression-based auditing in the RCT data. In these simulations, we present causal forests trained in the RCT data (CForest-CT) as an additional baseline. To prevent overfitting with small auditing data, we regularized multi-accuracy boosting using small learning rates and a limited fixed number of boosting iterations (see Table \ref{tab:tuning1} in \ref{sec-simulations-setup}). 

Comparing to T-learner-OS establishes the robustness benefits of our methods. On the other hand, our do-no-harm property holds with respect to the MSE of the best-in-class T-learner. Comparing to CForest-OS allows us to assess robustness-efficiency tradeoffs. CForest-OS is a representative state-of-the-art method that leverages the causal structure and modifies the estimation procedure of random forests; it is a very strong comparison point, but also very data-hungry. In contrast, our post-processing approach does not modify the estimation procedure. An interesting direction for future work is to achieve the robust bias guarantees of multi-calibration with other variance-reduced CATE estimators. 

\paragraph{Results.} 

We evaluate the outlined methods with respect to MSE of the estimated CATE in the test data in \Cref{fig:simulation1} (external shift) and \Cref{fig:simulation2} (observational data with RCT), over different sizes of initial training datasets and different intensities of covariate shift. The results of simulation 1a (\Cref{fig:simulation1}a) highlight how post-processing robustifies the initial T-learner and consistently improves over T-learner-OS in scenarios with moderate and strong external shift. When the observational training data is small, the multi-accurate T-learner also outperforms causal forests in these scenarios. With small training data, we see similar improvements of the multi-accurate DR-learner over DR-learner-OS. As the training data size increases, the naive DR-learner becomes more competitive and post-processing yields smaller gains. 

In simulation 1b (\Cref{fig:simulation1}b), the more complex CATE function leads to higher MSE overall, while the previously observed pattern persists: The multi-accurate T-learner consistently improves over the naive T-learner, particularly under distribution shift. Our approach, DR-learner-MC-Ridge, is best in settings with strong unknown external shift and small dataset size: it then outperforms \textit{both} T-learner and causal forest. Larger dataset sizes permit estimation over richer function classes and methods become asymptotically equivalent. We compare our approach to additional baselines, including shift-reweighted causal forests, T- and DR-learner in the supplementary material \ref{sec-simushift}.

\Cref{fig:simulation2}a shows that in simulation 2a, learning from both the observational training data and a small RCT via multi-accuracy boosting is beneficial across scenarios. The multi-accurate T-learner and DR-learner considerably improve over T-learner-OS and DR-learner-OS and in particular T-learner-MC-Ridge is competitive with CForest-OS. The improvement from multi-accuracy boosting can also be observed when post-processing was conducted with externally shifted RCT data and is similarly prominent for both T- and DR-learner in the ``total shift'' setting (\Cref{fig:simulation2}b). In both simulations, learning directly in the RCT data (CForest-CT) is only a viable option in the absence of a shift in the covariate distribution in the evaluation distribution (i.e. when only deploying on the smaller RCT population), and can incur considerable error otherwise.  For results of additional CATE estimation techniques, see supplementary material \ref{sec-obsrct}. 

\subsection{WHI data application}
We next present a case study that draws on data from the Women’s Health Initiative (WHI) studies \citep{machens2003whi}. The WHI includes a large observational study and clinical trial data to investigate the effectiveness of hormone replacement therapy (HRT) in preventing the onset of chronic diseases. As the observational study has been suspected to suffer from various (unobserved) confounding phenomena \citep{kallus2018policy}, we study how utilizing both data sources in combination via multi-accuracy boosting compares to learning CATE from the observational or clinical trial data only. We focus on the effect of HRT treatment on systolic blood pressure and use age and ethnicity as covariates. Implementation details and results with an extended set of covariates is presented in the supplementary material \ref{sec:whi}. 

\paragraph{Methods and Results.} We subsample the observational data to train causal forests (CForest-OS) and initial T-learner (T-learner-OS) and DR-learner (DR-learner-OS). We further sample from the clinical trial data to create CT training data with different sample sizes to post-process the initial T- and DR-learner with MCBoost using ridge regression (T-learner-MC-Ridge, DR-learner-MC-Ridge). We  also train causal forests solely on the CT training data as an additional (strong) baseline (CForest-CT). Another sample from the CT data serves as the test set, with which we infer the (unobserved) ``true'' CATE using elastic net-based R-learner \citep{nie2020rlearner} and estimate the ATE as evaluation benchmarks. 

We evaluate bias of the estimated ATE and MSE of the estimated CATE in \Cref{fig:whi}. \Cref{fig:whi}a shows how post-processing an initial T- and DR-learner with CT data can reduce bias, even if the auditing data is small. We similarly see improvements in MSE when comparing the multi-accurate learner to T-learner-OS and DR-learner-OS in \Cref{fig:whi}b. T-learner-MC-Ridge additionally improves over CForest-OS. Training only in the CT data leads to ATE estimates with low bias, but the MSE of CForest-CT is not competitive when model training is based on CT data with small sample sizes. Further results are presented in supplementary material \ref{sec:whi}.

\begin{figure*}[t!]
\subfloat[Simulation 1a (linear CATE, beta confounding)]{\includegraphics[width=\textwidth]{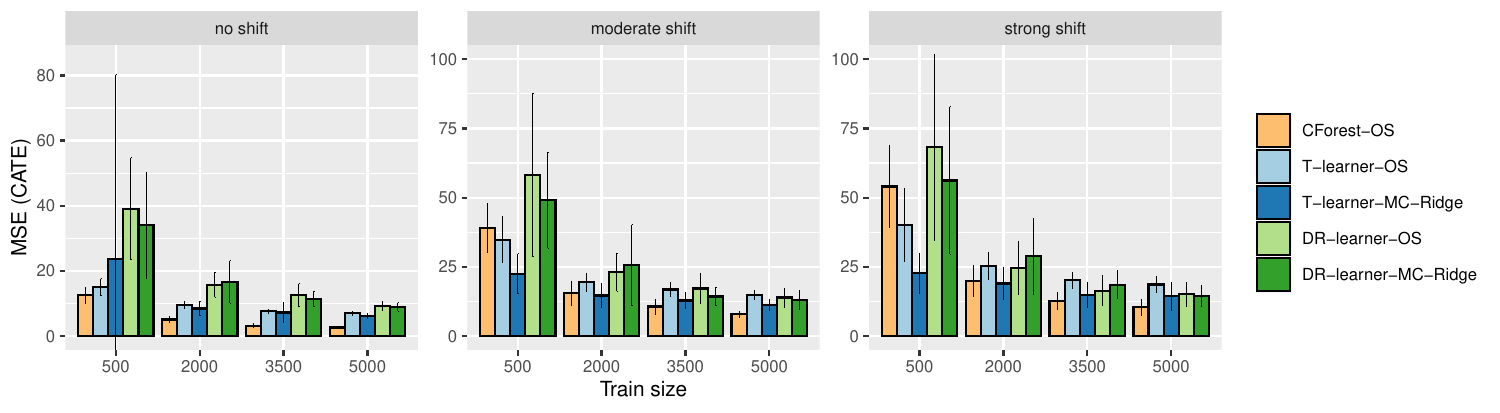}}\\
\subfloat[Simulation 1b (full linear CATE, logistic confounding)]{\includegraphics[width=\textwidth]{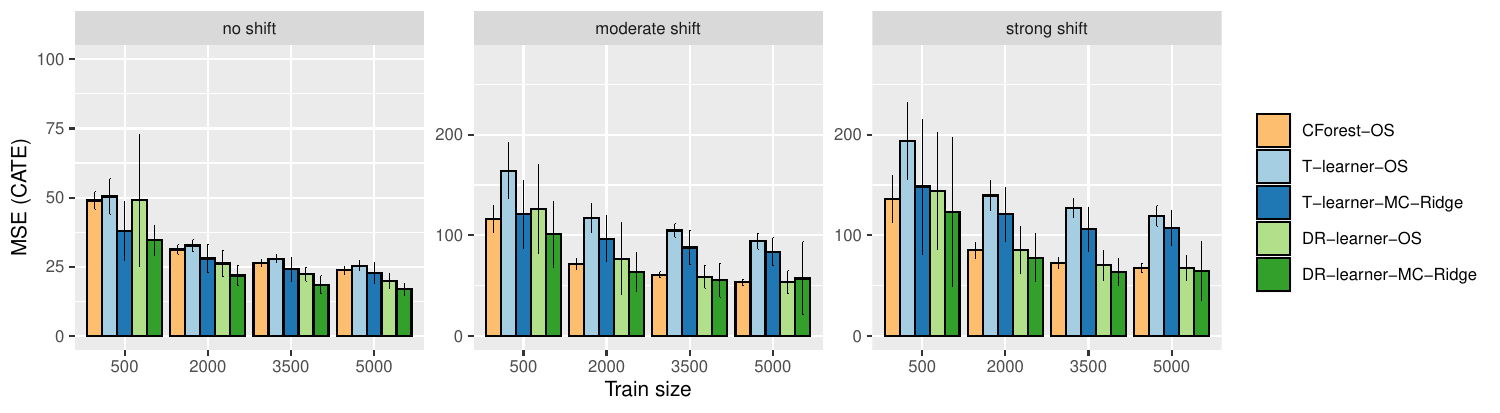}}
    \caption{Average MSE of CATE estimation by shift intensity and training set size for post-processed (multi-calibrated) T- and DR-learner and benchmark methods in simulation studies (external shift setting).}
    \label{fig:simulation1}
\end{figure*}

\begin{figure*}[t!]
\subfloat[Simulation 2a (confounded observational data and RCT)]{\includegraphics[width=\textwidth]{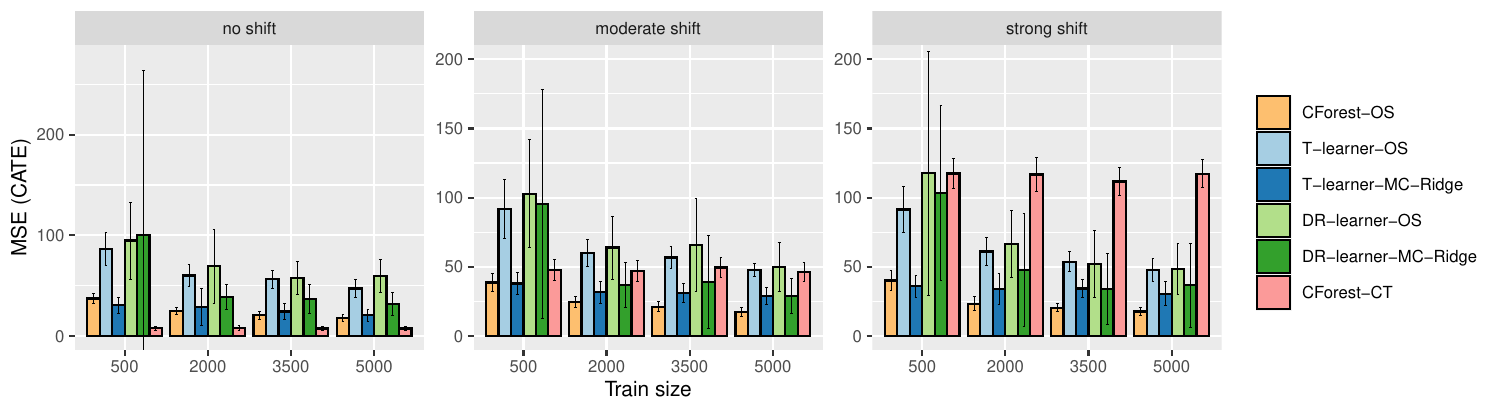}} \\
\subfloat[Simulation 2b (total shift between observational data and RCT)]{\includegraphics[width=\textwidth]{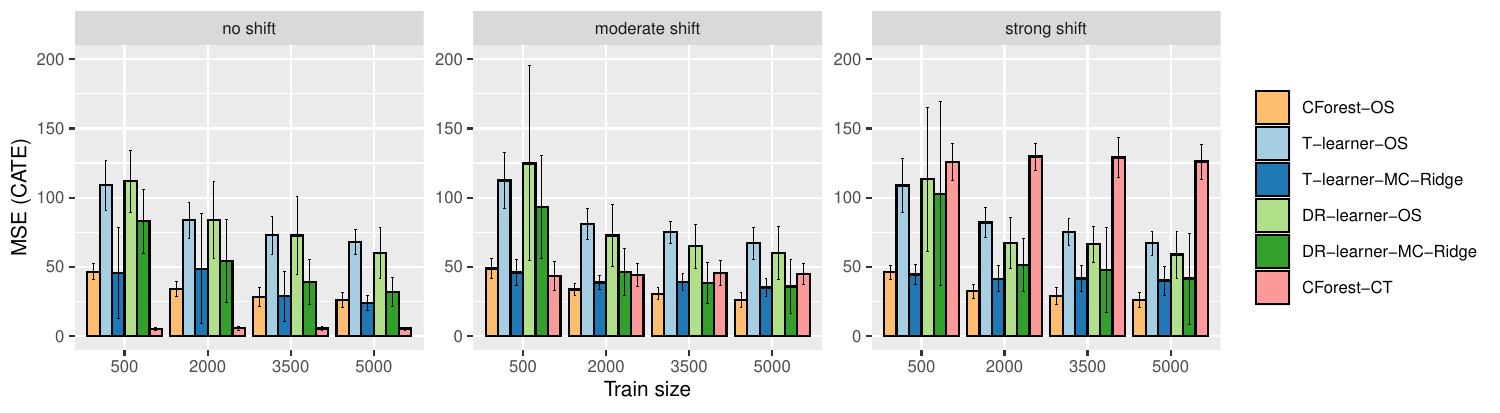}} \\
    \caption{Average MSE of CATE estimation by shift intensity and training set size for post-processed (multi-calibrated) T- and DR-learner and benchmark methods in simulation studies (observational data with RCT setting).}
    \label{fig:simulation2}
\end{figure*}

\begin{figure*}[t!]
\centering
\includegraphics[width=0.75\textwidth]{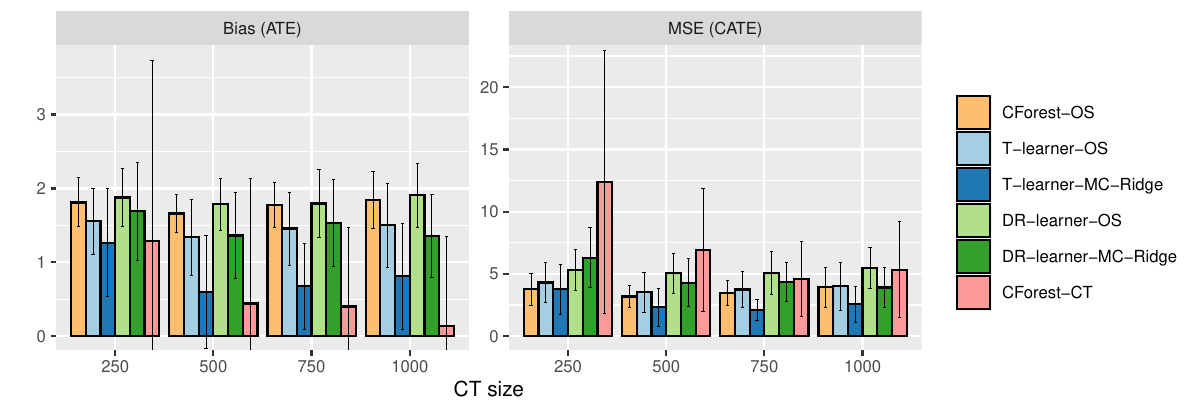}
    \caption{Average absolute bias and MSE by clinical trial sample size in WHI data application}
    \label{fig:whi}
\end{figure*}

\section{Related work: further discussion}\label{sec-relatedwork-further}
\vspace{-5pt}
A popular approach for handling unknown shifts is to enforce robustness against a family of covariate shifts (e.g. unknown shifts parametrized by unknown covariate shift functions) \citep{liu2014robust,wen2014robust,chen2016robust}. The goal is to find a robust hypothesis that maximizes the worst-case prediction risk (for example, squared error) evaluated with respect to unknown shifts within some class of covariate shifts. Parametrizations include distributionally robust optimization or linear basis functions. The work of \cite{greenfeld2020robust} is motivated differently and penalizes with a Hilbert-Schmidt Independence Criterion loss; they show this implies some robustness to covariate shift. 
While most of this work is in the generic prediction setting, recent work also assesses ATE under covariate shift via distributionally robust optimization \citep{subbaswamy2021evaluating}, use of the marginal sensitivity model for external shifts \citep{hatt2021generalizing}, or variational characterizations of coherent risk measures \citep{jeong2020robust}. Methodologically, some of this work is similar to work in causal inference that studies unobserved confounding under the lens of robust optimization adversarial likelihood ratios over some ambiguity set \citep{kallus2018interval,kallus2021minimax,kallus2020confounding,dorn2021doubly,zhao2019sensitivity,bruns2023robust,yadlowsky2018bounds,tan2006distributional,bruns2023robust}. This highlights the broad simultaneous interpretations of adversarial weight functions for handling unobserved confounding (in the generation of the data) in addition to robust adversarial covariate shifts (in the deployment of the predictor).

The approach based on multi-accuracy boosting, although it can be stated as a similar optimization problem in the abstract, differs from the previously mentioned works in a few important ways: (1) boosting couples the functional complexity of the post-processed predictor and the covariate shift function, and 
(2) under well-specification of the auditor function class and other conditions, boosting's asymptotic limit is the Bayes-optimal predictor, whereas robust optimization changes the asymptotic limit: typically to a coherent risk measure.
In this sense, we expect that approaches based on multi-accuracy are \textit{less conservative} within distribution.
To the best of our knowledge, the only prior discussion of connections between boosting-style algorithms and distributionally robust optimization is \cite{blanchet2019distributionally}.


Approaches based on multi-calibration or multi-accuracy inherently couple the specification of the (expanded) hypothesis class of multi-calibrated predictors along with the specification of covariate shift functions, i.e. the boosting-type algorithm returns a predictor in the \textit{sum class} of the original predictor and the classes of shifts. Approaches for robust covariate shift, to reduce the complexity of the adversary, require additional moment constraints satisfied by valid likelihood ratios, i.e specifying a sharp set $\mathcal{C}$ of \textit{only} valid covariate shifts. In robust optimization-based approaches to covariate shift, the hypothesis class and class of weight functions can be independently varied. But for multi-calibration, restricting the auditor function classes \textit{also} simultaneously reduces the functional complexity of the hypothesis class of predictors. 
Distributionally robust objectives are equivalent to variance regularization or control of the tail risk, which couples statistically more difficult control of tail behavior with the control of ambiguous shift functions. Another important point of difference is that the Bayes-optimal predictor satisfies the multi-accuracy criterion, while a Bayes-optimal predictor with heteroskedastic noise may not satisfy desiderata of uniform performance implied by distributional robustness. For example, \cite[Example 2]{duchi2018learning} discusses the example of linear well-specified models where the distributionally robust predictor coincides with the Bayes-optimal predictor; but in cases of model misspecification/heteroskedastic noise, this may not be the case. We leave a finer-grained comparison for alternative work.

Our discussion of the hybrid observational and randomized setting is more to highlight an ``off-the-shelf'' application of multi-accuracy, rather than the tightest analysis in this setting. Other works use more structure of this hybrid setting, or more heavily modify algorithms (i.e. learning shared representations) \citep{hatt2021generalizing,yang2020elastic,kallus2018removing}; analogous adaptations with multi-accuracy are interesting directions for future work. See also \cite{bareinboim2013general} for a survey on transportability and external validity, \cite{colnet2020causal} for a survey on learning from observational and randomized data. In this literature, \cite{wu2023transfer,wu2022integrative} among others consider reweighting towards shifted distributions. Other approaches include \citet{cheng2023enhancing}.


\section{Conclusion}
In this work, we connect multi-accurate learning and conditional average treatment effect (CATE) estimation and show how off-the-shelf multi-accurate learning can be used for CATE estimation that is robust to unknown covariate shift. Although we empirically compare to more ``state of the art'' causal machine learning, these methods were designed for different purposes. Important directions for future work include ``best-of-both-worlds'' guarantees on \textit{both}
robustness and efficiency by improving variance reduction properties of Multi-CATE. A finer-grained analysis of the statistical implications of algorithmic implementations of boosting could also be relevant, in addition to improving hyperparameter tuning in the causal setting. In our work, we focus on establishing robustness properties. 

\paragraph{Acknowledgments}
We thank the Simons Institute and the Simons Institute Program on Causality, where much of the work of this paper was conducted. AZ thanks the Foundations of Data Science Institute. 
\bibliography{multi-cate-adaptation,other-refs}

\clearpage
\onecolumn
\appendix

\section{Notation summary}

\begin{table}[h]
\centering
\begin{tabular}{ll}
\hline
Notation/Object & Description \\
\hline
$X$ & Covariates \\
$T$ & Treatment, $T \in {0,1}$ \\
$Y(T)$ & Potential outcomes \\
$Y$ & Observed outcome, $Y=Y(T)$ \\
$U$ & Unobserved confounders \\
$e_t(x)$ & Propensity score, $P(T=t \mid X=x)$ \\
$\mu_t(x)$ & Outcome regression, $\mathbb{E}[Y \mid X=x, T=t]$ \\
$\tau(x)$ & Conditional average treatment effect (CATE), $\mathbb{E}[Y(1) - Y(0) \mid X=x]$ \\
$\mathcal{C}$ & Class of subsets of $X$ \\
$\mathcal{F}, \mathcal{G}, \mathcal{H}$ & Function classes \\
$\alpha$ & Multi-accuracy parameter \\
$\mathcal{D}_{\text{obs}}$ & Observational dataset \\
$\mathcal{D}_{\text{rct}}$ & Randomized controlled trial dataset \\
$\hat{\tau}(x)$ & Estimated CATE function \\
$\tilde{\tau}(x)$ & Multi-accurate/calibrated CATE estimator \\
\hline
\end{tabular}
\caption{Notation used in the paper.}
\label{tab:notation}
\end{table}

\section{Details on algorithms}

For completeness we describe the MCBoost algorithm for multi-calibration. See \citep{hebert2018multicalibration,kim2019multiaccuracy,kim2022universal,pfisterer2021mcboost} for more details, including theoretical analysis and implementation details. We describe the algorithm for a generic $(x,y)$ dataset (without reference to causal inference). See \citep{kim2019multiaccuracy} for more details on the variant that achieves multi-accuracy (although ideas at a high level are similar.) 

The key inputs include a regression algorithm for the boosting procedure, approximation parameter $\alpha$ which is a stopping condition (although in practice a finite limit on the number of iterations is used), and a validation/calibration set. When developing methods for \Cref{setting-unknowndeploymentshift} (unknown covariate shifts), the calibration and validation set are drawn from the observational distribution. Our method for \Cref{setting-observationalRCT} uses the (assumed small) RCT data as calibration/validations sets.

\newcommand{\set}[1]{\left\{ #1 \right\}}
\newcommand{\X}{\mathcal{X}}

\begin{algorithm}[h]
\caption{\label{alg:mcboost} \textsc{MCBoost}}
\hrulefill

\textbf{Given:}
\begin{algorithmic}
\State $p_0:\X \to [0,1]$\hfill\texttt{// initial predictor}
\State $\mathcal A:(\X \times [-1,1])^m \to \mathcal C$ \hfill\texttt{// regression algorithm for functions in C}
\State $\alpha > 0$\hfill\texttt{// approximation parameter}
\State $S = \set{(x_1,y_1), (x_2,y_2),\ldots,(x_m,y_m)}$\hfill\texttt{// calibration set}
\State $V = \set{(x_1,y_1), (x_2,y_2),\ldots,(x_v,y_v)}$\hfill\texttt{// validation set}
\end{algorithmic}
\textbf{Returns:}
\begin{algorithmic}
\State $(\mathcal C,\alpha)$-multi-calibrated predictor $\tilde{\mu}$
\item[]
\State Scale outcomes $y$ to $[0,1]$
\end{algorithmic}

\textbf{Repeat:} $k = 0,1,2,\ldots$
\begin{algorithmic}
\State $S_k \gets \{(x_1,y_1-p_k(x_1)),\ldots,(x_m,y_m-p_k(x_m))\}$\hfill\texttt{// update labels in calibration set}
\State $c \gets \mathcal{A}(S_k)$\hfill\texttt{// regression over St}
\State $\Delta_c \gets \frac{1}{\vert{V}\vert} \sum\limits_{(x,y) \in V} c(x) \cdot (y - p_k(x))$\hfill\texttt{// compute miscalibration over V, validation set}
\If{$\mid{\Delta_c}\mid > \alpha$}
\State $p_{k+1}(x) \propto e^{-\Delta_c \cdot c(x) / 2}\cdot p_k(x)$\hfill\texttt{// multiplicative weights update}
\Else
\Return $\tilde{p} = p_k$ {, rescaling the outcomes} \hfill\texttt{// return when miscalibration small}
\EndIf
\end{algorithmic}
\hrulefill
\end{algorithm}

\clearpage

\section{Proofs} 
\begin{proof}[Proof of \Cref{prop-uaisdr}]

(1a) Suppose $\hat\mu_1(x), \hat\mu_0(x)$ are consistent estimators and $e \in \mathcal{H}.$ Then \Cref{eqn-robustate-identification} immediately implies 
$\epsilon$-consistency. 

(1b) Suppose $\hat\mu_1(x), \hat\mu_0(x)$ are consistent estimators but $e \notin \mathcal{H}.$ If $\hat\mu_1(x), \hat\mu_0(x)$ are consistent, they will asymptotically satisfy the multi-calibrated or multi-accurate criterion. See \cite{hebert2018multicalibration} for related do-no-harm properties in this setting. 
Let $\mu^*_t=\E[Y\mid X,T=t]$ denote the true conditional expectation; it satisfies $\mu^*_t \in \arg\min \E[(Y-\mu_t(X))^2 \mid T=t]$ and that $ \E[Y-\mu^*_t(X) \mid T=1, X] = 0,\; a.s.$
Hence $\forall f(X) \in \mathcal{F},$
$E[(Y-\mu^*_t(X))f(X) \mid T=1] = 0.$
Therefore $\mu^*_t(X)$ is feasible.
Since the additive iterates of boosting approaches like MCBoost for multi-accuracy are commutative, \citep{gopalan2022loss} characterizes multi-accuracy via a global optimization of squared loss over additive basis functions of $\mathcal{H}$. 
Since $\mu^*_t(X)$ is a optimal solution for the unconstrained problem, and feasible for the constrained problem, it is also optimal for the constrained problem. 

(2) Suppose any of $\hat\mu_1, \hat\mu_0$ are not consistent estimators and $e_1^{-1}, (1-e_1(X))^{-1} \in \mathcal{H}.$ The implications of multi-accuracy with respect to $\mathcal{H}$ relate to the classical doubly-robust estimator: 
    \begin{equation}
    \textstyle \left
    \vert  
    \sum_{t\in\{0,1\}}
    {\E\left[ \frac{\mathbb{I}[T=t]}{e_t(X)} (Y- \tmu_t(X)) +  \tmu_t(X) \right]}- \E\left[ \tmu_t(X)\right]
        \right\vert \leq 2\alpha  \label{eqn-mc-aipw} 
    \end{equation}
    By properties of AIPW, the left hand term is consistent due to model double-robustness. By multi-accuracy, the CATE estimator is $2\alpha$-close to AIPW under well-specification. 
    
    (3) This follows via the same arguments given in (1b).

\end{proof}

\clearpage

\section{Experiments}

\subsection{Data and Software}

We provide code of the simulation studies and the real data application for replication purposes in the following public OSF repository:

\url{https://osf.io/zxjvw/?view_only=a622c123414e4be6a218f121ded191d3}

Data preparations, model training and evaluation are conducted in \texttt{R} (3.6.3) \citep{R} using the packages \texttt{ranger} (0.13.1) \citep{ranger}, \texttt{grf} (2.0.2) \citep{grf} and \texttt{rlearner} (1.1.0) \citep{nie2020rlearner}. The simulation studies heavily draw on the causal experiment simulator of the \texttt{causalToolbox} (0.0.2.000) \citep{kuenzel2019metalearner} package.

In all experiments, (initial) T-learner and DR-learner are post-processed using the MCBoost algorithm as implemented in the \texttt{mcboost} (0.4.2) \citep{pfisterer2021mcboost} package. More concretely, we make use of boosting for degree-2 multi-calibration, a (slightly) stronger notion than multi-accuracy, but computationally less demanding than full multi-calibration \cite{gopalan2022low}. The hyperparameter settings used for post-processing are listed as part of the following detailed presentation of the experiments (Table \ref{tab:tuning1} and \ref{tab:tuning2}). 

\subsection{Simulations}\label{sec-simulations-setup}

\subsubsection{Setup}

\paragraph{Data} We follow the simulation setup of \cite{kuenzel2019metalearner} in designing our experiments. Each of the following simulations is initialized by specifying the following components: Propensity score $e$, outcome functions $\mu^*_0$ and $\mu^*_1$, and external shift function $z$. We then simulate the following components:

\begin{itemize}
    
\item A 10-dimensional feature vector,

$$X_1, \dots, X_{10} \sim \mathcal{N}(0,\Sigma) $$

with modest correlations in $\Sigma$ (governed by \texttt{alpha} of the \texttt{vine} method \citep{vine}, which is set to 0.1). 


\item Potential outcomes are simulated according to the pre-specified covariate-conditional outcome functions $\mu^*_0$ and $\mu^*_1$,

$$Y_i(0) = \mu^*_0(x) + \varepsilon_i$$
$$Y_i(1) = \mu^*_1(x) + \varepsilon_i$$

where $\varepsilon_i \sim \mathcal{N}(0,1)$.

\item Treatment assignment is simulated given the pre-specified propensity score $e$,

$$T_i \sim \text{Bern}(e_1(X))$$

and the observed outcome is set to $Y_i=Y(T_i)$.







\item A set of sampling weights is constructed given the external shift function $z$ (and shift intensity $s$),

$$w^{(s)}(x)=\left(\frac{z(x)}{1-z(x)} \right)^s$$

and used to simulate externally shifted observational data $\mathcal{D}_{os-shift}$ or shifted randomized control trial (RCT) data, $\mathcal{D}_{rct}$ (where $e_1(X) = 0.5$), depending on the simulation scenario.

\end{itemize}

We vary the shift intensity $s \in \{0, 0.25, \dots, 2\}$ and training set size $\{500, 2000, 3500, 5000\}$, and run experiments for each combination 25 times. The size of the (audit/RCT) data used for multi-calibration boosting (500 observations) and the (test) data used for model evaluation (5000 observations) is fixed.

\paragraph{Evaluation} We compare and evaluate various techniques with respect to bias in ATE and MSE in CATE estimation. Bias is assessed based on the true ATE and the average of the estimated $\hat{\tau}(x)$ in the test data.

$$ \text{Bias} = \tau - \frac{1}{n}\sum \hat{\tau}(x)$$

We further evaluate the true CATE $\tau(x)$ against $\hat{\tau}(x)$ of the respective CATE estimation method. 

$$ \text{MSE} = \frac{1}{n}\sum(\tau(x) - \hat{\tau}(x))^2$$ 

{To quantify the external shift, we compute the multivariate Kullback-Leibler (KL) divergence \citep{kullback1951information} between the covariate distribution ($x_1 \dots x_{10}$) of the training set (simulation 1a, 1b)/ the RCT (simulation 2a, 2b) ($\mathcal{N}^{1}_{p}$) and the test set ($\mathcal{N}^{0}_{p}$).}

{
   $$
     \mathrm{I}_{KL}(\mathcal{N}^{0}_{p} \| \mathcal{N}^{1}_{p}) =
      \frac{1}{2}\left\{\mathrm{tr}(\mathbf{\Omega}_{1}\mathbf{\Sigma}_{0})
      + (\boldsymbol{\mu}_{1} - \boldsymbol{\mu}_{0})^{\mathrm{T}}
      \mathbf{\Omega}_{1}(\boldsymbol{\mu}_{1} - \boldsymbol{\mu}_{0}) - p
      - \ln|\mathbf{\Sigma}_{0}| + \ln|\mathbf{\Sigma}_{1}| \right\},
   $$ }
{where ${\mathbf{\Omega} = \mathbf{\Sigma}^{-1}}$.}

\paragraph{MCBoost} Multi-calibration boosting is conducted using the hyperparameter settings listed in Table \ref{tab:tuning1}. {Hyperparameter settings of the baseline CATE learner are shown in Table \ref{tab:baseline1}.}

\begin{table}[!htbp]
\centering
\caption{Hyperparameter settings for post-processing using MCBoost. Default settings are used for parameters not listed.}
\subfloat[T-learner MC]{
\begin{tabular}{l | l | l | l}
\hline
Method & Implementation & Hyperparameter & Value \\
\hline
Ridge & \texttt{mcboost}     & \texttt{max\_iter} & \texttt{5} \\ 
 &   &  \texttt{alpha}          & \texttt{1e-06} \\
 &   &  \texttt{eta}            & \texttt{0.5} \\ 
 &   &  \texttt{weight\_degree} & \texttt{2} \\
 \cline{2-4}
 &  \texttt{glmnet}             &  \texttt{alpha} & \texttt{0} \\
 &                              &  \texttt{s} & \texttt{1} \\
\hline
Tree & \texttt{mcboost}      & \texttt{max\_iter} & \texttt{5} \\
 &   &  \texttt{alpha}          & \texttt{1e-06} \\
 &   &  \texttt{eta}            & \texttt{0.5} \\ 
 &   &  \texttt{weight\_degree} & \texttt{2} \\
\cline{2-4}
 &  \texttt{rpart}              &  \texttt{maxdepth} & \texttt{3} \\
 \hline
\end{tabular}} \\
\subfloat[DR-learner MC]{
\begin{tabular}{l | l | l | l}
\hline
Method & Implementation & Hyperparameter & Value \\
\hline
Ridge & \texttt{mcboost}     & \texttt{max\_iter} & \texttt{5} \\ 
 &   &  \texttt{alpha}          & \texttt{1e-06} \\
 &   &  \texttt{eta}            & \texttt{0.1} \\ 
 &   &  \texttt{weight\_degree} & \texttt{2} \\
 \cline{2-4}
 &  \texttt{glmnet}             &  \texttt{alpha} & \texttt{0} \\
 &                              &  \texttt{s} & \texttt{1} \\
\hline
Tree & \texttt{mcboost}      & \texttt{max\_iter} & \texttt{5} \\
 &   &  \texttt{alpha}          & \texttt{1e-06} \\
 &   &  \texttt{eta}            & \texttt{0.1} \\ 
 &   &  \texttt{weight\_degree} & \texttt{2} \\
\cline{2-4}
 &  \texttt{rpart}              &  \texttt{maxdepth} & \texttt{3} \\
 \hline
\multicolumn{4}{l}{\footnotesize Note: \texttt{eta = 0.01} in simulation 2a and 2b (\ref{para:simu2a}).}
\end{tabular}}
\label{tab:tuning1}
\end{table}

\begin{table}[!htbp]
\centering
\caption{{Hyperparameter settings of (baseline) CATE learners. Default settings are used for parameters not listed.}}
\begin{tabular}{l | l | l | l}
\hline
Method & Implementation & Hyperparameter & Value \\
\hline
CForest & \texttt{grf}     & \texttt{num.trees} & \texttt{2000} \\
 &                          &  \texttt{mtry}          & \texttt{sqrt(p)+20} \\
 &                          &  \texttt{sample.fraction}          & \texttt{0.5} \\
 &                          &  \texttt{honesty.fraction}            & \texttt{0.5} \\ 
 &                          &  \texttt{min.node.size}            & \texttt{5} \\ 
\hline
S-,T-, & \texttt{ranger}      & \texttt{num.trees} & \texttt{500} \\
DR-learner &                  &  \texttt{mtry}          & \texttt{sqrt(p)} \\
 &                          &  \texttt{min.node.size}            & \texttt{5} \\ 
 &                          &  \texttt{replace} & \texttt{TRUE} \\
 &                          &  \texttt{sample.fraction} & \texttt{1} \\
 \hline
\end{tabular}
\label{tab:baseline1}
\end{table}

\subsubsection{External Shift} \label{sec-simushift}

In this initial setting, we simulate data that emulates an observational study with (observable) confounding. We additionally consider an external shift between the observational data that is available for initial model training, $\mathcal{D}_{os}$, and the distribution of the test (or deployment) data, $\mathcal{D}_{os-shift}$. We further assume access to an auditing sample from the original training distribution. The task is to estimate the true CATE function as evaluated the shifted test set, using models that either learned in the observational training data only or made additional use of the auditing data.
$$ (X_{train}, T_{train}, Y_{train}) \sim \mathcal{D}_{os},
(X_{audit}, T_{audit}, Y_{audit}) \sim \mathcal{D}_{os},
(X_{test}, T_{test}, Y_{test}) \sim \mathcal{D}_{os-shift} $$

\paragraph{Simulation 1a (external shift, linear CATE, beta confounding)} \label{para:simu1a}

$$\mu^*_0(x)= \begin{cases}
    x^\prime \beta_l  & \text{if } x_{10} < -0.4\\
    x^\prime \beta_m & \text{if } -0.4 \leq x_{10} \leq 0.4 \\
    x^\prime \beta_u & \text{if } 0.4 < x_{10}
\end{cases} $$
$$ \text{with} \ \beta_l \sim \text{unif}([-5, 5]^{10}), \beta_m \sim \text{unif}([-5, 5]^{10}), \beta_u \sim \text{unif}([-5, 5]^{10}) $$
$$\mu^*_1(x)=\mu_0^*(x)+ 3x_1+5x_2 $$
$$ e_1(X)=\frac{1}{4}(1+\mathcal{B}(x_1,2,4))$$
$$ \text{where} \ \mathcal{B}(x_1,2,4) \ \text{is the beta distribution with parameters 2 and 4.} $$
$$ z(x)=\frac{1}{1+e^{(-(x_1 - 0.5) - 2(x_2 - 0.5) -0.5(x_1*x_2 - 0.5))}} $$

\paragraph{CATE estimation}

We use the following methods for estimating the CATE based on the observational training data. Shift-reweighting is conducted by training a logistic regression to predict sample membership in the observational training versus shifted test data and calculating propensity weights $\frac{1 - \hat{p}}{\hat{p}}$ based on the predicted probability of membership in the training data $\hat{p}$. 

\begin{itemize}
\item (\textbf{CForest-OS}) Causal forest \citep{wager2018forests} trained in the observational training data. 
\item (\textbf{CForest-wOS}) Causal forest trained in the shift-reweighted observational training data. 
\item (\textbf{S-learner-OS}) S-learner using random forest trained in the observational training data. 
\item (\textbf{S-learner-wOS}) S-learner using random forest trained in the shift-reweighted observational training data. 
\item (\textbf{DR-learner-OS}) DR-learner \citep{kennedy2023towards} using random forest trained in the observational training data. 
\item (\textbf{T-learner-OS}) T-learner using random forest trained in the observational training data. 
\item (\textbf{T-learner-wOS}) T-learner using random forest trained in the shift-reweighted observational training data. 
\end{itemize}

We further estimate DR-learner and T-learner using multi-calibration boosting with a small set of auditing data.

\begin{itemize}
\item (\textbf{DR-learner-MC-Ridge}) DR-learner using random forest in the observational training data is post-processed with MCBoost using ridge regression in the auditing data. 
\item (\textbf{DR-learner-MC-Tree}) DR-learner using random forest in the observational training data is post-processed with MCBoost using decision trees in the auditing data. 
\item (\textbf{T-learner-MC-Ridge}) T-learner using random forest in the observational training data is post-processed with MCBoost using ridge regression in the auditing data. 
\item (\textbf{T-learner-MC-Tree}) T-learner using random forest in the observational training data is post-processed with MCBoost using decision trees in the auditing data. 
\end{itemize}

\paragraph{Evaluation}

We evaluate bias in ATE and MSE in CATE estimation in the externally shifted test data.

\paragraph{Results}

We show the bias of the estimated average treatment effect (ATE) by shift intensity (column panels) and training set size (row panels) for each CATE estimation method in Figure \ref{fig:bias_s1b_1a} (see also Table \ref{tab:bias_s1b_1a} {which includes quantifications of the distribution shifts using KL divergence}). The results show that in the present setting all methods are able to produce unbiased estimates of the ATE in the non-shifted test data (first column). Introducing an external shift (second and third column), however, incurs bias across all methods with the shift-reweighted causal forest and shift-reweighted T-learner performing best. The ridge regression-based multi-accurate DR- and T-learner perform best among the shift-blind methods that had no access to the shifted test distribution.

We show the corresponding results for the MSE of the CATE estimation by shift intensity and training set size in Figure \ref{fig:mse_s1b_1a} (and Table \ref{tab:mse_s1b_1a}). In the present setting, causal forest achieve the smallest MSE in the non-shifted test data as well as in settings with large initial training data (first column and third and last row). With increasing shift, however, ridge-based multi-accurate T-learner perform best in settings with small to moderately sized training data (upper right quadrant).

\begin{figure}[h!]
\centering
\includegraphics[scale=0.7]{./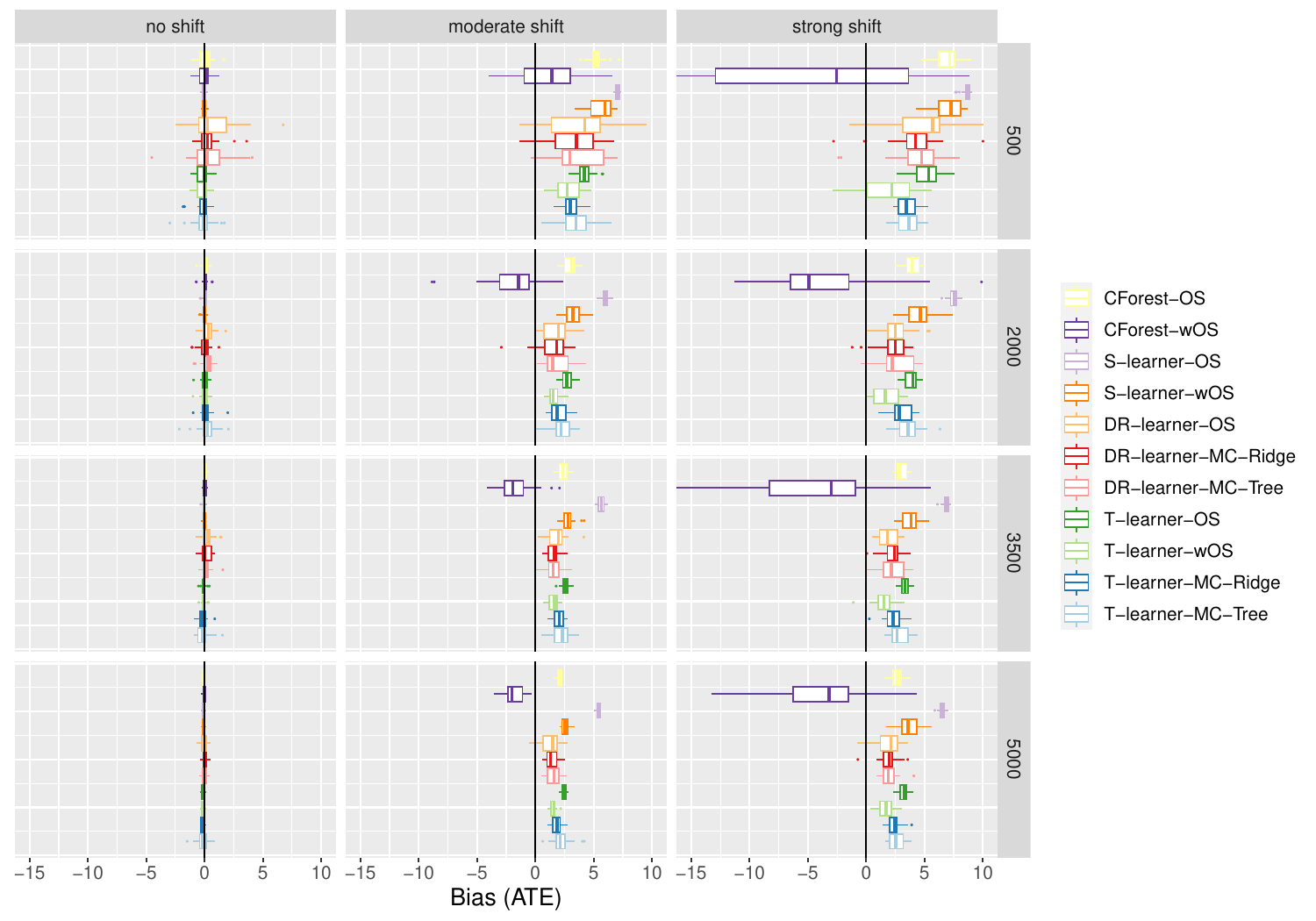}
\caption{Bias of ATE estimation by shift intensity and training set size for different CATE estimation methods (\nameref{para:simu1a}). The distribution of bias scores over simulation runs is shown. Given an external shift between training and test data, DR-learner-MC-Ridge and T-learner-MC-Ridge perform best among the shift-blind methods that had no access to the shifted target distribution.}
\label{fig:bias_s1b_1a}
\end{figure}

\begin{figure}[h!]
\centering
\includegraphics[scale=0.7]{./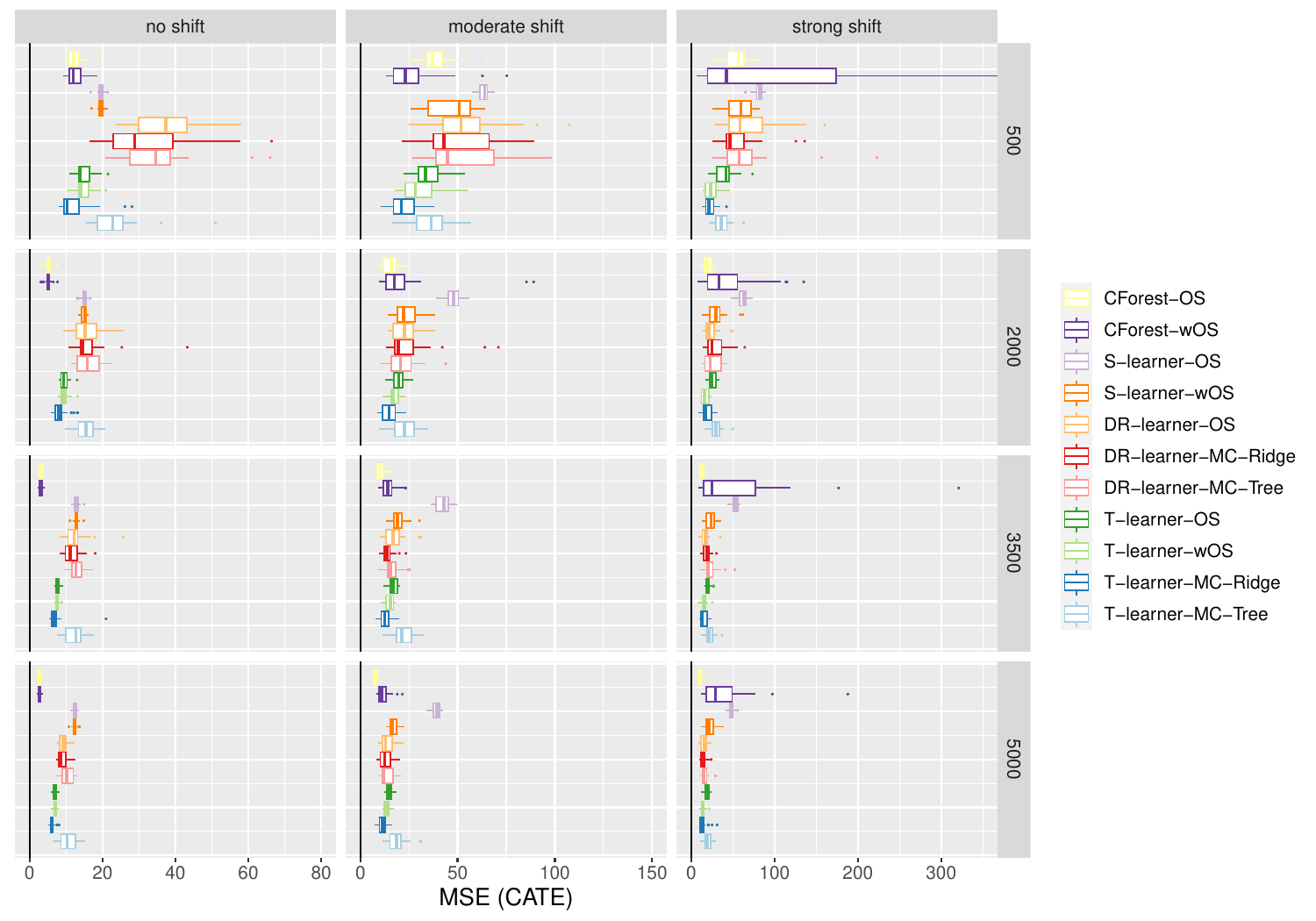}
\caption{MSE of CATE estimation by shift intensity and training set size for different estimation methods (\nameref{para:simu1a}). The distribution of MSE scores over simulation runs is shown. T-learner-MC-Ridge performs best in settings with small to moderately sized training data and shifted test data.}
\label{fig:mse_s1b_1a}
\end{figure}

\clearpage

\paragraph{Simulation 1b (external shift, full linear CATE, logistic confounding)} \label{para:simu1b}

$$\mu^*_0(x)=3x_1+5x_2$$
$$\mu^*_1(x)=\mu_0^*(x)+ x^{\prime}\beta, with \ \beta \sim \text{unif}([-5, 5]^{10}) $$
$$ e_1(X)=\frac{1}{1+e^{(-2 - 2(x_1 - 0.5) - 1 (x_2 - 0.5))}} $$
$$ z(x)=\frac{1}{1+e^{(2(x_2 - 0.5) + (x_3 - 0.5))}} $$

\paragraph{Evaluation}

Bias in ATE and MSE in CATE estimation is evaluated in the externally shifted test data.

\paragraph{Results}

The results for bias of the ATE estimation in Figure \ref{fig:bias_s1b_1b} (and Table \ref{tab:bias_s1b_1b} {including KL divergence}) show that in absence of external shift (first column), causal forest-based estimators perform best and are able to achieve unbiasedness. Introducing an external shift between the observational training and test data (second and third columns) amplifies bias such that only the shift-reweighted causal forest is able to approximate the true ATE on average, given sufficient training data. The ridge regression-based multi-accurate DR-learner improve over the initial DR-learner and are competitive with shift-reweighted causal forest in settings with small initial training data and strong shift. Again note that, in contrast to the shift-reweighted methods, the multi-accurate learner had no access to the shifted test distribution during model training.  

Figure \ref{fig:mse_s1b_1b} (and Table \ref{tab:mse_s1b_1b}) shows results for the MSE of the estimated CATE. The ridge-based multi-accurate DR-learner consistently improve over the initial DR-learner and achieve the lowest MSE among all methods in the initial, non-shifted setting. With increasing shift, ridge-based multi-accurate and shift-reweighted learner perform well in settings with small to moderately sized training data. Causal forest is competitive in all settings and particularly as the training set size increases. 

\begin{figure}[h!]
\centering
\includegraphics[scale=0.7]{./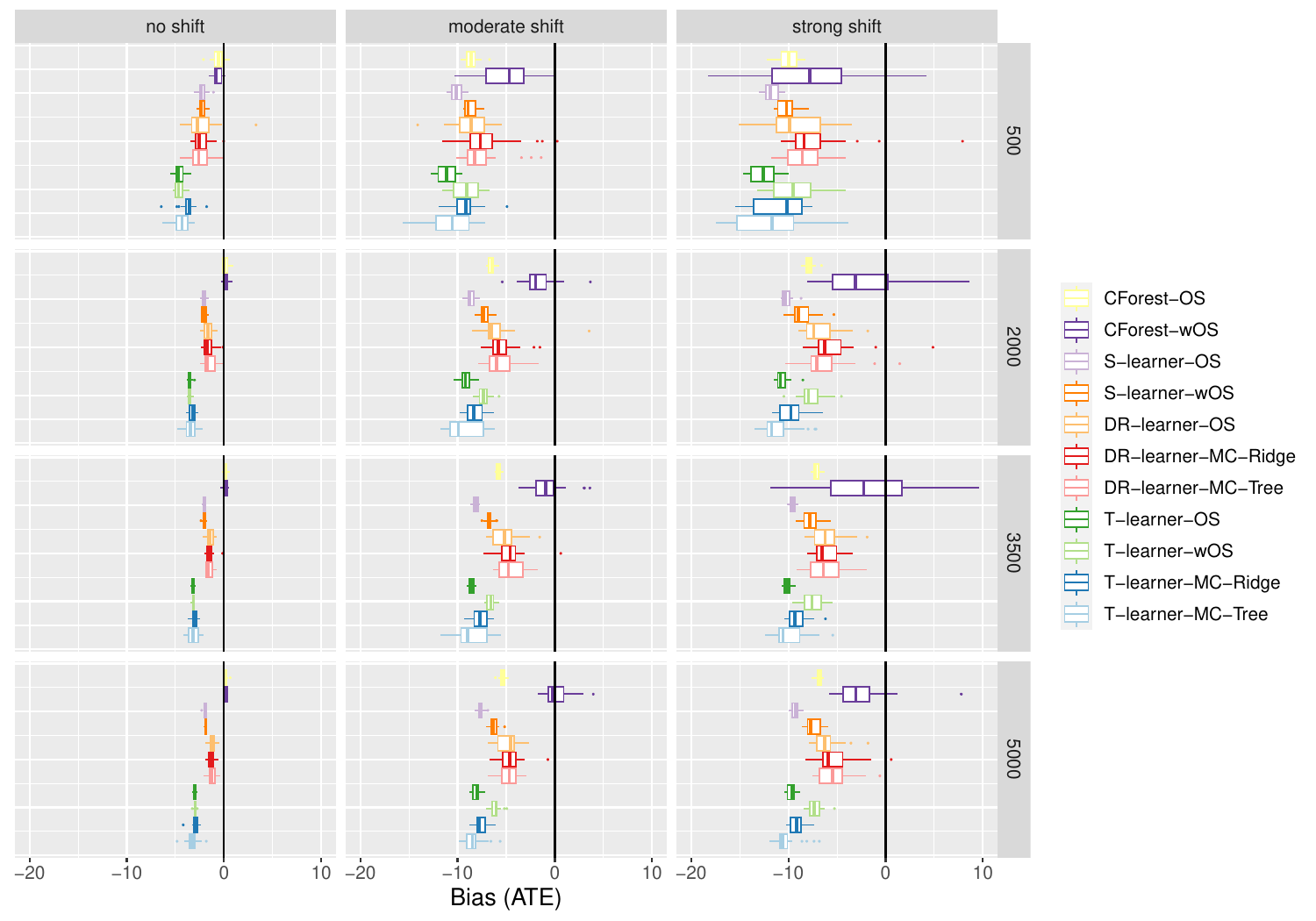}
\caption{Bias of ATE estimation by shift intensity and training set size for different CATE estimation methods (\nameref{para:simu1b}). The distribution of bias scores over simulation runs is shown. Given an external shift between training and test data, DR-learner-MC-Ridge performs best among the shift-blind methods, particularly in settings with small initial training data.}
\label{fig:bias_s1b_1b}
\end{figure}

\begin{figure}[h!]
\centering
\includegraphics[scale=0.7]{./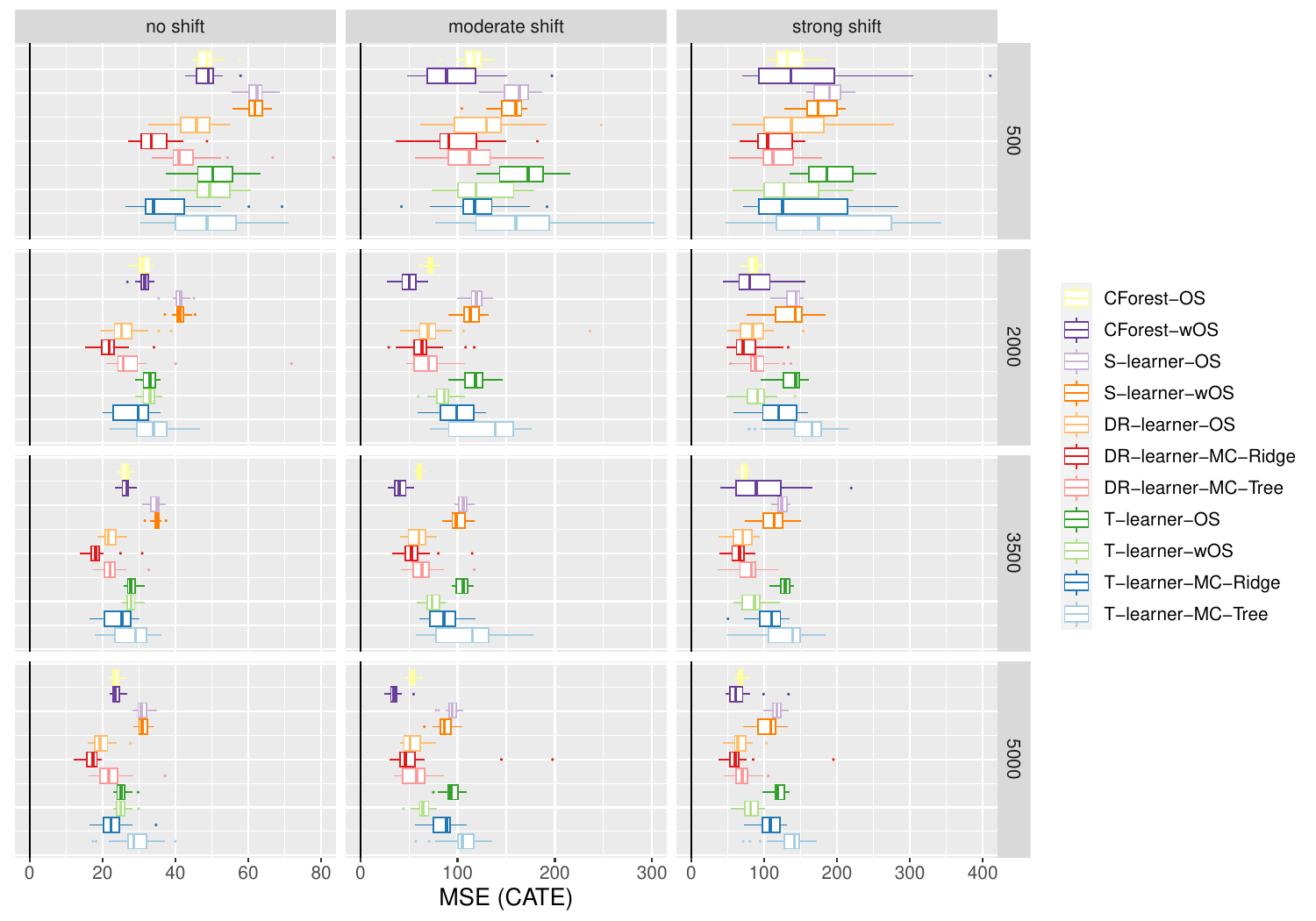}
\caption{MSE of CATE estimation by shift intensity and training set size for different estimation methods (\nameref{para:simu1b}). The distribution of MSE scores over simulation runs is shown. DR-learner-MC-Ridge and T-learner-MC-Ridge consistently improve over DR-learner-OS and T-learner-OS. DR-learner-MC-Ridge performs best overall in settings with small to moderately sized training data.}
\label{fig:mse_s1b_1b}
\end{figure}

\clearpage

\subsubsection{Observational study and RCT}\label{sec-obsrct}

We simulate a setting in which we have access to training data from an observational study (OS) and from a small randomized control trial (RCT). We consider a covariate/ external shift between the observational study, $\mathcal{D}_{os}$, and the RCT, $\mathcal{D}_{rct}$. We further assume unobserved confounding either in both data sources (\ref{para:simu2a}) or in the observational training data only (\ref{para:simu2b}). The task is to estimate the true CATE using models that learned either in the observational (training) data or in the RCT, or by using both data sources in combination. 
$$ (X_{train}, T_{train}, Y_{train}) \sim \mathcal{D}_{os}, 
 (X_{audit}, T_{audit}, Y_{audit}) \sim \mathcal{D}_{rct},
 (X_{test}, T_{test}, Y_{test}) \sim \mathcal{D}_{os} $$ 
That is, the randomized controlled trial data is crucial to obtain identification, but ultimately we seek a predictor with good performance on the covariate distribution of the \textit{observational data}.
\paragraph{Simulation 2a (confounded observational data and RCT)} \label{para:simu2a}

In the first simulation, we consider covariate shifts from the observational to the RCT setting alone. 
\begin{assumption}[Covariate shift from observational to RCT]\label{asn-obsrct-externalshift}
\begin{align*}
    &P(X_{\obs}) \neq P(X_{\rct})\\
   & P(Y_{\obs}=y\mid X,U,A) =P(Y_{\rct}=y\mid X,U,A), \forall y
\end{align*} 

\end{assumption}
In addition to the setup in \Cref{sec-simulations-setup}, we introduce unobserved confounding. The specification is as follows: 

The unobserved confounder $U$ is correlated with $x_1$: 

$$u(x) = \begin{cases}
    0.8  & \text{if } x_1 > \bar{x}_1 \\
    0.2  & \text{if } x_1 \leq \bar{x}_1 \\
\end{cases}, \qquad U_i \sim \text{Bern}(u(x))$$

$$\mu_0(x)=3x_1 + 5x_2$$
$$\mu_1(x)=\mu_0(x) + 3x_1 + 5x_2$$
$$\mu^*_0(x, u)= \mu_0(x) - u$$
$$\mu^*_1(x, u)= \mu_1(x) + 3u$$
$$e^{os}(x, u)=\frac{1}{1+e^{(2 - 3u + (-2 (x_1 - 0.5) - 1 (x_2 - 0.5)))}} $$
$$e^{rct}(x)=0.5$$
$$z(x)=\frac{1}{1+e^{(2(x_2 - 0.5) + (x_3 - 0.5))}}$$

\paragraph{CATE Estimation}
We use the following methods for estimating the CATE based on training sets of simulated observational data.

\begin{itemize}
\item (\textbf{CForest-OS}) Causal forest \citep{wager2018forests} trained in the training set of the observational data. 
\item (\textbf{S-learner-OS}) S-learner using random forest trained in the training set of the observational data. 
\item (\textbf{DR-learner-OS}) DR-learner \citep{kennedy2023towards} using random forest trained in the training set of the observational data.
\item (\textbf{T-learner-OS}) T-learner using random forest trained in the training set of the observational data. 
\end{itemize}

We estimate DR-learner and T-learner using multi-calibration boosting with simulated RCT data.

\begin{itemize}
\item (\textbf{DR-learner-MC-Ridge}) DR-learner using random forest in the training set of the observational data is post-processed with MCBoost using ridge regression in the randomized control trial. 
\item (\textbf{DR-learner-MC-Tree}) DR-learner using random forest in the training set of the observational data is post-processed with MCBoost using decision trees in the randomized control trial. 
\item (\textbf{T-learner-MC-Ridge}) T-learner using random forest in the training set of the observational data is post-processed with MCBoost using ridge regression in the randomized control trial. 
\item (\textbf{T-learner-MC-Tree}) T-learner using random forest in the training set of the observational data is post-processed with MCBoost using decision trees in the randomized control trial. 
\end{itemize}

We further compare to CATE learner that are solely based on the simulated RCT data. Shift-reweighting is conducted by training a logistic regression to predict sample membership in the observational versus RCT data and calculating propensity weights $\frac{1 - \hat{p}}{\hat{p}}$ based on the predicted probability of membership in the RCT data $\hat{p}$. 

\begin{itemize}
\item (\textbf{CForest-CT}) Causal forest trained in the randomized control trial. 
\item (\textbf{CForest-wCT}) Causal forest trained in the shift-reweighted randomized control trial. 
\item (\textbf{S-learner-CT}) S-learner using random forest trained in the randomized control trial. 
\item (\textbf{S-learner-wCT}) S-learner using random forest trained in the shift-reweighted randomized control trial. 
\item (\textbf{DR-learner-CT}) DR-learner using random forest trained in the randomized control trial. 
\item (\textbf{T-learner-CT}) T-learner using random forest trained in the randomized control trial. 
\item (\textbf{T-learner-wCT}) T-learner using random forest trained in the shift-reweighted randomized control trial. 
\end{itemize}

\paragraph{Evaluation}

We evaluate bias in ATE and MSE in CATE estimation on a test set drawn from the observational data. In calculating the true ATE and $\tau(x)$, we marginalize over $U$ and compute  
$ \E[Y_i(1)|X]=Y_i(1) + 3 \E[U|X_i] $ and
$ \E[Y_i(0)|X]=Y_i(0) - \E[U|X_i] $.

\paragraph{Results}

We plot the bias of the estimated ATE for each method by shift intensity (column panels) and training set size (row panels) in Figure \ref{fig:bias_s1d_2a} (see also Table \ref{tab:bias_s1d_2a} {including KL divergence}). In the absence of covariate shift (first column), naive learning in the observational data results in biased estimates of the ATE. Utilizing both data sources in combination via multi-calibration boosting allows to improve over the initial DR- and T-learner. Introducing a covariate shift between the observational data and the RCT (second column) degenerates the performance of the RCT-based estimators and the best results are achieved by multi-accurate DR- and T-learner, especially for strong shifts (third column). 

Results for the MSE of the estimated CATE are shown in Figure \ref{fig:mse_s1d_2a} (Table \ref{tab:mse_s1d_2a}). In the absence of covariate shift (first column), the RCT-based estimators outperform the estimators that learned from the observational data. Introducing a shift between the observational study and the RCT (second and third column) increases the MSE of the RCT-based learners considerably such that the best results can now be observed for the tree-based multi-accurate T-learner, followed by the ridge regression-based multi-accurate T-learner and causal forests learned in the observational data. 

\begin{figure}[h!]
\centering
\includegraphics[scale=0.7]{./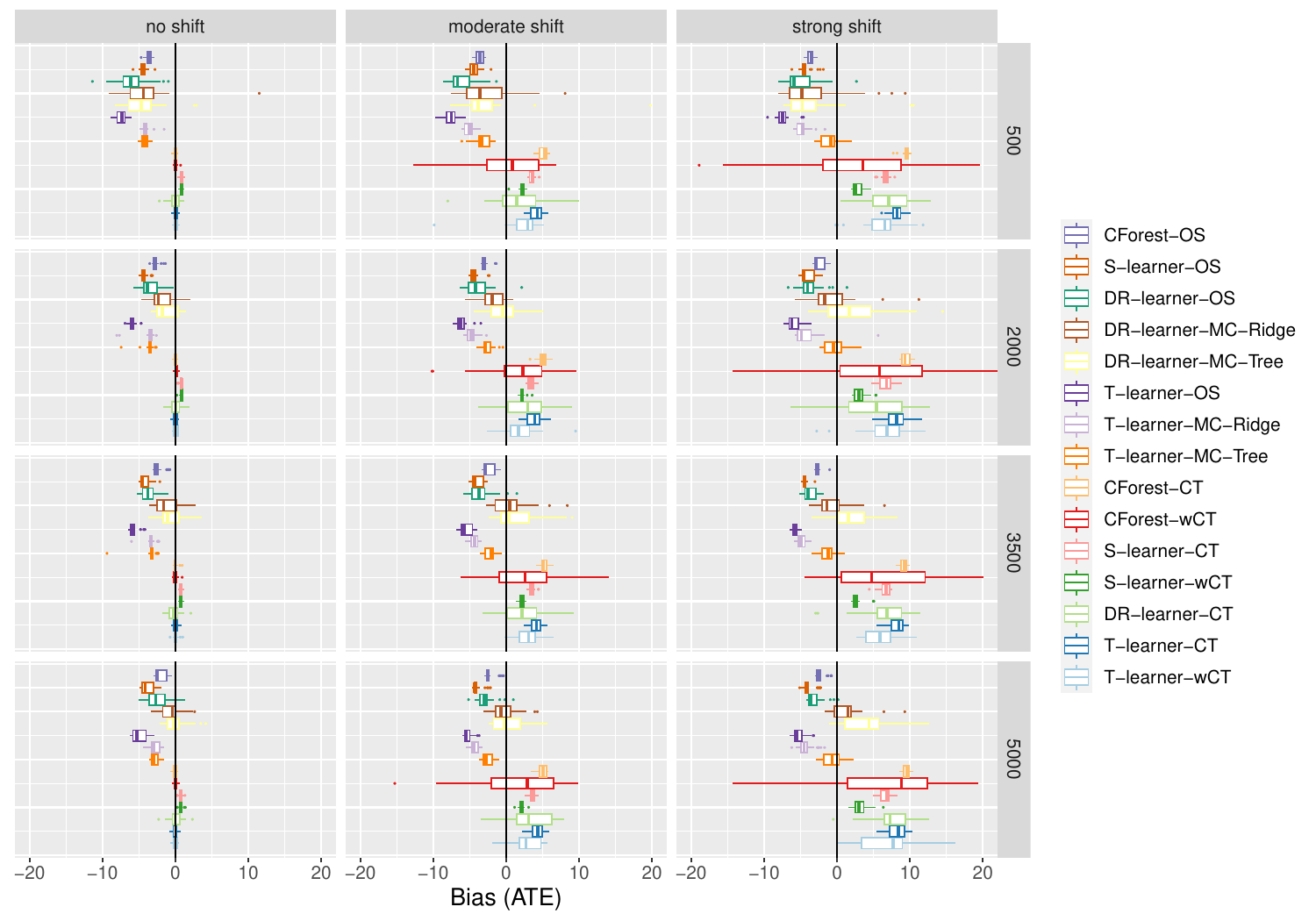}
\caption{Bias of ATE estimation by shift intensity and training set size for different CATE estimation methods (\nameref{para:simu2a}). The distribution of bias scores over simulation runs is plotted. Given moderate to strong covariate shift between the observational data and RCT, multi-accurate learner achieve the best results.}
\label{fig:bias_s1d_2a}
\end{figure}

\begin{figure}[h!]
\centering
\includegraphics[scale=0.7]{./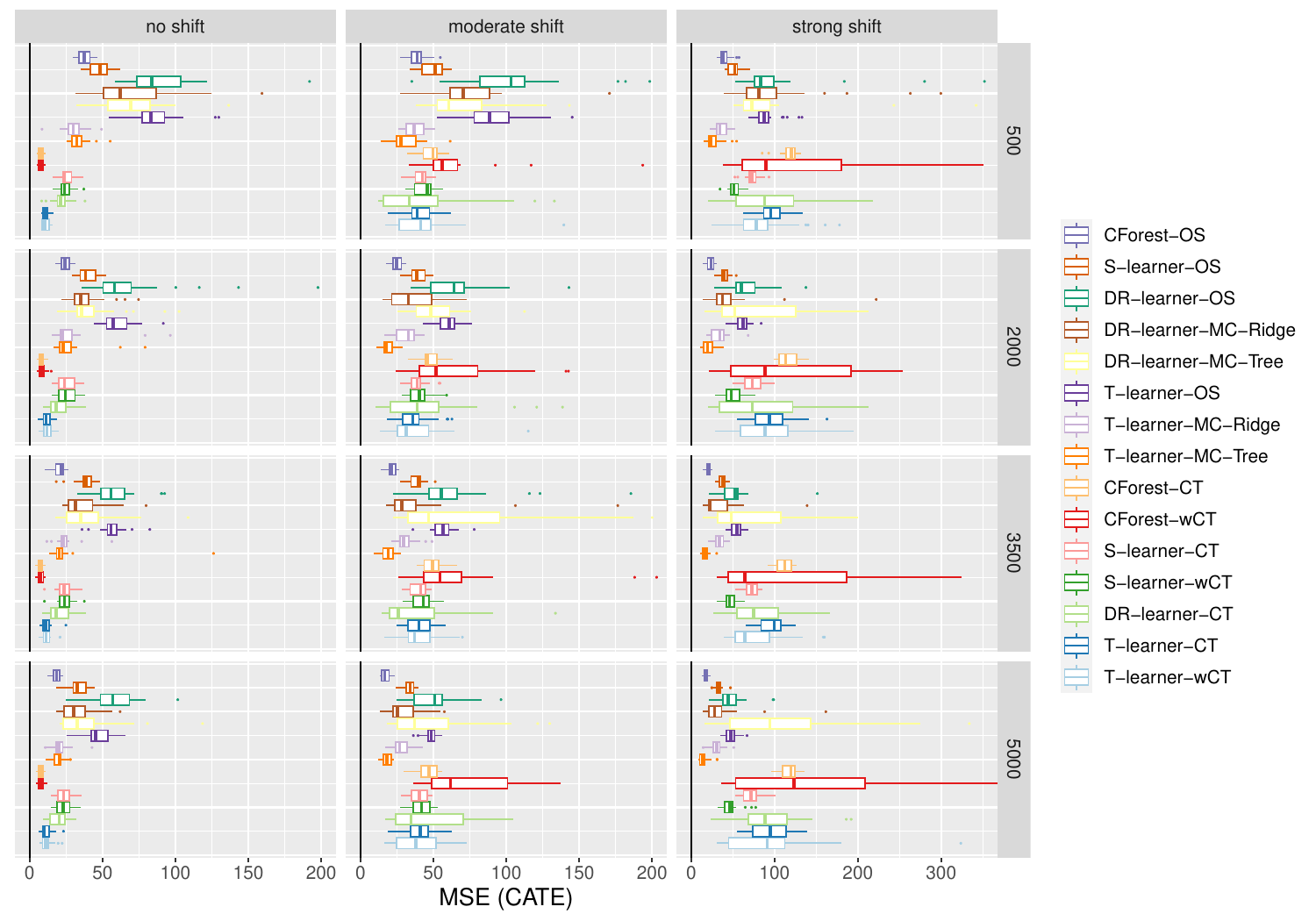}
\caption{MSE of CATE estimation by shift intensity and training set size for different estimation methods (\nameref{para:simu2a}). The distribution of MSE scores over simulation runs is shown. T-learner-MC-Tree outperform other methods in settings with shifted RCT data.}
\label{fig:mse_s1d_2a}
\end{figure}

\clearpage

\paragraph{Simulation 2b (total shift between observational data and RCT)} \label{para:simu2b}
In simulation 2b, we consider potentially stronger distribution shifts beyond covariate shift alone. 

\begin{assumption}[Total shift from observational to RCT]\label{asn-obsrct-totalshift}
\begin{align*}
    &P(X_{\obs}) \neq P(X_{\rct})\\
   & P(Y_{\obs}=y\mid X,A) \neq P(Y_{\rct}=y\mid X,A), \forall y
\end{align*} 
\end{assumption}

The difference between \Cref{asn-obsrct-totalshift} and \Cref{asn-obsrct-externalshift} is whether we allow the marginal distribution of $U$ to shift. 
\Cref{asn-obsrct-externalshift} is a ``conditional model invariance'' assumption between the data-generating process and the RCT. 
A sufficient condition for this to hold is that $P(U_{obs})=P(U_{rct})$ and the invariant conditional probability assumption above.
On the other hand, the total shift of \Cref{asn-obsrct-totalshift} could arise from shifts in the distribution of $U$. Both of these are additional covariate shifts. 

The specification is as follows: 
$$\mu_0(x)= \mu^{* rct}_0(x) = 3x_1 + 5x_2$$
$$\mu_1(x)= \mu^{* rct}_1(x) = \mu_0(x) + 3x_1 + 5x_2$$
$$\mu^{* os}_0(x, u)= \mu_0(x) - u$$
$$\mu^{* os}_1(x, u)= \mu_1(x) + 3u$$
$$e^{os}(x, u)=\frac{1}{1+e^{(2 - 3u + (-2 (x_1 - 0.5) - 1 (x_2 - 0.5)))}} $$
$$e^{rct}(x)=0.5$$
$$z(x)=\frac{1}{1+e^{(2(x_2 - 0.5) + (x_3 - 0.5))}}$$

\paragraph{Evaluation}

We evaluate bias in ATE and MSE in CATE estimation on a test set that follows the covariate distribution of the observational data, $\mathcal{D}_{os}$. However, in constructing the true ATE and $\tau(x)$ we use $\mu_0(x)$ and $\mu_1(x)$ as specified in the RCT, i.e. without unobserved confounders.

\paragraph{Results}

The bias of the estimated ATE for each method by shift intensity (column panels) and training set size (row panels) is presented in Figure \ref{fig:bias_s1d_2b} (and Table \ref{tab:bias_s1d_2b} {including KL divergence}). The first set of results (first column) indicate that under unobserved confounding in the observational data only and without external covariate shift, RCT-based estimators are, as expected, unbiased. The multi-accurate DR- and T-learner that draw on both data sources are able to reduce the bias of the naive DR- and T-learner. As the external shift between the observational data and the RCT increases (second and third column), learning only on the RCT incurs bias and shift-reweighted methods as well as (tree-based) multi-accurate DR- and T-learner achieve the best results.

The results for the MSE of the CATE are shown in Figure \ref{fig:mse_s1d_2b} (Table \ref{tab:mse_s1d_2b}). Similar to the results for bias, the RCT-based estimators perform best under the no external covariate shift setting (first column). Among the estimators based on the observational data, the ridge- and tree-based multi-accurate T-learner and causal forests perform best. Tree-based post-processing performs best among all methods in scenarios with strong covariate shift (third column). 

\begin{figure}[h!]
\centering
\includegraphics[scale=0.7]{./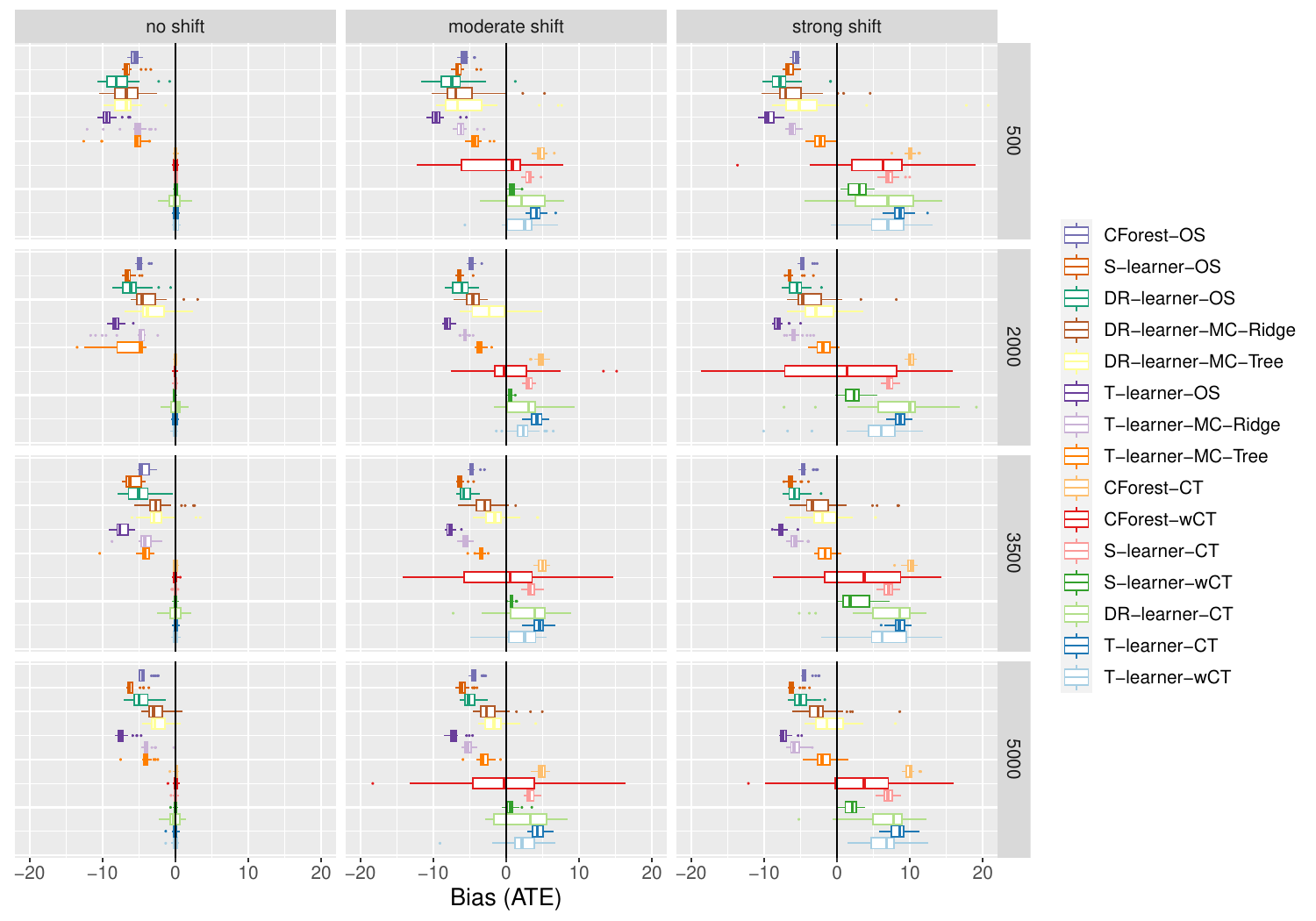}
\caption{Bias of ATE estimation by shift intensity and training set size for different CATE estimation methods (\nameref{para:simu2b}). The distribution of bias scores over simulation runs is shown. As the external shift between the observational data and the RCT increases, multi-accurate DR-learner and T-learner-MC-Tree are competitive with shift-reweighted learning.}
\label{fig:bias_s1d_2b}
\end{figure}

\begin{figure}[h!]
\centering
\includegraphics[scale=0.7]{./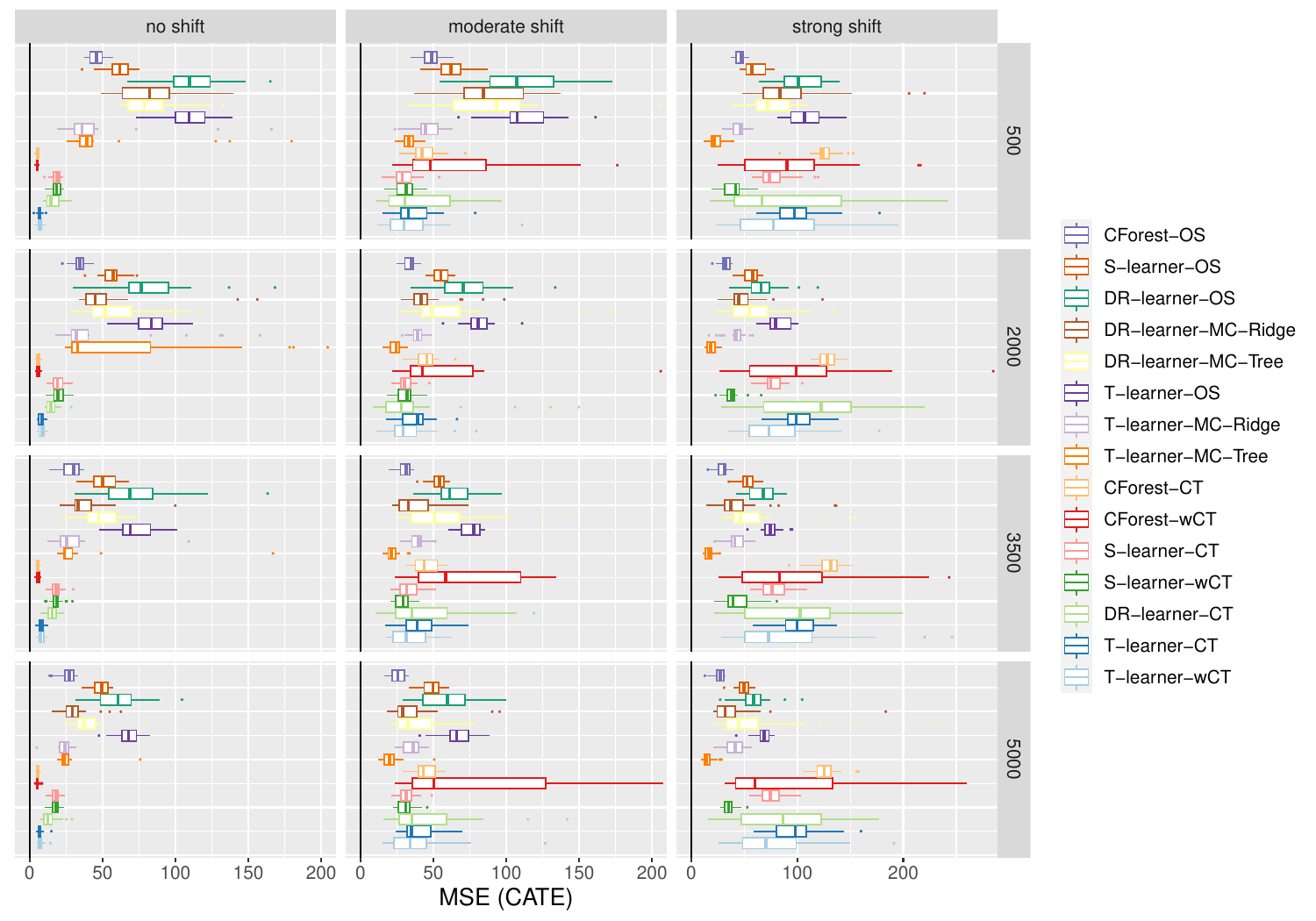}
\caption{MSE of CATE estimation by shift intensity and training set size for different estimation methods (\nameref{para:simu2b}). The distribution of MSE scores over simulation runs is shown. T-learner-MC-Tree performs best among all methods in scenarios with strong covariate shift.}
\label{fig:mse_s1d_2b}
\end{figure}

\clearpage

\begin{table}[!htbp]
\centering
\caption{Bias of ATE estimation by shift intensity and training set size for different CATE estimation methods, averaged over simulation runs (\nameref{para:simu1a}). For each setting, method achieving best performance printed in \textbf{bold} (second best in \textit{italic}).}
\label{tab:bias_s1b_1a}
\small
\begin{tabular}{lrr|rr|rr|rrr|rrrr}
\hline
Train & Shift & KL- & \multicolumn{2}{c}{CForest} & \multicolumn{2}{c}{S-learner}  & \multicolumn{3}{c}{DR-learner} & \multicolumn{4}{c}{T-learner}  \\
size & degree & Div. & OS & wOS & OS & wOS & OS & Ridge & Tree & OS & wOS & Ridge & Tree \\
\hline
 & 0 & 0.08 & 0.04 & \textbf{0} & \em{-0.02} & -0.02 & 0.76 & 0.35 & 0.34 & -0.09 & -0.12 & -0.15 & -0.11\\
 & 0.25 & 0.23 & 1.62 & \textbf{-0.11} & 2.13 & 1.87 & 0.83 & 1.19 & 1.39 & 1.17 & \em{0.61} & 0.64 & 0.84\\
 & 0.5 & 0.65 & 3.17 & \textbf{-0.09} & 4.2 & 3.39 & 1.92 & 1.88 & 1.88 & 2.34 & \em{1.34} & 1.57 & 1.8\\
 & 0.75 & 1.15 & 4.46 & \textbf{-0.55} & 5.96 & 4.69 & 2.43 & 2.65 & 3.4 & 3.41 & \em{2.17} & 2.49 & 2.45\\
500 & 1 & 1.62 & 5.26 & \textbf{1.52} & 7.05 & 5.6 & 3.69 & 3.28 & 3.55 & 4.24 & \em{2.77} & 3.01 & 3.45\\
 & 1.25 & 1.91 & 5.87 & \textbf{-2.09} & 7.65 & 5.92 & 3.89 & 3.36 & 4.42 & 4.4 & \em{2.55} & 2.77 & 3.08\\
 & 1.5 & 2.09 & 6.29 & \textbf{-2.33} & 8.1 & 6.38 & 2.59 & 3.68 & 3.93 & 5.09 & \em{2.46} & 3.4 & 3.69\\
 & 1.75 & 2.22 & 6.8 & \em{-2.73} & 8.45 & 7.42 & 4.28 & 4.91 & 4.93 & 5.13 & \textbf{2.26} & 3.01 & 4.21\\
 & 2 & 2.40 & 6.88 & -4.08 & 8.6 & 7.1 & 5.01 & 4.01 & 4.31 & 5.17 & \textbf{1.69} & \em{3.48} & 3.58\\
\hline
 & 0 & 0.02 & 0.04 & 0.04 & \em{0.01} & \textbf{0} & 0.31 & -0.01 & 0.21 & -0.01 & -0.01 & 0.16 & 0.2\\
 & 0.25 & 0.17 & 0.79 & \em{-0.34} & 1.79 & 1.35 & 0.46 & 0.61 & 0.6 & 0.74 & \textbf{0.33} & 0.5 & 0.58\\
 & 0.5 & 0.60 & 1.8 & \textbf{-0.65} & 3.62 & 2.42 & 1.41 & 1.21 & 1.35 & 1.68 & \em{0.98} & 1.19 & 1.24\\
 & 0.75 & 1.11 & 2.52 & \em{-1.41} & 5.14 & 2.87 & 1.8 & \textbf{1.4} & 1.92 & 2.39 & 1.43 & 1.77 & 2\\
2000 & 1 & 1.52 & 2.97 & -2.08 & 6 & 3.27 & 1.82 & \textbf{1.51} & 1.91 & 2.76 & \em{1.58} & -2.41 & 2.18\\
 & 1.25 & 1.86 & 3.35 & -2.12 & 6.65 & 3.96 & 2.35 & \textbf{1.92} & 2.1 & 3.26 & \em{1.99} & 2.2 & 2.52\\
 & 1.5 & 2.05 & 3.63 & -2.84 & 7.07 & 4.21 & 2.38 & 2.19 & \em{2.17} & 3.49 & \textbf{1.92} & 2.56 & 2.89\\
 & 1.75 & 2.21 & 3.81 & -3.27 & 7.29 & 4.5 & 2.29 & \em{2.08} & 2.27 & 3.59 & \textbf{1.86} & 2.54 & 2.84\\
 & 2 & 2.32 & 3.93 & -3.59 & 7.48 & 4.55 & 2.67 & \em{2.32} & 2.52 & 3.85 & \textbf{1.71} & 3.01 & 3.59\\
\hline
 & 0 & 0.02 & \em{0.03} & \textbf{0.02} & -0.05 & -0.05 & 0.28 & 0.14 & 0.14 & -0.07 & -0.07 & -0.18 & -0.09\\
 & 0.25 & 0.16 & 0.62 & \textbf{-0.27} & 1.65 & 1.15 & 0.39 & 0.53 & 0.43 & 0.66 & \em{0.3} & 0.48 & 0.57\\
 & 0.5 & 0.60 & 1.31 & \textbf{-0.57} & 3.41 & 2.11 & 0.94 & 0.99 & \em{0.82} & 1.51 & 0.84 & 1.03 & 1.37\\
 & 0.75 & 1.10 & 2.08 & \em{-1.22} & 4.77 & 2.57 & 1.5 & \textbf{1.13} & 1.37 & 2.25 & 1.38 & 1.62 & 1.81\\
3500 & 1 & 1.52 & 2.38 & -1.63 & 5.64 & 2.83 & 1.88 & \textbf{1.54} & 1.62 & 2.61 & \em{1.55} & 2 & 2.29\\
 & 1.25 & 1.82 & 2.59 & -2.1 & 6.19 & 3.03 & 1.7 & \textbf{1.51} & 1.87 & 2.91 & \em{1.64} & 2.16 & 2.38\\
 & 1.5 & 2.05 & 2.72 & -3.61 & 6.45 & 3.28 & 2.13 & \textbf{1.59} & 1.98 & 3.05 & \em{1.61} & 2.19 & 2.62\\
 & 1.75 & 2.20 & 2.94 & -3.18 & 6.7 & 3.67 & 2.18 & \em{1.69} & 1.85 & 3.23 & \textbf{1.67} & 2.45 & 2.73\\
 & 2 & 2.29 & 3 & -3.97 & 6.87 & 3.79 & \em{1.9} & 2.28 & 2.26 & 3.35 & \textbf{1.49} & 2.4 & 2.87\\
\hline
 & 0 & 0.01 & -0.06 & -0.05 & -0.11 & -0.11 & -0.06 & \em{0.02} & \textbf{-0.01} & -0.13 & -0.13 & -0.14 & -0.14\\
 & 0.25 & 0.16 & 0.52 & \textbf{-0.25} & 1.57 & 1.07 & 0.32 & 0.36 & 0.41 & 0.62 & \em{0.27} & 0.49 & 0.71\\
 & 0.5 & 0.59 & 1.1 & \textbf{-0.54} & 3.2 & 1.82 & 0.76 & \em{0.67} & 0.7 & 1.39 & 0.74 & 0.96 & 1.14\\
 & 0.75 & 1.10 & 1.65 & -1.38 & 4.56 & 2.28 & \textbf{1.11} & \em{1.11} & 1.39 & 2.07 & 1.2 & 1.56 & 1.66\\
5000 & 1 & 1.50 & 2.02 & -1.86 & 5.38 & 2.55 & \textbf{1.27} & \em{1.4} & 1.59 & 2.47 & 1.5 & 1.83 & 2.24\\
 & 1.25 & 1.81 & 2.34 & -1.66 & 5.91 & 2.97 & \em{1.58} & 1.69 & \textbf{1.47} & 2.76 & 1.69 & 2.02 & 2.4\\
 & 1.5 & 2.05 & 2.37 & -3.08 & 6.2 & 2.94 & 1.93 & 1.97 & \textbf{1.54} & 2.89 & \em{1.62} & 2.3 & 2.23\\
 & 1.75 & 2.21 & 2.55 & -3.24 & 6.39 & 3.26 & 1.91 & 1.74 & \textbf{1.45} & 3.06 & \em{1.68} & 2.49 & 3.14\\
 & 2 & 2.30 & 2.63 & -3.56 & 6.53 & 3.7 & \em{1.83} & 1.9 & 1.98 & 3.2 & \textbf{1.7} & 2.38 & 2.62\\
\hline
\end{tabular}
\end{table}

\begin{table}[!htbp]
\centering
\caption{MSE of CATE estimation by shift intensity and training set size for different estimation methods, averaged over simulation runs (\nameref{para:simu1a}). For each setting, method achieving best performance printed in \textbf{bold} (second best in \textit{italic}).}
\label{tab:mse_s1b_1a}
\small
\begin{tabular}{lrr|rr|rr|rrr|rrrr}
\hline
Train & Shift & KL & \multicolumn{2}{c}{CForest} & \multicolumn{2}{c}{S-learner}  & \multicolumn{3}{c}{DR-learner} & \multicolumn{4}{c}{T-learner}  \\
size & degree & Div. & OS & wOS & OS & wOS & OS & Ridge & Tree & OS & wOS & Ridge & Tree \\
\hline
 & 0 & 0.08 & \em{12.52} & \textbf{12.5} & 19.64 & 19.69 & 39.11 & 35.83 & 34.71 & 14.99 & 15.01 & 23.65 & 23.34\\
 & 0.25 & 0.23 & 16.22 & \em{14.87} & 25.4 & 24.31 & 39.71 & 38.62 & 33.19 & 18.39 & 17.96 & \textbf{13.76} & 23.99\\
 & 0.5 & 0.65 & 23.97 & \em{17.31} & 38.24 & 30.95 & 49.65 & 53.33 & 34.47 & 22.98 & 20.9 & \textbf{15.71} & 28.8\\
 & 0.75 & 1.15 & 33.76 & \em{22.6} & 53.54 & 39.98 & 56.69 & 43.87 & 48.8 & 29.32 & 25.36 & \textbf{21.3} & 31.88\\
500 & 1 & 1.62 & 39.1 & 33.58 & 63.77 & 46.94 & 58.16 & 59.52 & 49.9 & 34.82 & \em{30.83} & \textbf{22.51} & 36.65\\
 & 1.25 & 1.91 & 44.13 & 50.24 & 69.13 & 47.81 & 57.9 & 58.83 & 60.97 & 34.24 & \em{26.37} & \textbf{22.24} & 32.55\\
 & 1.5 & 2.09 & 47.91 & 55.52 & 74.24 & 51.85 & 82.21 & 66.16 & 56.16 & 40.65 & \em{25.54} & \textbf{24.22} & 37.26\\
 & 1.75 & 2.22 & 53.07 & 164 & 78.5 & 63.76 & 60.46 & 64.53 & 62.66 & 39.19 & \em{29.42} & \textbf{27.65} & 41.72\\
 & 2 & 2.40 & 54.01 & 100.17 & 80.29 & 58.83 & 68.18 & 63.73 & 56.24 & 40.18 & \em{24.46} & \textbf{22.7} & 37.35\\
\hline
 & 0 & 0.02 & \textbf{5.06} & \em{5.1} & 14.97 & 14.92 & 15.75 & 15.83 & 15.31 & 9.56 & 9.52 & 8.5 & 15.34\\
 & 0.25 & 0.17 & \em{6.13} & \textbf{5.98} & 19.28 & 17.02 & 16.96 & 15.4 & 15.32 & 10.43 & 10.35 & 8.83 & 15.77\\
 & 0.5 & 0.60 & \em{10.47} & \textbf{9.33} & 29.76 & 20.21 & 20.28 & 18.6 & 17.6 & 14.83 & 14.49 & 11.65 & 18.63\\
 & 0.75 & 1.11 & \textbf{13.19} & 14.76 & 40.88 & 21.47 & 19.86 & 18.81 & 19.15 & 17.24 & 15.84 & \em{13.61} & 22.23\\
2000 & 1 & 1.52 & \textbf{15.44} & 23.4 & 47.64 & 23.43 & 23.09 & 21.72 & 25.05 & 19.44 & \em{17.07} & 541.18 & 22.88\\
 & 1.25 & 1.86 & \em{17.03} & 30.8 & 53.67 & 27.69 & 25.46 & 22.39 & 21.78 & 21.68 & 19.04 & \textbf{14.56} & 25.03\\
 & 1.5 & 2.05 & 18.42 & 40.07 & 57.9 & 28.71 & 26.72 & 26.79 & 24.99 & 23.04 & \em{17.78} & \textbf{16.42} & 25.81\\
 & 1.75 & 2.21 & 19.23 & 42.16 & 59.87 & 29.85 & 26.8 & 22.7 & 22.47 & 23.04 & \em{16.93} & \textbf{15.57} & 27.11\\
 & 2 & 2.32 & 19.99 & 45.5 & 62.18 & 30.1 & 24.72 & 27.07 & 23.7 & 25.36 & \textbf{16.63} & \em{19.02} & 29.57\\
\hline
 & 0 & 0.02 & \em{3.2} & \textbf{3.19} & 12.91 & 12.87 & 12.56 & 11.02 & 11.25 & 7.64 & 7.63 & 7.24 & 12.44\\
 & 0.25 & 0.16 & \em{4.22} & \textbf{4.17} & 16.63 & 14.13 & 12.04 & 11.28 & 10.98 & 9 & 9.05 & 7.34 & 13.86\\
 & 0.5 & 0.60 & \em{6.57} & \textbf{6.46} & 26.72 & 16.98 & 15.43 & 13.4 & 11.72 & 12.47 & 12.03 & 9.51 & 16.99\\
 & 0.75 & 1.10 & \textbf{9.78} & 12.25 & 36.13 & 18.59 & 14.68 & 14.92 & 14.56 & 15.35 & 14.25 & \em{11.21} & 17.79\\
3500 & 1 & 1.52 & \textbf{10.65} & 14.68 & 42.75 & 19.42 & 17.16 & 15.14 & 15.14 & 16.8 & 15 & \em{12.84} & 21.89\\
 & 1.25 & 1.82 & \textbf{10.83} & 23.62 & 47.2 & 19.39 & 16.18 & 16.57 & 16.01 & 17.78 & 14.77 & \em{12.98} & 20.17\\
 & 1.5 & 2.05 & \textbf{11.61} & 32.26 & 49.25 & 20.51 & 18.26 & 16.88 & 17.81 & 18.68 & 14.49 & \em{13.39} & 23.67\\
 & 1.75 & 2.20 & \textbf{12.69} & 38.13 & 51.6 & 22.77 & 17.92 & 16.71 & 16.32 & 19.75 & 15.28 & \em{15.07} & 21.18\\
 & 2 & 2.29 & \textbf{12.73} & 56.72 & 53.34 & 23.36 & 16.4 & 21.69 & 19.04 & 20.17 & 15.18 & \em{14.76} & 22.09\\
\hline
 & 0 & 0.01 & \textbf{2.62} & \em{2.63} & 12.45 & 12.36 & 9.31 & 8.73 & 9.23 & 6.97 & 6.99 & 6.14 & 10.51\\
 & 0.25 & 0.16 & \em{3.19} & \textbf{3.01} & 15.53 & 13.18 & 9.32 & 8.49 & 8.95 & 7.77 & 7.73 & 7.74 & 12.25\\
 & 0.5 & 0.59 & \em{4.95} & \textbf{4.58} & 23.96 & 14.14 & 10.56 & 9.48 & 10.36 & 10.57 & 10.09 & 8.2 & 14.69\\
 & 0.75 & 1.10 & \textbf{6.66} & \em{9.45} & 33.15 & 16.08 & 12.62 & 11.07 & 12.24 & 13.68 & 12.92 & 11.27 & 16.6\\
5000 & 1 & 1.50 & \textbf{7.89} & 11.84 & 38.96 & 16.78 & 13.9 & 12.46 & 12.89 & 14.89 & 13.63 & \em{11.27} & 18.55\\
 & 1.25 & 1.81 & \textbf{9.41} & 16.77 & 43.57 & 18.71 & 15.24 & \em{12.35} & 13.52 & 16.02 & 13.83 & 22.07 & 20.03\\
 & 1.5 & 2.05 & \textbf{9.06} & 23.63 & 45.78 & 17.95 & 14.93 & 13.22 & \em{13.17} & 16.68 & 13.61 & 13.18 & 19.91\\
 & 1.75 & 2.21 & \textbf{9.82} & 29.24 & 47.46 & 19.6 & 14.82 & \em{13.09} & 14.68 & 17.55 & 13.89 & 14.38 & 22.76\\
 & 2 & 2.30 & \textbf{10.36} & 40.95 & 48.85 & 22.73 & 15.09 & 14.9 & 15.18 & 18.62 & \em{14.32} & 14.42 & 19.83\\
\hline
\end{tabular}
\end{table}

\begin{table}[!htbp]
\centering
\caption{Bias of ATE estimation by shift intensity and training set size for different CATE estimation methods, averaged over simulation runs (\nameref{para:simu1b}). For each setting, method achieving best performance printed in \textbf{bold} (second best in \textit{italic}).}
\label{tab:bias_s1b_1b}
\small
\begin{tabular}{lrr|rr|rr|rrr|rrrr}
\hline
Train & Shift & KL & \multicolumn{2}{c}{CForest} & \multicolumn{2}{c}{S-learner}  & \multicolumn{3}{c}{DR-learner} & \multicolumn{4}{c}{T-learner}  \\
size & degree & Div. & OS & wOS & OS & wOS & OS & Ridge & Tree & OS & wOS & Ridge & Tree \\
\hline
 & 0 & 0.07 & \textbf{-0.59} & \em{-0.66} & -2.25 & -2.27 & -2.28 & -2.32 & -2.42 & -4.59 & -4.6 & -3.73 & -4.4\\
 & 0.25 & 0.22  & \em{-2.94} & \textbf{-1.19} & -4.51 & -3.91 & -3.96 & -3.38 & -3.88 & -6.21 & -5.58 & -4.39 & -5.64\\
 & 0.5 & 0.60 & -5.21 & \textbf{-1.94} & -6.68 & -5.78 & \em{-5.14} & -5.16 & -5.36 & -7.94 & -6.79 & -6.71 & -8.1\\
 & 0.75 & 1.05 & -7.23 & \textbf{-2.63} & -8.54 & -7.44 & -6.94 & \em{-5.77} & -6.43 & -9.59 & -7.92 & -8.42 & -9.96\\
500 & 1 & 1.47 & -8.57 & \textbf{-4.95} & -10.07 & -8.73 & -8.58 & \em{-6.96} & -7.66 & -11.09 & -9.13 & -9.26 & -10.79\\
 & 1.25 & 1.76 & -9.07 & \textbf{-5.13} & -10.71 & -9.44 & -6.83 & \em{-6.66} & -7.65 & -11.62 & -9.41 & -10.17 & -11.92\\
 & 1.5 & 2.02 & -9.56 & \textbf{-6.54} & -11.14 & -9.84 & -8.59 & \em{-6.72} & -8.72 & -12 & -9.49 & -10.49 & -12.25\\
 & 1.75 & 2.21 & -9.71 & \textbf{-5.74} & -11.54 & -10.19 & -8.4 & \em{-7.83} & -8.57 & -12.48 & -9.76 & -11.2 & -12.58\\
 & 2 & 2.35 & -10.04 & \em{-7.48} & -11.7 & -10.16 & -9.39 & \textbf{-7.1} & -8.24 & -12.63 & -9.29 & -10.75 & -12.02\\
\hline
 & 0 & 0.02 & \em{0.18} & \textbf{0.17} & -2.05 & -2.06 & -1.62 & -1.63 & -1.48 & -3.51 & -3.52 & -3.3 & -3.46\\
 & 0.25 & 0.16 & \em{-2.01} & \textbf{-0.22} & -3.93 & -3.37 & -2.49 & -2.87 & -3.04 & -4.83 & -4.28 & -4.3 & -4.78\\
 & 0.5 & 0.53 & -3.88 & \textbf{-0.64} & -5.86 & -4.88 & \em{-3.36} & -3.81 & -3.81 & -6.53 & -5.42 & -6.2 & -7.06\\
 & 0.75 & 0.99 & -5.55 & \textbf{-1.76} & -7.54 & -6.37 & -4.39 & \em{-3.9} & -4.55 & -8.12 & -6.72 & -7.29 & -8.26\\
2000 & 1 & 1.39 & -6.5 & \textbf{-1.8} & -8.64 & -7.27 & -5.98 & \em{-5.41} & -5.71 & -9.15 & -7.28 & -8.18 & -9.26\\
 & 1.25 & 1.73 & -7.18 & \textbf{-2.12} & -9.45 & -8.04 & -6.11 & \em{-5.29} & -5.73 & -9.95 & -7.68 & -8.98 & -10.53\\
 & 1.5 & 1.93 & -7.6 & \textbf{-2.23} & -9.74 & -8.27 & -6.07 & \em{-6.02} & -6.41 & -10.26 & -7.87 & -9.3 & -10.63\\
 & 1.75 & 2.09 & -7.8 & \textbf{-3.98} & -10.05 & -8.81 & -6.2 & \em{-5.67} & -6.62 & -10.55 & -8.44 & -9.95 & -11.77\\
 & 2 & 2.24 & -7.87 & \textbf{-2.7} & -10.19 & -8.6 & -6.84 & \em{-5.42} & -6.29 & -10.63 & -7.7 & -9.79 & -11.07\\
\hline
 & 0 & 0.01 & \textbf{0.13} & \em{0.16} & -2.01 & -2 & -1.41 & -1.46 & -1.54 & -3.17 & -3.17 & -3.02 & -3.15\\
 & 0.25 & 0.15 & \em{-1.64} & \textbf{-0.01} & -3.64 & -3.09 & -2.29 & -2.4 & -2.35 & -4.37 & -3.86 & -4.1 & -4.39\\
 & 0.5 & 0.52 & -3.5 & \textbf{-0.4} & -5.53 & -4.53 & -3.6 & \em{-3.47} & -3.77 & -6.08 & -5.04 & -5.9 & -6.55\\
 & 0.75 & 0.98 & -4.77 & \textbf{-0.37} & -6.87 & -5.61 & -4.15 & \em{-4.01} & -4.16 & -7.27 & -5.73 & -6.72 & -7.8\\
3500 & 1 & 1.38 & -5.83 & \textbf{-0.64} & -8.12 & -6.73 & -5.24 & -4.64 & \em{-4.55} & -8.57 & -6.61 & -7.76 & -8.58\\
 & 1.25 & 1.70 & -6.39 & \textbf{-0.74} & -8.74 & -7.25 & \em{-4.73} & -5.2 & -6.17 & -9.08 & -7.02 & -8.4 & -9.62\\
 & 1.5 & 1.92 & -6.7 & \textbf{-0.03} & -9.06 & -7.42 & -6.11 & \em{-5.35} & -5.37 & -9.5 & -7.08 & -8.98 & -10\\
 & 1.75 & 2.10 & -7 & \textbf{-0.6} & -9.29 & -7.63 & -6.05 & -5.85 & \em{-5.81} & -9.68 & -7.09 & -8.8 & -10.08\\
 & 2 & 2.24 & -7.17 & \textbf{-1.68} & -9.58 & -7.77 & \em{-6.13} & -6.14 & -6.2 & -10.07 & -7.46 & -9.1 & -9.93\\
\hline
 & 0 & 0.01 & \textbf{0.21} & \em{0.22} & -1.91 & -1.89 & -1.2 & -1.35 & -1.23 & -3 & -2.99 & -2.96 & -3.28\\
 & 0.25 & 0.15 & \em{-1.47} & \textbf{0.09} & -3.49 & -2.94 & -2.01 & -2.14 & -2.18 & -4.14 & -3.67 & -3.93 & -4.63\\
 & 0.5 & 0.52 & -3.05 & \textbf{0.05} & -5.08 & -4.04 & \em{-2.9} & -2.94 & -3.08 & -5.53 & -4.47 & -5.01 & -5.58\\
 & 0.75 & 0.97 & -4.53 & \textbf{-0.21} & -6.68 & -5.4 & \em{-3.9} & -3.92 & -4 & -7.11 & -5.58 & -6.81 & -7.51\\
5000 & 1 & 1.40 & -5.38 & \textbf{0.18} & -7.65 & -6.25 & -4.79 & \em{-4.55} & -4.7 & -8.02 & -6.14 & -7.51 & -8.49\\
 & 1.25 & 1.69 & -5.99 & \textbf{-0.97} & -8.28 & -6.76 & \em{-4.67} & -5.11 & -4.95 & -8.64 & -6.45 & -8.24 & -9.2\\
 & 1.5 & 1.93 & -6.32 & \textbf{-0.21} & -8.76 & -6.99 & -5.27 & \em{-4.71} & -5.39 & -9.11 & -6.76 & -8.62 & -9.71\\
 & 1.75 & 2.08 & -6.42 & \textbf{-0.64} & -8.83 & -7.1 & -5.64 & \em{-4.39} & -5.24 & -9.28 & -6.77 & -8.57 & -9.49\\
 & 2 & 2.23 & -6.88 & \textbf{-2.61} & -9.24 & -7.47 & -6.14 & \em{-5.33} & -5.35 & -9.68 & -7.28 & -9.12 & -10.24\\
\hline
\end{tabular}
\end{table}

\begin{table}[!htbp]
\centering
\caption{MSE of CATE estimation by shift intensity and training set size for different estimation methods, averaged over simulation runs (\nameref{para:simu1b}). For each setting, method achieving best performance printed in \textbf{bold} (second best in \textit{italic}).}
\label{tab:mse_s1b_1b}
\small
\begin{tabular}{lrr|rr|rr|rrr|rrrr}
\hline
Train & Shift & KL & \multicolumn{2}{c}{CForest} & \multicolumn{2}{c}{S-learner}  & \multicolumn{3}{c}{DR-learner} & \multicolumn{4}{c}{T-learner}  \\
size & degree & Div. & OS & wOS & OS & wOS & OS & Ridge & Tree & OS & wOS & Ridge & Tree \\
\hline
 & 0 & 0.07 & 48.96 & 48.83 & 61.44 & 61.47 & 49.01 & \textbf{36.24} & 41.68 & 50.37 & 50.47 & \em{37.92} & 48.71\\
 & 0.25 & 0.22  & 58.03 & 52.83 & 79.93 & 79.13 & 64.22 & \textbf{47.74} & 53.11 & 72.51 & 64.08 & \em{52.5} & 67.26\\
 & 0.5 & 0.60 & 75.47 & \textbf{59.45} & 107.39 & 107.74 & 76.5 & \em{65.54} & 69.39 & 103.71 & 86.43 & 80.42 & 111.06\\
 & 0.75 & 1.05 & 98.37 & \textbf{78.02} & 134.31 & 132.05 & 109.67 & \em{83.42} & 87.06 & 132.66 & 105.15 & 107.44 & 147.33\\
500 & 1 & 1.47 & 116.47 & \textbf{94.48} & 160.77 & 154.28 & 126.34 & \em{110.72} & 111.87 & 164.12 & 127.02 & 121.08 & 165.23\\
 & 1.25 & 1.76 & \em{121.81} & 132.84 & 169.13 & 165.05 & 142.91 & \textbf{108.59} & 128.54 & 173.35 & 133.51 & 138.64 & 186.45\\
 & 1.5 & 2.02 & 128.21 & 144.18 & 177.03 & 169.83 & 133.75 & \textbf{115.82} & \em{119.92} & 179.79 & 137.04 & 142.15 & 194.81\\
 & 1.75 & 2.21 & 129.24 & 139.23 & 183.05 & 177.42 & 153.78 & \textbf{109.25} & \em{123.72} & 189.41 & 140.58 & 158.14 & 198.73\\
 & 2 & 2.35 & 136.23 & 156.07 & 187.09 & 176.18 & 143.81 & \textbf{114.75} & \em{133.21} & 193.61 & 136.73 & 148.47 & 192.43\\
\hline
 & 0 & 0.02 & 31.25 & 31.29 & 41.25 & 41.46 & 26.21 & \textbf{23.87} & \em{25.76} & 32.69 & 32.76 & 28.04 & 33.36\\
 & 0.25 & 0.16 & 35.78 & \em{33.1} & 56.21 & 54.81 & 37.36 & \textbf{31.1} & 35.15 & 49.45 & 42.36 & 40.36 & 51.18\\
 & 0.5 & 0.53 & 47.93 & \textbf{36.68} & 78.99 & 74.7 & 53.38 & \em{47.29} & 58.23 & 74.61 & 57.9 & 68 & 86.61\\
 & 0.75 & 0.99 & 62.81 & \textbf{43.77} & 102.92 & 98.62 & 67.95 & \em{55.98} & 59.31 & 101.23 & 78.57 & 83.52 & 107.64\\
2000 & 1 & 1.39 & 71.52 & \textbf{50.4} & 118.4 & 112.52 & 76.93 & \em{65.76} & 73.73 & 117.39 & 86.09 & 96.77 & 125.88\\
 & 1.25 & 1.73 & 78.65 & \textbf{65.43} & 131.65 & 125.92 & 75.88 & \em{68} & 82.42 & 130.8 & 91.11 & 111.73 & 148.89\\
 & 1.5 & 1.93 & 83.4 & 87.66 & 134.23 & 128.7 & \em{80.8} & \textbf{77.14} & 94 & 134.91 & 95.45 & 113.56 & 149.28\\
 & 1.75 & 2.09 & 84.4 & \textbf{80.99} & 139.06 & 138.6 & 86.48 & \em{81.35} & 96.41 & 139.34 & 104.81 & 125.67 & 171.48\\
 & 2 & 2.24 & 85.39 & 117.33 & 139.58 & 135.01 & 85.56 & \textbf{82.29} & \em{84.27} & 139.53 & 90.69 & 120.89 & 155.25\\
\hline
 & 0 & 0.01 & 26.41 & 26.48 & 34.62 & 34.77 & \em{22.3} & \textbf{20.45} & 22.49 & 27.94 & 27.94 & 24.09 & 28.1\\
 & 0.25 & 0.15 & 29.68 & \textbf{27.89} & 48.19 & 46.15 & 29.57 & \em{27.95} & 28.28 & 42.62 & 36.47 & 36.65 & 43.35\\
 & 0.5 & 0.52 & 40.64 & \textbf{32.07} & 69.51 & 64.46 & \em{40.27} & 44.18 & 42.4 & 66.82 & 52.49 & 62.16 & 76.77\\
 & 0.75 & 0.98 & 50.62 & \textbf{34.89} & 86.45 & 80.05 & 55.44 & \em{50.07} & 52.88 & 83.88 & 60.45 & 72.47 & 96.5\\
3500 & 1 & 1.38 & 60.8 & \textbf{40.59} & 105.13 & 99.35 & 58.89 & \em{55.2} & 65.03 & 104.92 & 73.67 & 88.04 & 109.7\\
 & 1.25 & 1.70 & 65.42 & \textbf{53.8} & 113.57 & 105.53 & 73.34 & \em{60.93} & 71.11 & 112.44 & 79.46 & 98.13 & 128.19\\
 & 1.5 & 1.92 & 67.51 & \textbf{58.93} & 115.71 & 105.15 & 66.37 & 63.82 & \em{62.2} & 117.23 & 78.11 & 106 & 131.51\\
 & 1.75 & 2.10 & 71.14 & 84.67 & 119.63 & 110.99 & \em{68.86} & \textbf{60.95} & 72.15 & 119.74 & 79.36 & 100.75 & 133.81\\
 & 2 & 2.24 & 72.57 & 94.48 & 124.32 & 112.8 & 70.37 & \textbf{68.83} & \em{68.96} & 127.26 & 84.85 & 106.19 & 128.92\\
\hline
 & 0 & 0.01 & 23.76 & 23.84 & 30.99 & 31.12 & 19.81 & \textbf{18.33} & \em{19.01} & 25.41 & 25.37 & 22.87 & 28.73\\
 & 0.25 & 0.15 & 26.53 & \textbf{25.12} & 43.11 & 40.92 & 25.98 & \em{25.28} & 26.89 & 38.97 & 33.2 & 36.33 & 45.83\\
 & 0.5 & 0.52 & 35.34 & \textbf{28.45} & 60.29 & 54.75 & 32.56 & \em{31.91} & 35.93 & 57.47 & 42.99 & 48.56 & 60.4\\
 & 0.75 & 0.97 & 46.53 & \textbf{31.58} & 81.14 & 73.84 & 48.41 & \em{41.51} & 53.5 & 81.54 & 58.2 & 75.15 & 91.08\\
5000 & 1 & 1.40 & 53.35 & \textbf{35.06} & 94.19 & 86.83 & 53.36 & \em{52.64} & 59.22 & 94.05 & 64.56 & 83.78 & 105.81\\
 & 1.25 & 1.69 & 58.87 & \textbf{41.09} & 102.34 & 93.53 & 54.68 & \em{54.36} & 56.8 & 102.96 & 68.77 & 94.5 & 117.64\\
 & 1.5 & 1.93 & 61.59 & \textbf{51.86} & 109.6 & 96.23 & \em{57.74} & 59.66 & 65.16 & 110.6 & 73.29 & 99.92 & 125.89\\
 & 1.75 & 2.08 & \textbf{61.85} & 64.77 & 108.83 & 97.83 & 65.31 & \em{62.42} & 68.62 & 111.54 & 71.68 & 97.3 & 120.56\\
 & 2 & 2.23 & 67.63 & \em{65.96} & 116.97 & 105 & 67.87 & \textbf{60.4} & 78.9 & 119.41 & 81.65 & 107.55 & 134.97\\
\hline
\end{tabular}
\end{table}

\begin{sidewaystable}[!htbp]
\centering
\caption{Bias of ATE estimation by shift intensity and training set size for different CATE estimation methods, averaged over simulation runs (\nameref{para:simu2a}). For each setting, method achieving best performance printed in \textbf{bold} (second best in \textit{italic}).}
\label{tab:bias_s1d_2a}
\small
\begin{tabular}{lrr|rr|rrr|rrr|rr|rr|r|rr}
\hline
Train        & Shift  & KL    &  CForest & S-learner & \multicolumn{3}{c|}{DR-learner} & \multicolumn{3}{c|}{T-learner} & \multicolumn{2}{c}{CForest} & \multicolumn{2}{c}{S-learner} & DR-l & \multicolumn{2}{c}{T-learner}  \\
size         & degree & Div.    & OS       & OS      & OS        & Ridge    & Tree      & OS        & Ridge    & Tree        & RCT        & wRCT           & RCT       & wRCT        & RCT      & RCT      & wRCT \\
\hline
 & 0 & 2.82 & -3.62 & -4.41 & -5.87 & -3.91 & -4.45 & -7.43 & -4.06 & -4.23 & \em{0.01} & \textbf{0} & 0.83 & 0.82 & -0.07 & 0.05 & 0.05\\
 & 0.25 & 2.94 & -3.75 & -4.43 & -5.1 & -4.29 & -3.88 & -7.52 & -4.18 & -4.16 & 0.85 & \em{-0.64} & 1.54 & 1.25 & 0.64 & 0.89 & \textbf{0.41}\\
 & 0.5 & 3.30 & -3.59 & -4.39 & -4.91 & -4.73 & -4.16 & -7.32 & -4.36 & -3.89 & 1.87 & \em{-1.05} & 2.08 & 1.57 & 1.75 & 1.67 & \textbf{0.84}\\
 & 0.75 & 4.07 & -3.41 & -3.99 & -4.4 & -2.55 & -3.6 & -6.73 & -4.24 & -3.32 & 3.2 & \em{-1.37} & 2.7 & 1.88 & 2.56 & 2.67 & \textbf{1.17}\\
500 & 1 & 5.44 & -3.61 & -4.38 & -6.05 & -2.71 & \em{-1.57} & -7.53 & -5.08 & -3.25 & 5.01 & \textbf{1.15} & 3.5 & 2.13 & 1.71 & 4.13 & 2.17\\
 & 1.25 & 6.95 & -3.44 & -4.12 & -4.73 & -3.54 & -3.64 & -7.13 & -4.95 & -2.56 & 6.56 & \textbf{0.48} & 4.43 & \em{2.39} & 3.17 & 5.47 & 3.83\\
 & 1.5 & 9.37 & -3.49 & -4.16 & -4.96 & -3.61 & -3.02 & -7.31 & -4.79 & \textbf{-1.77} & 7.89 & \em{2.07} & 5.16 & 2.53 & 4.42 & 6.26 & 3.97\\
 & 1.75 & 12.16 & -3.45 & -4.18 & -6.22 & -3.01 & -3.7 & -6.98 & -5.08 & \textbf{-1.35} & 8.81 & 5.85 & 6.01 & \em{2.65} & 5.95 & 7.47 & 5.69\\
 & 2 & 15.04 & -3.58 & -4.36 & -4.98 & -3.1 & -3.43 & -7.42 & -4.7 & \textbf{-0.95} & 9.5 & \em{2.38} & 6.6 & 2.83 & 7.1 & 8.21 & 6.07\\
\hline
 & 0 & 2.52 & -2.73 & -4.23 & -3.45 & -1.89 & -1.27 & -5.95 & -3.73 & -3.61 & \textbf{0.01} & 0.12 & 0.78 & 0.81 & 0.08 & \em{0.03} & 0.06\\
 & 0.25 & 2.92 & -2.59 & -4.07 & -3.36 & -1.23 & -1.4 & -5.78 & -3.19 & -3.17 & 0.79 & -1.29 & 1.44 & 1.08 & \em{0.46} & 0.72 & \textbf{0.16}\\
 & 0.5 & 3.38 & -2.73 & -4.39 & -3.53 & -1.59 & -1.65 & -6.07 & -3.76 & -3.24 & 1.8 & -2.16 & 2.07 & 1.57 & \em{1.25} & 1.55 & \textbf{0.6}\\
 & 0.75 & 4.09 & -2.81 & -4.4 & -4.37 & \textbf{-1.47} & -1.82 & -6.1 & -4.3 & -2.97 & 3.34 & -2.05 & 2.7 & 1.91 & 2.48 & 2.82 & \em{1.5}\\
2000 & 1 & 5.26 & -2.93 & -4.38 & -3.84 & -1.86 & \textbf{-0.54} & -6.13 & -4.71 & -2.53 & 4.99 & \em{1.46} & 3.41 & 2.23 & 2.67 & 3.83 & 2.19\\
 & 1.25 & 6.98 & -2.71 & -4.32 & -3.81 & \em{-0.9} & \textbf{-0.84} & -6.01 & -4.83 & -2.42 & 6.45 & 1.47 & 4.17 & 2.2 & 2.8 & 5.03 & 3.25\\
 & 1.5 & 9.43 & -2.59 & -4.18 & -3.6 & \textbf{0.58} & 2.09 & -5.97 & -4.71 & \em{-1.2} & 7.99 & 4.75 & 5.38 & 2.7 & 4.07 & 6.91 & 6.16\\
 & 1.75 & 11.93 & -2.63 & -4.05 & -3.17 & \textbf{0.5} & 2.4 & -5.81 & -4.59 & \em{-0.87} & 9.05 & 7.45 & 6.38 & 3.15 & 7.05 & 8.17 & 6.99\\
 & 2 & 15.22 & -2.55 & -4.16 & -3.68 & \em{-0.68} & 2.24 & -5.93 & -4.07 & \textbf{-0.42} & 9.46 & 7.39 & 6.66 & 3.16 & 4.94 & 8.15 & 6.7\\
\hline
 & 0 & 2.93 & -2.41 & -4.18 & -3.67 & -0.93 & -0.74 & -5.66 & -3.32 & -3.41 & \em{0.01} & \textbf{0} & 0.76 & 0.76 & -0.19 & 0.05 & 0.04\\
 & 0.25 & 2.85 & -1.97 & -3.62 & -2.37 & -0.86 & \textbf{-0.19} & -4.96 & -3.18 & -2.97 & 0.92 & -1.51 & 1.49 & 1.12 & 0.5 & 0.85 & \em{0.25}\\
 & 0.5 & 3.39 & -2.29 & -3.76 & -2.51 & -0.9 & \textbf{-0.37} & -5.05 & -3.22 & -2.6 & 1.96 & -1.64 & 2.09 & 1.63 & 1.93 & 1.68 & \em{0.62}\\
 & 0.75 & 4.01 & -2.22 & -3.86 & -2.73 & \em{-0.37} & \textbf{0.15} & -5.2 & -3.88 & -2.56 & 3.4 & -1.98 & 2.76 & 1.95 & 2.48 & 2.87 & 1.29\\
3500 & 1 & 5.27 & -2.4 & -4.04 & -3.39 & \textbf{0.49} & \em{1.74} & -5.5 & -4.43 & -2.25 & 5.18 & 2.6 & 3.47 & 2.14 & 2.38 & 4.09 & 3.05\\
 & 1.25 & 6.79 & -2.61 & -4.21 & -3.37 & \em{-0.77} & \textbf{0.26} & -5.54 & -4.78 & -2.08 & 6.79 & 1.54 & 4.43 & 2.45 & 3.98 & 5.49 & 3.67\\
 & 1.5 & 9.18 & -2.23 & -3.93 & -2.59 & \textbf{1.03} & 1.9 & -5.34 & -4.41 & \em{-1.32} & 7.85 & 1.72 & 5.14 & 2.47 & 4.67 & 6.2 & 3.96\\
 & 1.75 & 11.94 & -2.46 & -4.1 & -3.17 & \textbf{0.71} & 2.59 & -5.43 & -4.63 & \em{-1.11} & 8.82 & 3.93 & 5.92 & 2.82 & 5.85 & 7.33 & 5.58\\
 & 2 & 15.15 & -2.64 & -4.41 & -3.73 & \textbf{-0.5} & 2.05 & -5.81 & -4.86 & \em{-1.17} & 9.21 & 6.79 & 6.58 & 2.73 & 6.2 & 8.27 & 6.16\\
\hline
 & 0 & 2.64 & -2.12 & -3.78 & -2.52 & -0.64 & \em{-0.02} & -5 & -2.91 & -2.89 & -0.05 & -0.04 & 0.74 & 0.74 & \textbf{0.01} & -0.04 & -0.06\\
 & 0.25 & 2.92 & -2.25 & -3.9 & -3.11 & \em{-0.24} & -0.55 & -5.14 & -2.91 & -2.93 & 0.81 & -1.68 & 1.49 & 1.04 & 0.97 & 0.73 & \textbf{0.12}\\
 & 0.5 & 3.33 & -2.06 & -3.69 & -2.5 & \textbf{0.1} & \em{-0.16} & -5 & -3.3 & -2.53 & 1.88 & -1.81 & 2.04 & 1.47 & 1.42 & 1.67 & 0.59\\
 & 0.75 & 4.18 & -2.3 & -4.07 & -3 & \em{-0.74} & \textbf{0.02} & -5.25 & -3.8 & -2.57 & 3.42 & -1.18 & 2.73 & 2.03 & 2.2 & 2.78 & 1.22\\
5000 & 1 & 5.23 & -2.22 & -3.95 & -2.86 & \textbf{-0.2} & \em{0.41} & -5.23 & -4.33 & -2.57 & 4.96 & 1.25 & 3.51 & 2.12 & 3.17 & 4.12 & 2.68\\
 & 1.25 & 6.87 & -2.52 & -4.16 & -3.41 & \textbf{0.4} & 0.95 & -5.46 & -4.72 & -1.87 & 6.89 & \em{0.74} & 4.52 & 2.43 & 4.1 & 5.65 & 4.13\\
 & 1.5 & 9.18 & -2.33 & -3.97 & -2.96 & \textbf{0.17} & 1.78 & -5.1 & -4.55 & \em{-1.32} & 8.15 & 1.48 & 5.42 & 2.58 & 5.85 & 6.88 & 4.61\\
 & 1.75 & 12.10 & -1.86 & -3.47 & -2.45 & \em{1.03} & 2.74 & -4.64 & -4.07 & \textbf{-0.39} & 8.89 & 6.29 & 6.15 & 2.98 & 6.81 & 7.63 & 6.15\\
 & 2 & 15.23 & -2.25 & -3.9 & -3.03 & \em{1.32} & 3.8 & -5.16 & -4.27 & \textbf{-0.57} & 9.51 & 6.84 & 6.61 & 3.26 & 7.45 & 8.2 & 6.85\\
\hline
\end{tabular}
\end{sidewaystable}

\begin{sidewaystable}[!htbp]
\centering
\caption{MSE of CATE estimation by shift intensity and training set size for different estimation methods, averaged over simulation runs (\nameref{para:simu2a}). For each setting, method achieving best performance printed in \textbf{bold} (second best in \textit{italic}).}
\label{tab:mse_s1d_2a}
\small
\begin{tabular}{lrr|rr|rrr|rrr|rr|rr|r|rr}
\hline
Train        & Shift   & KL   &  CForest & S-learner & \multicolumn{3}{c|}{DR-learner} & \multicolumn{3}{c|}{T-learner} & \multicolumn{2}{c}{CForest} & \multicolumn{2}{c}{S-learner} & DR-l & \multicolumn{2}{c}{T-learner}  \\
size         & degree   & Div.  & OS       & OS      & OS        & Ridge    & Tree      & OS        & Ridge    & Tree        & RCT        & wRCT           & RCT       & wRCT        & RCT      & RCT      & wRCT \\
\hline
 & 0 & 2.82 & 37.43 & 47.2 & 94.84 & 100.75 & 76.41 & 86.4 & 30.54 & 33.85 & \textbf{7.71} & \em{7.76} & 24.31 & 24.37 & 11.11 & 21.63 & 11.12\\
 & 0.25 & 2.94 & 37.22 & 47.11 & 95.35 & 75.57 & 81.38 & 87.48 & 31.09 & 31.7 & \textbf{9.33} & 11.57 & 26.49 & 26.4 & 11.53 & 19.14 & \em{11.52}\\
 & 0.5 & 3.30 & 38.09 & 47.77 & 93.25 & 70.52 & 66.67 & 84.78 & 35.26 & 35.56 & \textbf{17.13} & 34.55 & 33.46 & 35.15 & 18 & 24.62 & \em{17.87}\\
 & 0.75 & 4.07 & 35.07 & 44.28 & 93.6 & 91.21 & 64.14 & 78.57 & 30.76 & 29.05 & 27.5 & 52.74 & 34.61 & 37.01 & \textbf{25.26} & 32.78 & \em{26.67}\\
500 & 1 & 5.44 & 38.69 & 49.32 & 102.71 & 95.48 & 166.78 & 91.69 & \em{37.94} & \textbf{31.16} & 47.8 & 84.65 & 41.38 & 43.62 & 40.63 & 40.01 & 43.95\\
 & 1.25 & 6.95 & \em{34.34} & 43.26 & 92.51 & 77.29 & 69.69 & 81.17 & 36.15 & \textbf{26.93} & 67.26 & 158.72 & 47.33 & 42.52 & 55.63 & 49.43 & 52.96\\
 & 1.5 & 9.37 & \em{34.78} & 43.29 & 117.41 & 64.09 & 82.92 & 84.48 & 34.91 & \textbf{24.9} & 88.28 & 141.6 & 54.21 & 43.13 & 65.94 & 70.72 & 55.18\\
 & 1.75 & 12.16 & 36.48 & 44.88 & 86.79 & 84.26 & 77.4 & 77.93 & \em{36.47} & \textbf{25.58} & 104.41 & 134.25 & 66.15 & 46.86 & 85.97 & 89.8 & 78.45\\
 & 2 & 15.04 & 40.22 & 51.41 & 117.57 & 103.47 & 108.49 & 91.47 & \em{35.91} & \textbf{27.33} & 117.4 & 166.51 & 73.7 & 51.99 & 96.27 & 91.76 & 85.72\\
\hline
 & 0 & 2.52 & 24.77 & 39.92 & 69.39 & 38.64 & 44.04 & 60.21 & 28.88 & 27.3 & \textbf{8.2} & \em{8.49} & 25.29 & 25.55 & 11.9 & 20.17 & 12.31\\
 & 0.25 & 2.92 & 22.66 & 36.62 & 61.15 & 46.68 & 40.69 & 56.5 & 21.7 & 21.79 & \textbf{9.11} & 17.4 & 25.27 & 25.67 & \em{12.27} & 19.06 & 13.67\\
 & 0.5 & 3.38 & 23.36 & 40.53 & 63.01 & 43.82 & 57.06 & 60.26 & 24.92 & 20.93 & \textbf{16.28} & 45.42 & 31.93 & 35.5 & \em{17.8} & 25.63 & 19.64\\
 & 0.75 & 4.09 & \em{24.91} & 41.14 & 62.5 & 37.85 & 47.54 & 61.23 & 28.29 & \textbf{20.71} & 29.3 & 72.08 & 35.42 & 39.42 & 27.35 & 38.08 & 30.23\\
2000 & 1 & 5.26  & \em{24.69} & 39.05 & 63.93 & 37.05 & 51.06 & 60.09 & 31.62 & \textbf{18.55} & 47.02 & 64.63 & 38.93 & 41.07 & 37.35 & 45.56 & 36.95\\
 & 1.25 & 6.98 & \em{23.26} & 38.99 & 63.13 & 41.87 & 43.79 & 59.06 & 33.76 & \textbf{20.66} & 65.93 & 100.8 & 45.44 & 43.09 & 50.5 & 51.02 & 51.02\\
 & 1.5 & 9.43 & \em{24.77} & 42 & 79.1 & 64.98 & 96.36 & 64.11 & 33.47 & \textbf{21.65} & 89.63 & 103.76 & 59.41 & 49.25 & 76.25 & 69.79 & 78.28\\
 & 1.75 & 11.93 & \em{24.54} & 39.08 & 71.37 & 74.64 & 98.74 & 59.68 & 31.85 & \textbf{19.3} & 108.56 & 158.95 & 70.25 & 52.11 & 95.57 & 103.41 & 101.76\\
 & 2 & 15.22 & \em{23.43} & 39.83 & 66.41 & 47.75 & 89.98 & 61.15 & 34.14 & \textbf{20.65} & 116.68 & 208.11 & 74.49 & 49.51 & 96.45 & 79.35 & 93.46\\
\hline
 & 0 & 2.93 & 20.74 & 38.4 & 57.65 & 36.75 & 40.48 & 56.42 & 24.32 & 24.65 & \textbf{7.55} & \em{7.69} & 24.02 & 24.18 & 11.67 & 20.97 & 11.58\\
 & 0.25 & 2.85 & 17.9 & 34.03 & 52.44 & 32.03 & 40.27 & 48.91 & 23.74 & 22.99 & \textbf{10.09} & 21.26 & 26.64 & 27.86 & \em{12.82} & 21.18 & 14.01\\
 & 0.5 & 3.39 & 18.72 & 33.18 & 51.65 & 27.34 & 39 & 47.32 & 19.87 & \em{16.82} & \textbf{16.62} & 58.54 & 29.97 & 33.15 & 17.48 & 26.29 & 19.55\\
 & 0.75 & 4.01 & \em{19.19} & 34.73 & 56.67 & 34.18 & 43.26 & 49.95 & 25.81 & \textbf{17.5} & 29.86 & 103.45 & 35.11 & 38.22 & 28.17 & 31.97 & 30.87\\
3500 & 1 & 5.27 & \em{21.22} & 38.5 & 65.98 & 39.1 & 76.1 & 56.8 & 31.11 & \textbf{18.87} & 49.48 & 71.97 & 40.2 & 42.44 & 39.8 & 39.28 & 40.99\\
 & 1.25 & 6.79 & \em{20.38} & 36.03 & 53.88 & 33.08 & 49.66 & 51.35 & 33.33 & \textbf{17.39} & 70.98 & 111.6 & 47.92 & 43.2 & 57.07 & 62.41 & 51.49\\
 & 1.5 & 9.18 & \em{19.63} & 35.95 & 70.68 & 58.7 & 82.49 & 51.67 & 30.39 & \textbf{18.97} & 88.1 & 120.91 & 55.55 & 45.01 & 66.91 & 66.19 & 71.98\\
 & 1.75 & 11.94 & \em{19.63} & 35.55 & 54.55 & 41.46 & 86.55 & 50.35 & 31.43 & \textbf{15.85} & 103.74 & 102.29 & 63.82 & 46.56 & 81.41 & 76.27 & 78.81\\
 & 2 & 15.15 & \em{20.44} & 37.34 & 52.04 & 33.9 & 70.82 & 53.78 & 34.3 & \textbf{16.56} & 111.67 & 136.3 & 72.53 & 47.06 & 97.04 & 80.99 & 78.24\\
\hline
 & 0 & 2.64 & 18.16 & 33.76 & 59.42 & 31.65 & 39.66 & 47.26 & 20.71 & 19.68 & \textbf{7.6} & \em{7.74} & 23.72 & 23.6 & 11.52 & 19.88 & 11.61\\
 & 0.25 & 2.92 & 18.81 & 34.99 & 49.79 & 31.67 & 36.22 & 49.2 & 19.26 & 19.72 & \textbf{9.88} & 20.01 & 27.91 & 28.19 & \em{12.14} & 21.69 & 13.87\\
 & 0.5 & 3.33 & 18.02 & 33.88 & 54.92 & 35.51 & 39.12 & 49.27 & 21.67 & \em{16.76} & \textbf{16.74} & 87.34 & 31.02 & 34.38 & 18.56 & 24.05 & 21.31\\
 & 0.75 & 4.18 & \em{17.66} & 33.95 & 44.95 & 25.29 & 38.93 & 47.43 & 23.59 & \textbf{16.58} & 30.2 & 89.63 & 36.12 & 39.9 & 27.66 & 44.06 & 30.68\\
5000 & 1 & 5.23 & \textbf{17.48} & 33.63 & 49.97 & 29.1 & 48.54 & 47.89 & 28.7 & \em{18.13} & 46.43 & 80.45 & 40.07 & 42.06 & 39.98 & 46.77 & 38.25\\
 & 1.25 & 6.87 & \em{20.04} & 36.54 & 53.89 & 33.91 & 50.57 & 51.67 & 32.42 & \textbf{16.86} & 72.71 & 121.44 & 49.42 & 45.18 & 58.9 & 66.74 & 56.3\\
 & 1.5 & 9.18 & \em{17.68} & 32.9 & 45.59 & 30.67 & 60.29 & 45.23 & 31.14 & \textbf{14.83} & 92.41 & 162.72 & 56.67 & 43.41 & 74.65 & 78.07 & 63.22\\
 & 1.75 & 12.10 & \textbf{16.91} & 32.38 & 47.65 & 36.18 & 80.13 & 45.23 & 27.74 & \em{16.96} & 105.55 & 123.41 & 66.73 & 48.54 & 86.49 & 92.77 & 83.16\\
 & 2 & 15.23 & \em{17.7} & 33.02 & 48.57 & 36.72 & 104.62 & 47.78 & 30.67 & \textbf{14.75} & 117.25 & 147.32 & 71.05 & 47.19 & 94.93 & 91.72 & 96.34\\
\hline
\end{tabular}
\end{sidewaystable}

\begin{sidewaystable}[!htbp]
\centering
\caption{Bias of ATE estimation by shift intensity and training set size for different CATE estimation methods, averaged over simulation runs (\nameref{para:simu2b}). For each setting, method achieving best performance printed in \textbf{bold} (second best in \textit{italic}).}
\label{tab:bias_s1d_2b}
\small
\begin{tabular}{lrr|rr|rrr|rrr|rr|rr|r|rr}
\hline
Train        & Shift     & KL &  CForest & S-learner & \multicolumn{3}{c|}{DR-learner} & \multicolumn{3}{c|}{T-learner} & \multicolumn{2}{c}{CForest} & \multicolumn{2}{c}{S-learner} & DR-l & \multicolumn{2}{c}{T-learner}  \\
size         & degree  & Div.   & OS       & OS      & OS        & Ridge    & Tree      & OS        & Ridge    & Tree        & RCT        & wRCT           & RCT       & wRCT        & RCT      & RCT      & wRCT \\
\hline
& 0 & 2.98 & -5.53 & -6.38 & -7.68 & -6.73 & -6.81 & -9.25 & -5.43 & -5.59 & \em{0.03} & \textbf{0} & 0.07 & 0.05 & -0.05 & 0.12 & 0.11\\
 & 0.25 & 3.19  & -5.37 & -6.13 & -7.43 & -4.82 & -5.46 & -9.03 & -5.33 & -5.31 & 0.7 & -0.78 & 0.75 & \textbf{0.24} & 0.47 & 0.84 & \em{0.31}\\
 & 0.5 & 3.62 & -5.51 & -6.22 & -7.39 & -5.89 & -5.74 & -9.2 & -5.3 & -4.66 & 1.73 & -1.71 & 1.55 & \textbf{0.6} & 1.61 & 1.95 & \em{0.93}\\
 & 0.75 & 4.57  & -5.71 & -6.56 & -7.68 & -7.19 & -6.15 & -9.6 & -6.08 & -4.79 & 3.04 & \em{-1.09} & 2.29 & \textbf{0.77} & 2.01 & 2.95 & 1.58\\
500 & 1 & 5.78  & -5.72 & -6.46 & -7.19 & -5.86 & -4.86 & -9.36 & -6.08 & -4.18 & 4.73 & -2.66 & 3.17 & \textbf{0.83} & 2.46 & 4.13 & \em{2.06}\\
 & 1.25 & 7.79  & -5.36 & -5.96 & -6.53 & -5.11 & -4.3 & -8.72 & -5.8 & -3.31 & 6.21 & \em{-2.69} & 4.3 & \textbf{1.05} & 4.06 & 5.46 & 3.24\\
 & 1.5 & 10.33 & -5.55 & -6.2 & -5.96 & -4.16 & -3.03 & -9.24 & -6.04 & -3.22 & 7.82 & \em{2.47} & 5.49 & \textbf{1.68} & 4.29 & 6.67 & 4.99\\
 & 1.75 & 13.82  & -5.46 & -6.35 & -7.05 & -5.34 & -4.84 & -9.26 & -6.12 & \em{-2.65} & 9.31 & 6.51 & 6.98 & \textbf{2.38} & 6.17 & 8.49 & 7.01\\
 & 2 & 16.98 & -5.61 & -6.45 & -7.51 & -5.71 & -3.08 & -9.38 & -6.11 & \textbf{-2.29} & 9.97 & 4.22 & 7.23 & \em{2.79} & 6.33 & 8.59 & 6.87\\
\hline
 & 0 & 2.99 & -4.79 & -6.39 & -5.9 & -3.77 & -3.07 & -8.12 & -5.6 & -6.46 & -0.02 & -0.05 & \em{-0.01} & -0.05 & \textbf{0} & -0.05 & -0.06\\
 & 0.25 & 3,00  & -4.45 & -5.98 & -5.77 & -2.93 & -3.04 & -7.63 & -4.14 & -4.08 & 0.64 & -1.15 & 0.66 & \em{0.12} & 0.57 & 0.72 & \textbf{0.11}\\
 & 0.5 & 3.67 & -4.8 & -6.45 & -5.86 & -4.22 & -3.9 & -8.1 & -4.8 & -4.22 & 1.76 & -2.54 & 1.46 & \textbf{0.45} & 2.06 & 1.8 & \em{0.64}\\
 & 0.75 & 4.61  & -4.6 & -6.2 & -5.22 & -2.59 & -3 & -7.92 & -5.15 & -4.06 & 3.27 & -3.66 & 2.42 & \textbf{0.62} & 2.39 & 3.3 & \em{1.01}\\
2000 & 1 & 5.99 & -4.77 & -6.39 & -6.19 & -4.61 & -2.14 & -8.03 & -5.67 & -3.56 & 4.77 & \em{0.86} & 3.11 & \textbf{0.6} & 2.93 & 4.18 & 2.43\\
 & 1.25 & 7.48 & -4.68 & -6.29 & -5.66 & -3.2 & -1.78 & -8.09 & -5.92 & -3.12 & 6.73 & \em{1.77} & 4.66 & \textbf{1.22} & 3.23 & 6.07 & 4.55\\
 & 1.5 & 10.33 & -4.77 & -6.3 & -5.94 & -2.86 & \em{-1.87} & -7.96 & -5.99 & -2.81 & 7.87 & 4.02 & 5.75 & \textbf{1.41} & 4.91 & 7 & 5.68\\
 & 1.75 & 13.73 & -4.7 & -6.32 & -5.24 & -3 & \textbf{-0.76} & -8.07 & -5.82 & -2.3 & 9.27 & 3.31 & 6.61 & \em{1.9} & 5.11 & 8.01 & 6.33\\
 & 2 & 17.68 & -4.65 & -6.32 & -5.44 & -3.35 & -2.33 & -8 & -5.75 & \em{-1.88} & 10.11 & \textbf{0.68} & 7.17 & 2.21 & 8.35 & 8.59 & 4.95\\
\hline
 & 0 & 2.99 & -4.32 & -5.94 & -4.98 & -2.43 & -2.51 & -7.37 & -4.13 & -4.27 & \em{0.03} & -0.03 & -0.03 & -0.04 & -0.08 & 0.03 & \textbf{0.01}\\
 & 0.25 & 2.97  & -4.45 & -6.06 & -5.05 & -2.79 & -2.07 & -7.54 & -4.71 & -4.5 & 0.71 & -1.46 & 0.73 & \em{0.07} & 0.83 & 0.74 & \textbf{0.05}\\
 & 0.5 & 3.50 & -4.22 & -5.83 & -4.95 & -2.34 & -1.98 & -7.25 & -4.33 & -3.69 & 1.84 & -2.67 & 1.43 & \textbf{0.3} & 1.72 & 1.82 & \em{0.65}\\
 & 0.75 & 4.45 & -4.46 & -6.16 & -5.13 & -2.75 & \em{-1.54} & -7.6 & -5.08 & -3.64 & 3.15 & -2.28 & 2.39 & \textbf{0.64} & 2.52 & 3.14 & 1.68\\
3500 & 1 & 6.00 & -4.66 & -6.31 & -5.5 & -2.87 & \em{-1.3} & -7.72 & -5.62 & -3.48 & 4.92 & -2.06 & 3.43 & \textbf{0.67} & 2.96 & 4.55 & 2.02\\
 & 1.25 & 7.50  & -4.26 & -5.88 & -4.72 & -2.18 & \textbf{0.16} & -7.29 & -5.45 & -2.86 & 6.58 & -1.34 & 4.34 & \em{0.95} & 4.54 & 5.46 & 3.22\\
 & 1.5 & 10.40 & -4.2 & -5.89 & -4.68 & -1.8 & \textbf{-0.39} & -7.27 & -5.42 & -2.37 & 7.97 & 1.4 & 5.71 & \em{1.23} & 5.08 & 7.06 & 4.03\\
 & 1.75 & 13.87 & -4.48 & -6.26 & -5.09 & -2.33 & \textbf{-0.46} & -7.63 & -5.78 & -2.3 & 9.11 & 4.94 & 6.44 & \em{1.7} & 4.99 & 7.78 & 6.21\\
 & 2 & 17.16 & -4.47 & -6.27 & -5.68 & -1.73 & \textbf{-1.48} & -7.71 & -5.75 & \em{-1.6} & 10.05 & 2.55 & 7.11 & 2.69 & 6.69 & 8.55 & 6.67\\
\hline
 & 0 & 3.12  & -4.28 & -5.94 & -4.76 & -2.6 & -2.26 & -7.22 & -3.78 & -4.05 & 0.07 & 0.05 & \textbf{0} & \em{0.01} & -0.11 & -0.04 & -0.05\\
 & 0.25 & 3.06 & -4.14 & -5.88 & -4.72 & -2.49 & -2.18 & -7.1 & -3.99 & -3.83 & 0.72 & -1.42 & 0.73 & \textbf{0.05} & 1.2 & 0.84 & \em{0.19}\\
 & 0.5 & 3.59 & -4.13 & -5.76 & -4.92 & -2.25 & -1.93 & -7.03 & -4.24 & -3.65 & 1.71 & -2.44 & 1.44 & \textbf{0.26} & 2.13 & 1.85 & \em{0.46}\\
 & 0.75 & 4.56 & -4.3 & -5.96 & -5.1 & -2.23 & -1.5 & -7.17 & -4.75 & -3.52 & 3.17 & \em{-0.9} & 2.21 & \textbf{0.47} & 2.19 & 2.92 & 1.27\\
5000 & 1 & 5.82 & -4.27 & -5.89 & -4.97 & -1.93 & -1.46 & -7.1 & -5.18 & -3.11 & 4.89 & \textbf{-0.22} & 3.31 & \em{0.7} & 2.68 & 4.36 & 1.89\\
 & 1.25 & 7.69 & -4 & -5.64 & -4.34 & -2.04 & \textbf{0.12} & -6.86 & -5.33 & -2.63 & 6.54 & 4.26 & 4.45 & \em{0.92} & 4.74 & 5.65 & 4.17\\
 & 1.5 & 10.31 & -4.3 & -6.05 & -5.1 & -2.42 & \textbf{-0.52} & -7.28 & -5.74 & -2.49 & 8.24 & 3.39 & 5.82 & \em{1.43} & 4.18 & 7.26 & 5.58\\
 & 1.75 & 13.72 & -4.52 & -6.16 & -5.04 & -2.59 & \textbf{-1.04} & -7.4 & -5.88 & -2.11 & 9.14 & 4.9 & 6.47 & \em{1.98} & 5.79 & 7.9 & 6.7\\
 & 2 & 17.32 & -4.27 & -6.02 & -4.84 & -1.91 & \textbf{-0.56} & -7.23 & -5.62 & \em{-1.67} & 9.96 & 3.5 & 7 & 1.97 & 6.6 & 8.45 & 6.43\\
\hline
\end{tabular}
\end{sidewaystable}

\begin{sidewaystable}[!htbp]
\centering
\caption{MSE of CATE estimation by shift intensity and training set size for different estimation methods, averaged over simulation runs (\nameref{para:simu2b}). For each setting, method achieving best performance printed in \textbf{bold} (second best in \textit{italic}).}
\label{tab:mse_s1d_2b}
\small
\begin{tabular}{lrr|rr|rrr|rrr|rr|rr|r|rr}
\hline
Train        & Shift   & KL   &  CForest & S-learner & \multicolumn{3}{c|}{DR-learner} & \multicolumn{3}{c|}{T-learner} & \multicolumn{2}{c}{CForest} & \multicolumn{2}{c}{S-learner} & DR-l & \multicolumn{2}{c}{T-learner}  \\
size         & degree   & Div.  & OS       & OS      & OS        & Ridge    & Tree      & OS        & Ridge    & Tree        & RCT        & wRCT           & RCT       & wRCT        & RCT      & RCT      & wRCT \\
\hline
 & 0 & 2.98 & 46.53 & 61.3 & 111.73 & 82.85 & 83.54 & 108.92 & 45.41 & 51.38 & \textbf{5.22} & \em{5.28} & 18.27 & 18.27 & 7.07 & 16.15 & 7.33\\
 & 0.25 & 3.19 & 45.23 & 58.48 & 107.7 & 105.93 & 123.73 & 105.4 & 42.59 & 45.13 & \textbf{6.91} & \em{8.58} & 18.42 & 19.6 & 9.11 & 14.66 & 8.92\\
 & 0.5 & 3.62 & 47.1 & 59.48 & 104.44 & 81.08 & 100.4 & 108.1 & 36.17 & 35.04 & \textbf{13.45} & 23.04 & 20.44 & 23.6 & 15.97 & 26.37 & \em{13.64}\\
 & 0.75 & 4.47 & 49.53 & 64.22 & 109.15 & 93.88 & 89.59 & 115.87 & 45.45 & 36.6 & 25.35 & 45.64 & 24.45 & 27.44 & \em{23.96} & 27.56 & \textbf{19.9}\\
500 & 1 & 5.78 & 48.78 & 63.54 & 124.72 & 93.16 & 108.59 & 112.39 & 45.89 & 32.91 & 43.62 & 106.32 & \em{30.65} & \textbf{30.28} & 36 & 40.86 & 33.74\\
 & 1.25 & 7.79 & 45.41 & 56.97 & 111.54 & 91.98 & 99.49 & 100.86 & 41.42 & \textbf{30.18} & 62.16 & 139.28 & 40.83 & \em{31.4} & 52.28 & 52.26 & 48.02\\
 & 1.5 & 10.33 & 48.76 & 61.09 & 120.58 & 83.81 & 152.2 & 113.71 & 45.95 & \textbf{33.42} & 88.73 & 91.14 & 56.46 & \em{36.34} & 70.6 & 59.34 & 63.67\\
 & 1.75 & 13.82 & 46.44 & 60.79 & 115.11 & 106.45 & 108.84 & 108.87 & 45.81 & \textbf{26.6} & 114.53 & 183.94 & 75.91 & \em{40.42} & 100.12 & 88.11 & 107.75\\
 & 2 & 16.98 & 46.1 & 60.06 & 113.18 & 102.82 & 174.57 & 108.76 & 44.54 & \textbf{23.56} & 126.01 & 125.37 & 78.09 & \em{39.27} & 99.86 & 93.8 & 85.42\\
\hline
 & 0 & 2.99 & 34.02 & 56.98 & 83.89 & 54.37 & 58.17 & 83.85 & 48.68 & 64.93 & \textbf{5.81} & \em{5.88} & 19.34 & 19.61 & 8.19 & 15.34 & 8.28\\
 & 0.25 & 3.00 & 29.49 & 50.98 & 76.11 & 49.77 & 60.99 & 75.93 & 25.82 & 25.96 & \textbf{6.98} & 10.82 & 18.25 & 20.02 & \em{9.11} & 17.66 & 9.61\\
 & 0.5 & 3.67 & 33.12 & 55.6 & 72.25 & 46.41 & 46.75 & 81.12 & 30.23 & 26.36 & \textbf{13.76} & 37.77 & 19.88 & 24.6 & 14.86 & 29.09 & \em{13.93}\\
 & 0.75 & 4.61 & 31.86 & 54.78 & 80.56 & 49.81 & 55.36 & 81.86 & 36.61 & 28.8 & \em{27.04} & 154.84 & \textbf{25.33} & 28.49 & 27.44 & 41.5 & 28.09\\
2000 & 1 & 5.99 & 33.65 & 55.77 & 72.6 & 46.45 & 60.35 & 80.97 & 38.76 & \textbf{23.86} & 44.3 & 63.07 & \em{31.02} & 31.36 & 37.28 & 39.17 & 32.74\\
 & 1.25 & 7.48 & \em{33.01} & 56.3 & 87.1 & 50.7 & 64.66 & 84.99 & 43.36 & \textbf{23.91} & 70.32 & 67.38 & 46.2 & 33.52 & 61.94 & 67.42 & 56.73\\
 & 1.5 & 10.33 & \em{33.69} & 55.48 & 77.48 & 51.87 & 70.75 & 81.06 & 43.36 & \textbf{22.2} & 88.01 & 93.85 & 58.31 & 34.63 & 74.66 & 66.4 & 71.68\\
 & 1.75 & 13.73 & \em{31.71} & 54.91 & 79.32 & 50.4 & 76.92 & 82.71 & 42.06 & \textbf{19.67} & 111.74 & 147.67 & 68.42 & 35.19 & 89.55 & 65.77 & 82.72\\
 & 2 & 17.68 & \em{32.33} & 56.07 & 67.16 & 51.21 & 58.54 & 82.06 & 41.47 & \textbf{19.17} & 129.68 & 111.65 & 77.32 & 38.03 & 100.22 & 131.54 & 79.72\\
\hline
 & 0 & 2.99 & 28.05 & 51.13 & 72.69 & 38.9 & 48.91 & 72.77 & 28.94 & 31.78 & \textbf{5.44} & \em{5.67} & 18.39 & 18.49 & 8.02 & 15.26 & 7.95\\
 & 0.25 & 2.97 & 28.66 & 51.48 & 70 & 38.2 & 51.76 & 74.17 & 38.98 & 32.72 & \textbf{7.07} & 11.5 & 18.86 & 21.25 & \em{9.13} & 16.56 & 9.78\\
 & 0.5 & 3.50 & 27.74 & 50.57 & 66.08 & 38.05 & 47.27 & 72.25 & 26.5 & 24.19 & \textbf{14.89} & 46.8 & 20.9 & 27.29 & \em{16.06} & 22.96 & 16.65\\
 & 0.75 & 4.45 & 29.41 & 52.81 & 79.39 & 44.81 & 49.55 & 75.39 & 33.09 & \em{23.02} & 26.21 & 103.1 & 24.51 & 28.51 & 24.75 & 41.25 & \textbf{21.19}\\
3500 & 1 & 6.00 & 30.62 & 53.15 & 64.92 & 38.36 & 58.88 & 75 & 39.02 & \textbf{21.75} & 45.43 & 127.32 & 33.08 & \em{29.67} & 41.02 & 47.21 & 34.11\\
 & 1.25 & 7.50 & \em{27.33} & 49.97 & 66.27 & 38.69 & 78.7 & 71.01 & 37.88 & \textbf{20.5} & 68.1 & 173.54 & 42.18 & 32.84 & 52.77 & 65.16 & 45.2\\
 & 1.5 & 10.40 & \em{27} & 50.26 & 68.44 & 47.33 & 68.79 & 70.92 & 38.41 & \textbf{20.9} & 89.94 & 112.42 & 57.41 & 33.69 & 75.11 & 81.29 & 60.63\\
 & 1.75 & 13.87 & \em{29.01} & 53.2 & 68.33 & 44.42 & 72.64 & 74.55 & 42.47 & \textbf{18.1} & 109.79 & 136.09 & 66.94 & 36.19 & 86.19 & 68.07 & 80.93\\
 & 2 & 17.16 & \em{29.14} & 53.3 & 66.32 & 47.85 & 58.99 & 75.38 & 41.67 & \textbf{16.48} & 128.92 & 113.56 & 77.84 & 44.29 & 100.65 & 95.59 & 91.14\\
\hline
 & 0 & 3.12 & 26.08 & 48.7 & 60.13 & 31.96 & 38.88 & 67.99 & 23.98 & 26.11 & \textbf{5.42} & \em{5.5} & 17.6 & 17.51 & 7.27 & 13.99 & 7.38\\
 & 0.25 & 3.06 & 25.26 & 48.91 & 59.2 & 35.38 & 41.97 & 66.92 & 24.3 & 23.91 & \textbf{7.48} & 13.7 & 19.15 & 21.72 & \em{9.84} & 18.6 & 10.69\\
 & 0.5 & 3.59 & 25.51 & 47.96 & 58.62 & 35.94 & 39.65 & 66.99 & 25.84 & 22.22 & \textbf{13.46} & 37.92 & 19.55 & 24.89 & \em{14.6} & 27.59 & 15.32\\
 & 0.75 & 4.56 & 26.09 & 48.06 & 56.54 & 37.49 & 39.72 & 66.26 & 29.67 & \em{20.88} & 25.74 & 53.22 & 23.56 & 27.51 & 24.02 & 40.76 & \textbf{20.13}\\
5000 & 1 & 5.82 & \em{26.2} & 48.63 & 60.2 & 35.77 & 39.01 & 67.02 & 35.05 & \textbf{20.49} & 44.87 & 100.33 & 31.93 & 30.69 & 38.84 & 46.02 & 38.92\\
 & 1.25 & 7.69 & \em{24.7} & 47.38 & 64.71 & 42.64 & 57.49 & 65.43 & 37.81 & \textbf{19.17} & 67.81 & 108.59 & 43.38 & 33.45 & 54.84 & 61.97 & 50.91\\
 & 1.5 & 10.31 & \em{26.65} & 50.79 & 63.56 & 43.44 & 54.69 & 69.8 & 42.46 & \textbf{17.81} & 94.26 & 124.26 & 59.78 & 35.38 & 78.68 & 69.38 & 89.64\\
 & 1.75 & 13.72 & \em{28.42} & 50.98 & 62.66 & 50.51 & 59.41 & 69.61 & 42.37 & \textbf{16.1} & 110.83 & 190.54 & 68.43 & 37.49 & 89.63 & 90.15 & 88.29\\
 & 2 & 17.32 & \em{25.92} & 49.49 & 58.89 & 41.56 & 61.87 & 67.18 & 40.06 & \textbf{15.78} & 126.05 & 90.74 & 74.91 & 35.83 & 98.03 & 85.04 & 77.14\\
\hline
\end{tabular}
\end{sidewaystable}

\clearpage

\subsection{WHI Data Application} \label{sec:whi}

\paragraph{Data} We consider a case study using clinical trial and observational data from the Women’s Health Initiative \citep{machens2003whi}. A focus of this study was to investigate the effectiveness of hormone replacement therapy (HRT) treatment in preventing the onset of chronic (cardiovascular) diseases. As the observational study and clinical trial data led to conflicting findings, the WHI study has become a prime example of how confounding in observational data can introduce bias and, in this case, suggest overly optimistic results (for more detail, see \citealt{kallus2018policy}). In this setting, we study how multi-accurate CATE estimators that are ``warm-started'' with observational data and have access to small samples from the clinical trial compare to estimators that draw on either observational or clinical trial data only.

We aim to assess the effect of HRT treatment on systolic blood pressure as a major risk factor for cardiovascular diseases. We estimate the CATE with respect to two sets of covariates -- a small (age, ethnicity) and an extended set (age, ethnicity, number of cigarettes per day, systolic blood pressure baseline, diastolic blood pressure baseline, BMI baseline; see Table \ref{tab:whi}). 

In our application setting, we start with the observational study (OS) (52,335 observations) and draw a random 50\% sample that serves as observational training data for (naive) CATE estimation. We split the clinical trial data (14,531 observations) into an initial 50\% training set and a 50\% test set. The initial training set is used to draw further random samples of size $\{250, 500, 750, 1000, 1250, 1500\}$ that serve as clinical trial (CT) training data. For each CT training set size, sampling is repeated 25 times. 

\paragraph{CATE estimation} 
We use the following methods for estimating the CATE based on the training set from the observational study.

\begin{itemize}
\item (\textbf{CForest-OS}) Causal forest \citep{wager2018forests} trained in the training set of the observational data. 
\item (\textbf{S-learner-OS}) S-learner using random forest to learn a joint outcome model for treated and untreated in the training set of the observational data. 
\item (\textbf{DR-learner-OS}) DR-learner \citep{kennedy2023towards} using regression forest to learn separate outcome models for treated and untreated in the training set of the observational data. 
\item (\textbf{T-learner-OS}) T-learner using regression forest to learn separate outcome models for treated and untreated in the training set of the observational data. 
\end{itemize}

We estimate DR-learner and T-learner using multi-calibration boosting with samples of clinical trial data. The MCBoost hyperparameter settings are shown in Table \ref{tab:tuning2}. {Hyperparameters of the baseline CATE learner are listed in Table \ref{tab:baseline2}}.

\begin{itemize}
\item (\textbf{DR-learner-MC-Ridge}) DR-learner using regression forest in the training set of the observational data is post-processed with MCBoost using with ridge regression in the training set of the clinical trial data. 
\item (\textbf{DR-learner-MC-Tree}) DR-learner using regression forest in the training set of the observational data is post-processed with MCBoost using with decision trees in the training set of the clinical trial data. 
\item (\textbf{T-learner-MC-Ridge}) T-learner using regression forest in the training set of the observational data is post-processed with MCBoost using with ridge regression in the training set of the clinical trial data. 
\item (\textbf{T-learner-MC-Tree}) T-learner using regression forest in the training set of the observational data is post-processed with MCBoost using with decision trees in the training set of the clinical trial data. 
\end{itemize}

We further compare to the following CATE learner that are solely based on clinical trial data.

\begin{itemize}
\item (\textbf{CForest-CT}) Causal forest trained in the training set of the clinical trial data. 
\item (\textbf{S-learner-CT}) S-learner using random forest to learn a joint outcome model for treated and untreated in the training set of the clinical trial data. 
\item (\textbf{T-learner-CT}) T-learner using random forest to learn separate outcome models for treated and untreated in the training set of the clinical trial data. 
\end{itemize}

We infer the ``true'' CATE by applying the following methods to the test set of the clinical trial data.

\begin{itemize}
\item (\textbf{RL-NET}) R-learner \citep{nie2020rlearner} using elastic net as base learner.
\item (\textbf{TL-NET}) T-learner using elastic net as base learner.
\item (\textbf{XL-RF}) X-learner \citep{kuenzel2019metalearner} using random forest as base learner.
\end{itemize}

\paragraph{Evaluation} 

We compare the outlined methods with respect to the bias in ATE and MSE in CATE estimation in the test set of the clinical trial data. To evaluate bias, we use the observed difference in outcomes by treatment condition in the clinical trial, $\hat{\text{ATE}}_{obs} = \frac{\sum TY}{\sum T} - \frac{\sum (1-T)Y}{\sum (1-T)}$, as the estimate of the true ATE and evaluate against the respective mean of $\hat{\tau}$ of the various CATE estimation methods. 

$$ \text{Bias} = \hat{\text{ATE}}_{obs} - \frac{1}{n}\sum \hat{\tau}(x)$$

In evaluating MSE, we use the estimated CATE function, $\tau^*(x)$, based on learners that had privileged access to the clinical trial test data (XRF, RL, TL) as a substitute for the true $\tau(x)$ and evaluate against $\hat{\tau}(x)$ of the CATE estimation methods outlined above (using the observational and/or clinical trial training data only). 

$$ \text{MSE} = \frac{1}{n}\sum(\tau^*(x) - \hat{\tau}(x))^2$$ 

\begin{table}[!htbp]
\centering
\caption{Sample composition (averages and proportions) of the observational study and clinical trial of the WHI data.}
\label{tab:whi}
\small
\begin{tabular}{l|rrr|rrr}
\hline
    & \multicolumn{3}{c}{OS} & \multicolumn{3}{c}{RCT}  \\
    & Overall & $T=0$ & $T=1$ & Overall & $T=0$ & $T=1$ \\
\hline
Treatment & 0.33 & & & 0.50 & & \\
Systolic blood pressure & 124.83 & 125.88 & 122.68 & 125.54 & 125.30 & 125.78\\
\hline
Systolic blood pressure baseline & 125.09 & 126.24 & 122.75 & 127.65 & 127.69 & 127.61\\
Diastolic blood pressure baseline & 74.56 & 74.78 & 74.12 & 75.68 & 75.78 & 75.59\\
BMI baseline & 26.83 & 27.29 & 25.88 & 28.52 & 28.53 & 28.50\\
Age & 62.52 & 63.43 & 60.68 & 63.37 & 63.37 & 63.37\\
Cigarettes per day & & & & & &\\
\, 0 & 0.53 & 0.54 & 0.51 & 0.52 & 0.52 & 0.51\\
\, <1 & 0.02 & 0.02 & 0.02 & 0.02 & 0.02 & 0.02\\
\, 1-4 & 0.09 & 0.09 & 0.09 & 0.08 & 0.07 & 0.08\\
\, 5-14 & 0.15 & 0.15 & 0.15 & 0.15 & 0.16 & 0.15\\
\, 15-24 & 0.13 & 0.12 & 0.14 & 0.14 & 0.14 & 0.15\\
\, 25-34 & 0.04 & 0.04 & 0.05 & 0.05 & 0.05 & 0.05\\
\, 35-44 & 0.03 & 0.03 & 0.03 & 0.03 & 0.03 & 0.03\\
\, 45+ & 0.01 & 0.01 & 0.01 & 0.01 & 0.01 & 0.01\\
Ethnicity & & & & & &\\
\, White & 0.89 & 0.87 & 0.92 & 0.84 & 0.84 & 0.84\\
\, Black & 0.05 & 0.07 & 0.02 & 0.07 & 0.07 & 0.06\\
\, Hispanic & 0.03 & 0.03 & 0.02 & 0.05 & 0.05 & 0.05\\
\, American Indian & 0.00 & 0.00 & 0.00 & 0.00 & 0.00 & 0.00\\
\, Asian/Pacific Islander & 0.02 & 0.02 & 0.03 & 0.02 & 0.02 & 0.02\\
\, Unknown & 0.01 & 0.01 & 0.01 & 0.01 & 0.01 & 0.01\\
\hline
\end{tabular}
\end{table}

\begin{table}[h!]
\centering
\caption{Hyperparameter settings for post-processing using MCBoost. Default settings are used for parameters not listed.}
\subfloat[T-learner MC]{
\begin{tabular}{l | l | l | l}
\hline
Method & Implementation & Hyperparameter & Value \\ 
\hline
Ridge & \texttt{mcboost} & \texttt{max\_iter} & \texttt{10} \\ 
 &   &  \texttt{alpha} & \texttt{1e-06} \\
 &   &  \texttt{eta} & \texttt{0.1} \\ 
 &   &  \texttt{weight\_degree} & \texttt{2} \\
 \cline{2-4}
 &  \texttt{glmnet} &  \texttt{alpha} & \texttt{0} \\
 &                              &  \texttt{s} & \texttt{1} \\
\hline
Tree & \texttt{mcboost} & \texttt{max\_iter} & \texttt{10} \\ 
 &   &  \texttt{alpha} & \texttt{1e-06} \\
 &   &  \texttt{eta} & \texttt{0.1} \\ 
 &   &  \texttt{weight\_degree} & \texttt{2} \\
\cline{2-4}
 &  \texttt{rpart} &  \texttt{maxdepth} & \texttt{3} \\
\hline
\end{tabular}} \\
\subfloat[DR-learner MC]{
\begin{tabular}{l | l | l | l}
\hline
Method & Implementation & Hyperparameter & Value \\
\hline
Ridge & \texttt{mcboost}     & \texttt{max\_iter} & \texttt{5} \\ 
 &   &  \texttt{alpha}          & \texttt{1e-06} \\
 &   &  \texttt{eta}            & \texttt{0.1} \\ 
 &   &  \texttt{weight\_degree} & \texttt{2} \\
 \cline{2-4}
 &  \texttt{glmnet}             &  \texttt{alpha} & \texttt{0} \\
 &                              &  \texttt{s} & \texttt{1} \\
\hline
Tree & \texttt{mcboost}      & \texttt{max\_iter} & \texttt{5} \\
 &   &  \texttt{alpha}          & \texttt{1e-06} \\
 &   &  \texttt{eta}            & \texttt{0.1} \\ 
 &   &  \texttt{weight\_degree} & \texttt{2} \\
\cline{2-4}
 &  \texttt{rpart}              &  \texttt{maxdepth} & \texttt{3} \\
 \hline
\multicolumn{4}{l}{\footnotesize Note: \texttt{eta = 0.01} in extended set of covariates setting.}
\end{tabular}}
\label{tab:tuning2}
\end{table}

\begin{table}[!htbp]
\centering
\caption{{Hyperparameter settings of (baseline) CATE learners. Default settings are used for parameters not listed.}}
\begin{tabular}{l | l | l | l}
\hline
Method & Implementation & Hyperparameter & Value \\
\hline
CForest & \texttt{grf}     & \texttt{num.trees} & \texttt{2000} \\
 &                          &  \texttt{mtry}          & \texttt{sqrt(p)+20} \\
 &                          &  \texttt{sample.fraction}          & \texttt{0.5} \\
 &                          &  \texttt{honesty.fraction}            & \texttt{0.5} \\ 
 &                          &  \texttt{min.node.size}            & \texttt{5} \\ 
\hline
T-,DR-learner & \texttt{regression\_forest}      & \texttt{num.trees} & \texttt{2000} \\
 &                          &  \texttt{mtry}          & \texttt{sqrt(p)+20} \\
 &                          &  \texttt{sample.fraction}          & \texttt{0.5} \\
 &                          &  \texttt{honesty.fraction}            & \texttt{0.5} \\ 
 &                          &  \texttt{min.node.size}            & \texttt{5} \\ 
 \hline
S-learner & \texttt{ranger}      & \texttt{num.trees} & \texttt{500} \\
 &                  &  \texttt{mtry}          & \texttt{sqrt(p)} \\
 &                          &  \texttt{min.node.size}            & \texttt{5} \\ 
 &                          &  \texttt{replace} & \texttt{TRUE} \\
 &                          &  \texttt{sample.fraction} & \texttt{1} \\
 \hline
\end{tabular}
\label{tab:baseline2}
\end{table}

\paragraph{Results}

Figure \ref{fig:whi_bias}a (small set of covariates) and Figure \ref{fig:whi_bias}b (extended set) show the bias of the estimated ATE for each method by clinical trial training set size. As expected, learning in the clinical trial training data allows for unbiased estimation of the ATE as shown by the three CT-based methods in both settings. These estimates, however, come with high variability if the CT training data is small. Learning solely in the observational data incurs bias in ATE estimation, particularly in settings where the CATE learner only have access to a small set of covariates (Figure \ref{fig:whi_bias}a). In this case, post-processing with clinical trial data improves upon the initial T-learner. Given an extended set of covariates the bias of the observational data-based methods decreases and post-processing is less effective (Figure \ref{fig:whi_bias}b). 

We evaluate the MSE of the estimated CATE in Figure \ref{fig:whi_mse_small} (small set of covariates) and Figure \ref{fig:whi_mse_med} (extended set) by clinical trial training set size against the three approximations of the true CATE that are based on the clinical trial test data. The observational data-based methods generally outperform the CT-based CATE estimates, indicating that the small clinical trial training sets on their own are not sufficient for accurate CATE estimation (comparing Figure \ref{fig:whi_mse_small}a to \ref{fig:whi_mse_small}b). Post-processing the initial T-learner via multi-calibration boosting with clinical trial data allows to achieve the smallest MSE for most CT training set sizes and true CATE estimation techniques in the limited covariate setting (Figure \ref{fig:whi_mse_small}a). As the observational data-based CATE learner achieve low MSE with the extended set of covariates, post-processing shows no improvement in this case (Figure \ref{fig:whi_mse_med}a). 

\begin{figure}[h!]
\centering
\subfloat[Small set of covariates]{\includegraphics[scale=0.45]{./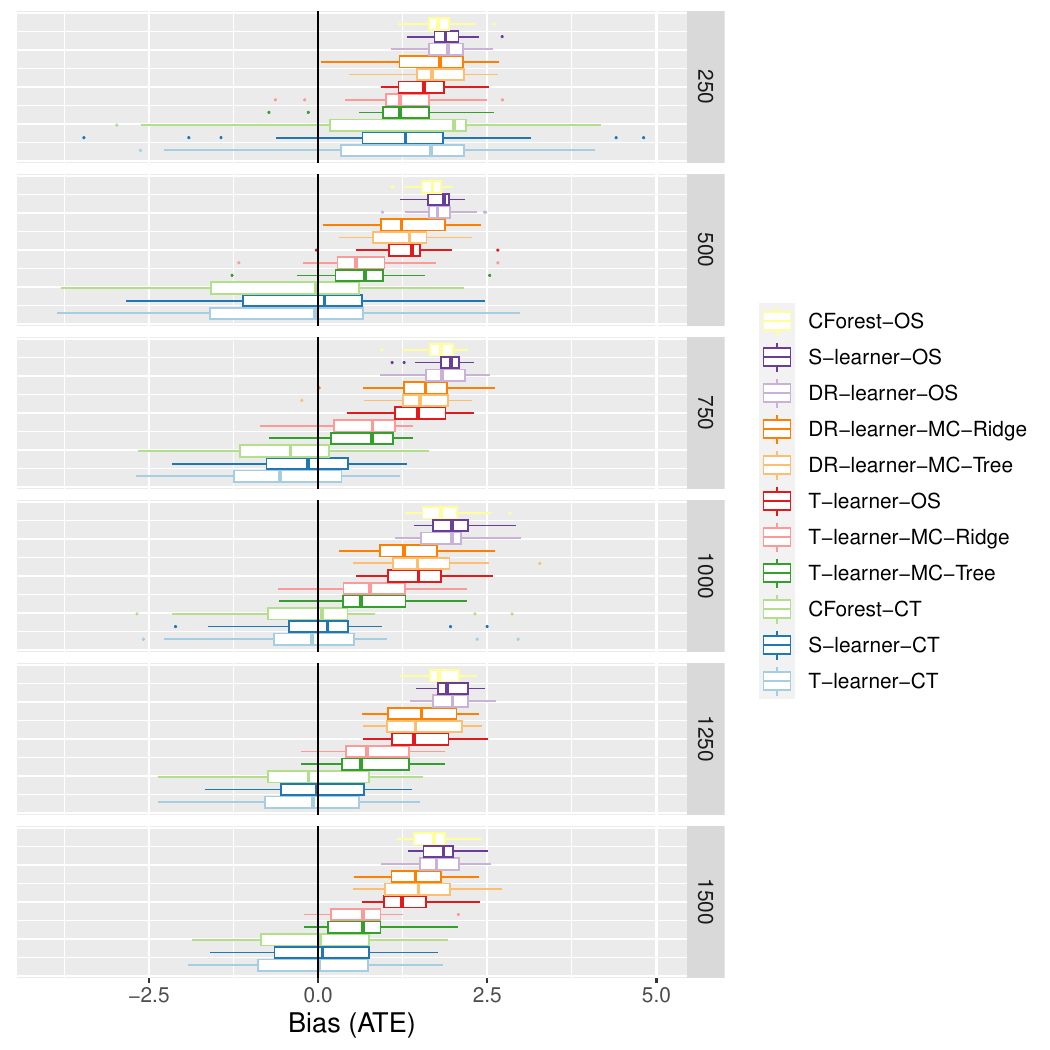}}
\subfloat[Extended set of covariates]{\includegraphics[scale=0.45]{./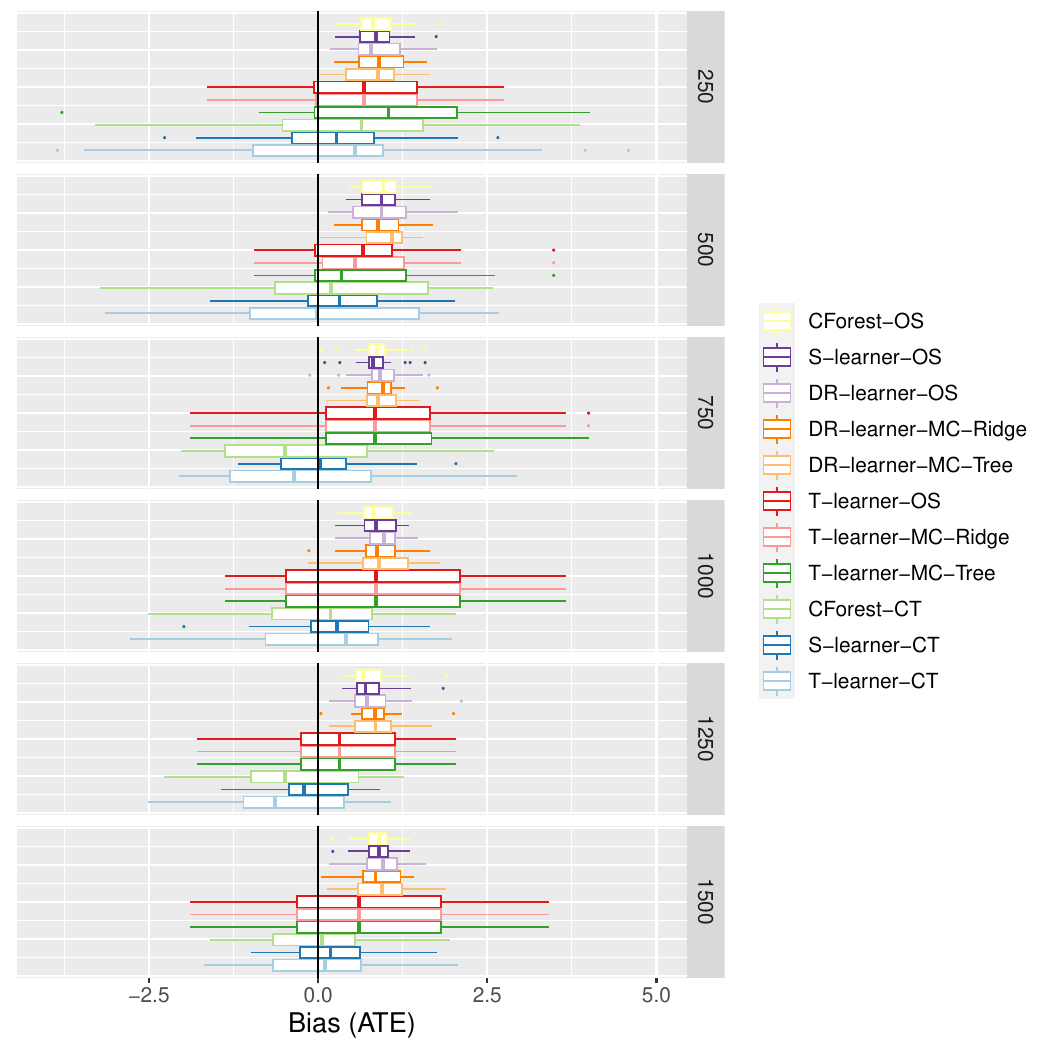}}
\caption{Bias by clinical trial training set size (\nameref{sec:whi}). The distribution of bias scores over sampling repetitions is plotted. Post-processing initial T-learner with clinical trial data improves over T-learner-OS in the limited covariate setting.}
\label{fig:whi_bias}
\end{figure}

\begin{figure}[h!]
\centering
\subfloat[Observational data-based and multi-accurate CATE estimation]{\includegraphics[scale=0.7]{./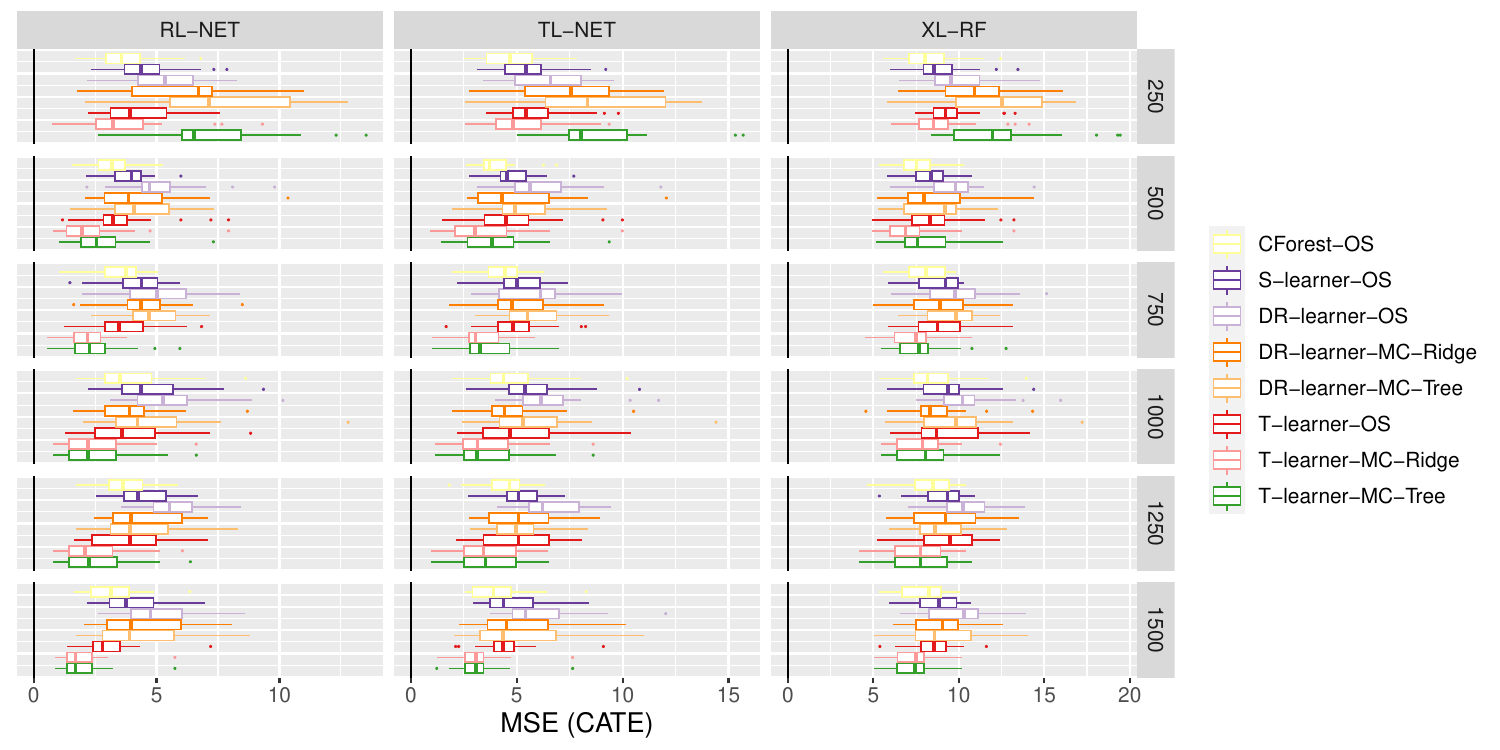}}\\
\subfloat[Clinical trial data-based CATE estimation]{\includegraphics[scale=0.665]{./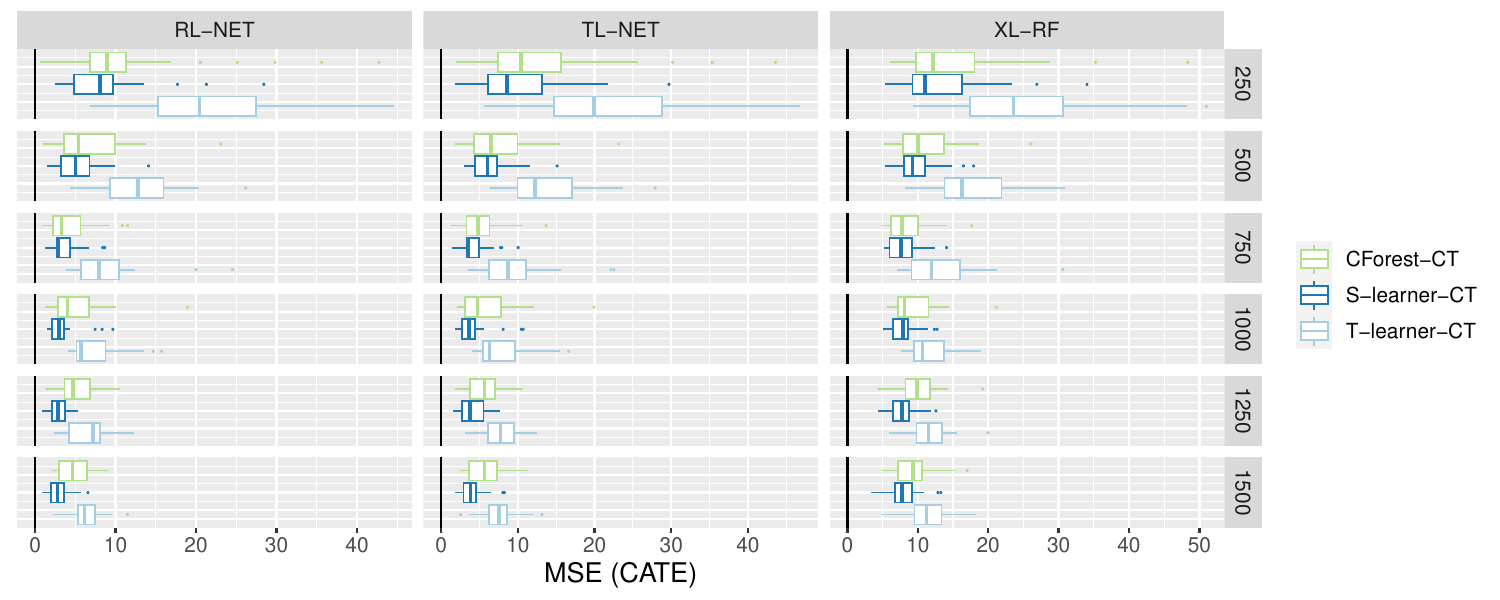}}
\caption{MSE by 'true' CATE estimation method and clinical trial training set size with small set of covariates (\nameref{sec:whi}). The distribution of MSE scores over sampling repetitions is plotted. T-learner-MC-Ridge outperforms other methods for most CT training set sizes and true CATE estimation techniques RL-NET and TL-NET.}
\label{fig:whi_mse_small}
\end{figure}

\begin{figure}[h!]
\centering
\subfloat[Observational data-based and multi-accurate CATE estimation]{\includegraphics[scale=0.7]{./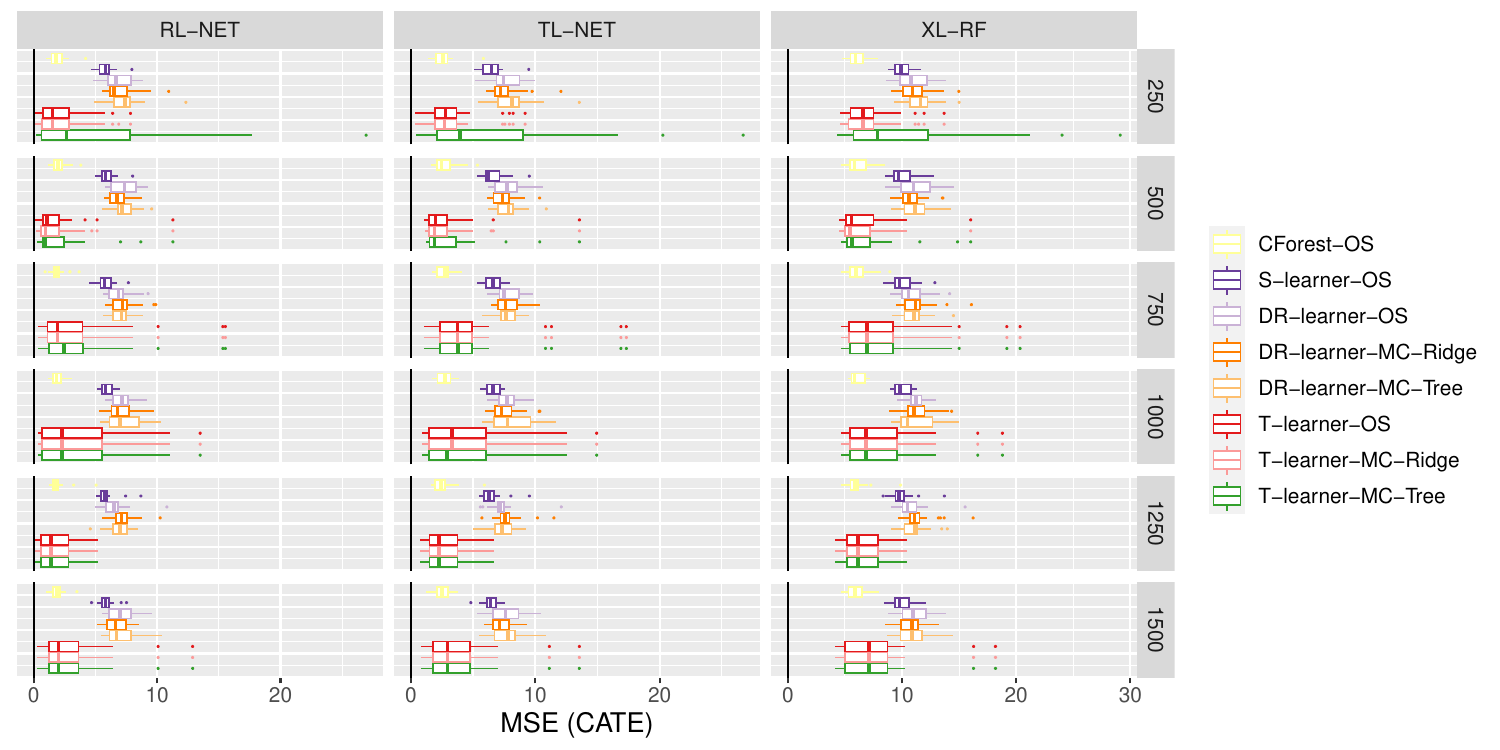}}\\
\subfloat[Clinical trial data-based CATE estimation]{\includegraphics[scale=0.665]{./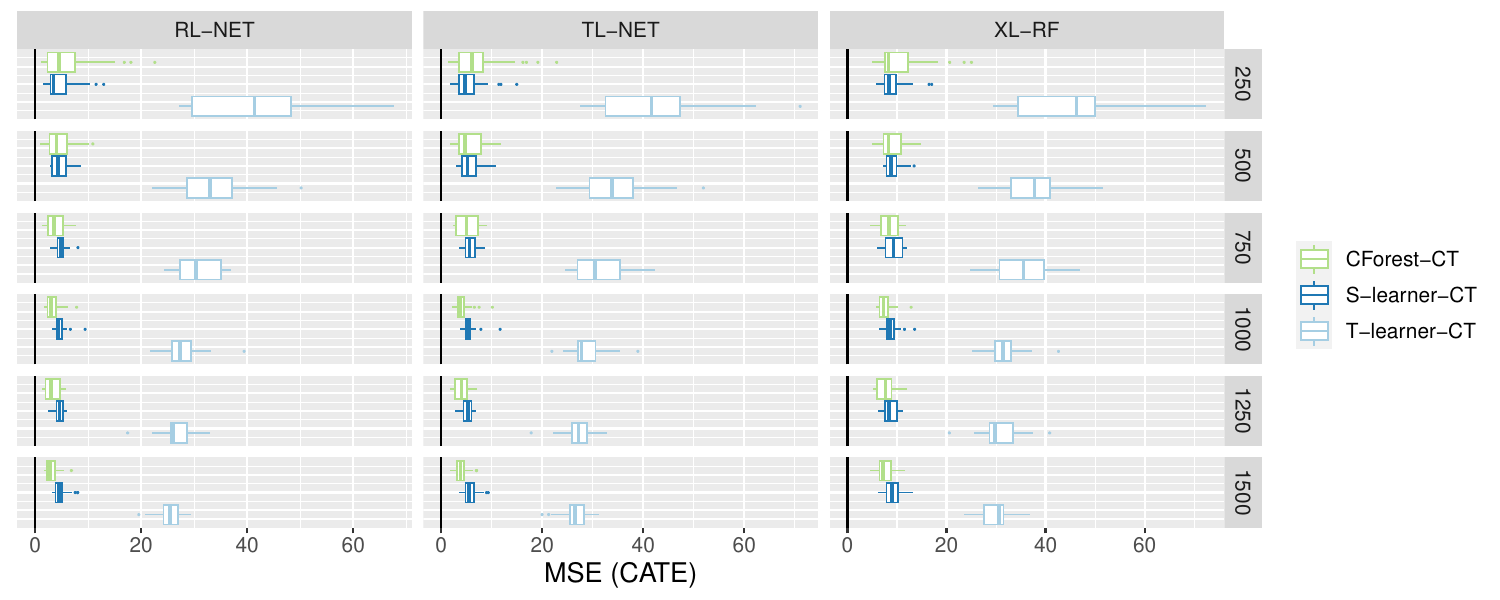}}
\caption{MSE by ``true'' CATE estimation method and clinical trial training set size with extended set of covariates (\nameref{sec:whi}). The distribution of MSE scores over sampling repetitions is plotted. Multicalibration boosting yields little improvement as CForest-OS and T-learner-OS already achieve low MSE.}
\label{fig:whi_mse_med}
\end{figure}


\end{document}